\documentclass[11pt]{article}

\usepackage[T1]{fontenc}
\usepackage[utf8]{inputenc}
\usepackage{amsmath}        
\usepackage{amsthm}         
\IfFileExists{newtxmath.sty}{
  \IfFileExists{newpxtext.sty}{\usepackage{newpxtext}}{}
  \usepackage[stix2,bigdelims,timesmathacc]{newtxmath}  
  \DeclareMathAlphabet{\mathcal}{OMS}{cmsy}{m}{n}       
}{
  \usepackage{lmodern}
  \usepackage{amssymb,amsfonts}
}
\usepackage{iftex}          
\ifPDFTeX                   
  \usepackage[activate={true,nocompatibility},final]{microtype}
\else                       
  \usepackage[protrusion=true,final]{microtype}
\fi

\usepackage[margin=1in]{geometry}

\usepackage{mathrsfs}       
\usepackage{mathtools}      
\usepackage{bm}             
\usepackage{nicefrac}
\allowdisplaybreaks         

\usepackage{thm-restate}    

\usepackage{graphicx}
\graphicspath{{figures/}{./}}                       
\usepackage{array}                      
\usepackage{booktabs}                               
\usepackage{multirow,makecell,array}
\usepackage[font=small,labelfont=bf]{caption}       
\usepackage{subcaption}                             
\usepackage{algorithm}
\usepackage{algpseudocode}                          

\usepackage{enumitem}
\setlist[itemize]{leftmargin=2.2em,itemsep=2pt,topsep=2pt}
\setlist[enumerate]{leftmargin=2.2em,itemsep=2pt,topsep=2pt}

\usepackage{xcolor}
\usepackage{colortbl}                               
\definecolor{LinkColor}{rgb}{0.10,0.40,0.75}        
\definecolor{CiteColor}{rgb}{0.70,0.25,0.20}        
\definecolor{UrlColor} {rgb}{0.20,0.50,0.50}        
\definecolor{TodoColor}{rgb}{0.80,0.30,0.10}        
\definecolor{TabHighlight}{rgb}{0.85,0.93,1.00}     

\usepackage{tikz}
\usetikzlibrary{positioning,calc,arrows.meta}

\usepackage{url}
\usepackage{hyperref}
\hypersetup{
  colorlinks=true,
  linkcolor=LinkColor,
  citecolor=CiteColor,
  urlcolor=UrlColor,
  breaklinks=true,
  bookmarksnumbered=true,
}
\usepackage{bookmark}                               
\usepackage[capitalise,nameinlink,noabbrev,sort&compress]{cleveref}  

\numberwithin{equation}{section}

\newif\ifdraft \draftfalse
\ifdraft
  \usepackage{lineno}\linenumbers
  \newcommand{\todo}[1]{\textcolor{TodoColor}{\textbf{[TODO:}~#1\textbf{]}}}
\else
  \newcommand{\todo}[1]{}                            
\fi

\newcommand{\N}{\mathbb{N}}
\newcommand{\Z}{\mathbb{Z}}

\newcommand{\E}{\mathbb{E}}                       
\renewcommand{\P}{\mathbb{P}}                     

\newcommand{\eps}{\varepsilon}

\newcommand{\defeq}{\coloneqq}                    

\DeclarePairedDelimiterX{\inner}[2]{\langle}{\rangle}{#1,#2}   

\newcommand{\cX}{\mathcal{X}}
\newcommand{\cH}{\mathcal{H}}
\newcommand{\cF}{\mathcal{F}}
\newcommand{\cZ}{\mathcal{Z}}
\newcommand{\cD}{\mathcal{D}}
\newcommand{\dN}{d_{\mathrm N}}
\newcommand{\dDS}{d_{\mathrm{DS}}}
\newcommand{\dNk}[1]{d_{\mathrm N}^{#1}}
\newcommand{\dDSk}[1]{d_{\mathrm{DS}}^{#1}}
\newcommand{\ListRisk}{L^{\mathrm{list}}}
\newcommand{\Lstarr}{\Lstar_{r}}
\newcommand{\dRE}{d_{\mathrm{RE}}}
\newcommand{\Risk}{L}
\newcommand{\Lstar}{L^\star}
\newcommand{\Rhat}{\widehat L}
\newcommand{\ind}{\mathbf{1}}
\newcommand{\MenuLoss}{L_{\mu}}
\newcommand{\RMC}{\mathsf{RMC}}
\newcommand{\MW}{\mathsf{MW}}
\newcommand{\LSCS}{\mathsf{LSCS}}
\newcommand{\CompCover}{\mathsf{Cover}}
\newcommand{\cW}{\mathcal{W}}
\newcommand{\cQ}{\mathcal{Q}}
\newcommand{\cR}{\mathcal{R}}
\newcommand{\cU}{\mathcal{U}}
\DeclareMathOperator{\Nat}{Nat}
\DeclareMathOperator{\DS}{DS}
\DeclareMathOperator{\TV}{TV}
\DeclareMathOperator{\kl}{kl}

\theoremstyle{plain}
\newtheorem{theorem}{Theorem}[section]

\newtheorem{lemma}[theorem]{Lemma}
\newtheorem{corollary}[theorem]{Corollary}

\theoremstyle{definition}
\newtheorem{definition}[theorem]{Definition}
\newtheorem{assumption}[theorem]{Assumption}
\newtheorem{example}[theorem]{Example}

\theoremstyle{remark}
\newtheorem{remark}[theorem]{Remark}

\crefname{theorem}{Theorem}{Theorems}          \Crefname{theorem}{Theorem}{Theorems}
\crefname{proposition}{Proposition}{Propositions}
\Crefname{proposition}{Proposition}{Propositions}
\crefname{lemma}{Lemma}{Lemmas}                \Crefname{lemma}{Lemma}{Lemmas}
\crefname{corollary}{Corollary}{Corollaries}   \Crefname{corollary}{Corollary}{Corollaries}
\crefname{conjecture}{Conjecture}{Conjectures} \Crefname{conjecture}{Conjecture}{Conjectures}
\crefname{fact}{Fact}{Facts}                   \Crefname{fact}{Fact}{Facts}
\crefname{definition}{Definition}{Definitions} \Crefname{definition}{Definition}{Definitions}
\crefname{assumption}{Assumption}{Assumptions} \Crefname{assumption}{Assumption}{Assumptions}
\crefname{example}{Example}{Examples}          \Crefname{example}{Example}{Examples}
\crefname{problem}{Problem}{Problems}          \Crefname{problem}{Problem}{Problems}
\crefname{remark}{Remark}{Remarks}             \Crefname{remark}{Remark}{Remarks}
\crefname{claim}{Claim}{Claims}                \Crefname{claim}{Claim}{Claims}
\crefname{algorithm}{Algorithm}{Algorithms}    \Crefname{algorithm}{Algorithm}{Algorithms}

\title{Optimistic Rates for Multiclass PAC Learning}
\author{%
  Xiaoyu Li\textsuperscript{1}\hspace{2.2em}
  Andi Han\textsuperscript{2}\hspace{2.2em}
  Jiaojiao Jiang\textsuperscript{1}\hspace{2.2em}
  Junbin Gao\textsuperscript{2}
  \\[0.9em]
  \small
  \begin{tabular}{@{}l@{\hspace{2.2em}}l@{}}
    \textsuperscript{1}University of New South Wales
      & \texttt{\{xiaoyu.li2,jiaojiao.jiang\}@unsw.edu.au}\\
    \textsuperscript{2}University of Sydney
      & \texttt{\{andi.han,junbin.gao\}@sydney.edu.au}
  \end{tabular}%
}
\date{}

\hypersetup{
  pdftitle={Optimistic Rates for Multiclass PAC Learning},
  pdfauthor={Xiaoyu Li, Andi Han, Jiaojiao Jiang, Junbin Gao}
}

\begin{document}
\maketitle

\begin{abstract}
Worst-case multiclass bounds do not become smaller when the best classifier
is already nearly correct: what is missing is an optimistic rate, a
guarantee whose fluctuation scales with the oracle risk itself.  For a
class of Natarajan dimension \(\dN\) and
Daniely--Shalev-Shwartz dimension \(\dDS\), the optimal excess risk is
known at the two endpoints (\(\dDS/n\) realizable,
\(\sqrt{\dN/n}+\dDS/n\) agnostic
\cite{hanneke2024improved,cohen2025natarajan,pabbaraju2026optimal}) and
open in between.  We close the gap: at every fixed oracle risk
\(\Lstar\), the optimal excess risk is
\[
 \widetilde\Theta\!\left(
 \sqrt{\frac{\Lstar\dN}{n}}+\frac{\dDS}{n}
 \right),
\]
uniformly in the alphabet size, attained by a learner that knows neither
\(\Lstar\) nor the confidence level.%
\renewcommand{\thefootnote}{\fnsymbol{footnote}}%
\footnote{The main theorems are
machine-checked in Lean~4; \cref{app:lean} records what is verified and
in which form, and the development is available at
\url{https://github.com/xiaoyulics/multiclass-pac-learning}.}%
\setcounter{footnote}{0}%
The upper bound composes the cover--menu--compression architecture of
\cite{cohen2025natarajan}, at the realizable rate of
\cite{pabbaraju2026optimal}, with a new comparator-facing relative
compression theorem: a size-\(k\) compression rule that empirically
dominates a comparator \(h\) has population risk at most
\(L(h)+O(\sqrt{L(h)\Gamma}+\Gamma)\) with
\(\Gamma=(k\log n+\log(1/\delta))/n\), without stability; this transfers
the comparison principle of the sharp binary theory
\cite{mathiasen2026optimal} while discarding its Boolean-cube geometry,
which does not lift to multiclass labels.  The
lower bound forces both terms using one class and one distribution at
every fixed \(\Lstar\), by a pair-Assouad scheme calibrated to \(\Lstar\)
and a fiber argument on the pseudo-cubes underlying the
Natarajan-versus-DS separation of \cite{brukhim2022characterization}.
Both theorems extend to list learning: against the best \(r\)-tuple of
hypotheses, the same architecture and the same two engines yield an
optimistic rate and a lower bound of the same shape, forcing the
fluctuation term that \cite{pabbaraju2026optimal} expected to be
necessary against list comparators, and removing the factor \(r\) from
the known realizable list lower bound.

\end{abstract}

\clearpage
{\hypersetup{linkcolor=black}\tableofcontents}
\clearpage

\section{Introduction}
\label{sec:intro}

The statistical difficulty of classification should depend on how much error
is left in the best available hypothesis.  When that hypothesis is nearly
correct, the learner is not facing the same problem as in the fully agnostic
regime: most observations carry no disagreement with the comparator, and the
fluctuation term ought to reflect this.  Guarantees with this
property---interpolating between the realizable fast rate and the agnostic
slow rate according to the level of realizability---are called
\emph{optimistic rates} \cite{panchenko2002some,srebro2010optimistic}.  The requirement is
easiest to see at the boundary.  A class with VC dimension \(d\) admits the
fast rate \(d/n\) when it is exactly realizable, and a class whose best
hypothesis errs on one observation in a trillion is statistically
indistinguishable from a realizable one at any practical sample size; yet
the worst-case agnostic bound charges the same slow rate \(\sqrt{d/n}\)
for both \(\Lstar=10^{-12}\) and \(\Lstar=\tfrac13\).  An optimistic
rate must instead degrade continuously: at \(\Lstar=10^{-12}\), the
localized fluctuation \(\sqrt{\Lstar d/n}\) remains within a factor of
two of the realizable rate for every \(n\le10^{12}d\).  In binary classification the desired
statement is now precise.  If the class has VC dimension \(d\), the
distribution-free fixed-risk benchmark is
\[
 \Risk(\widehat h)-\Lstar
 \asymp
 \sqrt{\frac{\Lstar(d+\log(1/\delta))}{n}}
 +\frac{d+\log(1/\delta)}{n}.
\tag{1.1}\label{eq:binary-target}
\]
The first term is the agnostic fluctuation at the actual oracle risk; the
second is the realizable remainder; lower bounds of this localized form are
classical \cite{devroye1995lower}.  \cite{hanneke2024revisiting} showed
that empirical risk minimization, and any other proper learner, misses the
benchmark by a factor that grows as \(\Lstar\) decreases, and gave a
learner attaining it over most of the range of \(\Lstar\).
\cite{asilis2025smallerror} determined the fine structure of the remaining
small-error regime, including lower bounds that exhibit the localized term
as unavoidable.  \cite{mathiasen2026optimal} then constructed a
parameter-free learner attaining \eqref{eq:binary-target} up to universal
constants.  Their
proof uses a localized orientation of the Boolean cube and, crucially, a
high-probability comparison that preserves coefficient one on the oracle
risk.  All of this line is binary; it is the multiclass version of the
question that concerns us here.

The multiclass problem is not obtained by changing the alphabet in this
argument.  It has two capacity parameters with different jobs.  The
Daniely--Shalev-Shwartz dimension \(\dDS\) \cite{daniely2014optimal}
characterizes realizable learnability \cite{brukhim2022characterization},
whereas the Natarajan dimension \(\dN\) \cite{natarajan1989learning}
controls the leading agnostic term.  The cover--menu--compression reduction of
\cite{cohen2025natarajan} established the two-parameter structure with a
\(\dDS^{3/2}\) coverage term; \cite{pabbaraju2026optimal} then resolved the
realizable sample complexity at \(\dDS\), and Corollary~1.2 of that paper
records the resulting worst-case rate
\[
 \widetilde O\!\left(
 \frac{\dDS}{\eps}+
 \frac{\dN+\log(1/\delta)}{\eps^2}
 \right).
\tag{1.2}\label{eq:prior-sample}
\]
This rate has the correct two-dimensional structure, but it treats an oracle
of risk \(10^{-3}\) in the same way as an oracle of constant risk.  The
question of this paper is whether the two structures can be kept at once:
can the Natarajan term localize at \(\Lstar\), while the DS term remains a
realizable-order remainder?

\paragraph{Why the literal transfer fails.}
The first attempt is to replace the Boolean cube by a \(K\)-ary Hamming graph
and repeat the binary orientation argument.  This construction records
binary error patterns relative to a reference labeling, so its natural
capacity is Graph dimension rather than DS or Natarajan dimension
\cite{natarajan1989learning,daniely2015multiclass}.  For a
finite alphabet this route introduces an unwanted alphabet dependence; for a
general label space Graph dimension may even be infinite when DS dimension
is finite \cite{daniely2015multiclass,brukhim2022characterization}.  This
is the obstruction behind the failure of direct ERM-style
multiclass arguments \cite{cohen2025natarajan}.  Multiclass one-inclusion
orientations remain fundamental---see, for example,
\cite{asilis2024regularization}---but the Boolean-cube geometry is not the
part of the binary proof that can be carried over unchanged.

\paragraph{The result.}
We instead obtain, for finite label sets, a learner \(\RMC\) achieving the
two-dimensional optimistic rate
\[
 \Risk(\RMC(S))-\Lstar
 =
 \widetilde O\!\left(
 \sqrt{\frac{\Lstar\dN}{n}}+\frac{\dDS}{n}
 \right).
\tag{1.3}\label{eq:intro-rate}
\]
The learner uses neither \(\Lstar\) nor \(\delta\).  The fully quantified
statement in \cref{thm:main} contains a \(\dN\log^3(en)\) localized term and
a \(\dDS\log^2(en)\) coverage term.  The rate is optimal:
\cref{thm:lower} constructs, for every pair \((\dN,\dDS)\), a class on
which every learner, at every fixed \(\Lstar\), incurs excess risk
\(\Omega(\sqrt{\Lstar\dN/n}+\dDS/n)\), with both terms forced by one
distribution.  Upper and lower bounds together give the multiclass
analogue of \eqref{eq:binary-target}, up to logarithmic factors.  The
comparison is summarized in \cref{tab:rates}; confidence terms and
polylogarithms are suppressed there.

\begin{table}[t]
  \centering
  \caption{Risk bounds at a glance, with upper- and lower-bound sources
  separated.  The binary row has \(d=\mathrm{VC}(\cH)\); the multiclass
  rows use \(\dN\) and \(\dDS\).  In the highlighted row both directions
  are proved in this paper, and they match up to logarithmic factors.}
  \label{tab:rates}
  \begin{tabular}{@{}llll@{}}
    \toprule
    Setting & Excess-risk scale & Upper bound & Lower bound \\
    \midrule
    Binary
      & \(\sqrt{\Lstar d/n}+d/n\)
      & \cite{mathiasen2026optimal}
      & \cite{devroye1995lower,asilis2025smallerror} \\
    Multiclass
      & \(\sqrt{\dN/n}+\dDS/n\)
      & \cite{cohen2025natarajan,pabbaraju2026optimal}
      & \cite{cohen2025natarajan,hanneke2024improved} \\
    \midrule
    \rowcolor{TabHighlight}
    Multiclass
      & \(\sqrt{\Lstar\dN/n}+\dDS/n\)
      & \textbf{this work} (\cref{thm:main})
      & \textbf{this work} (\cref{thm:lower}) \\
    \bottomrule
  \end{tabular}
\end{table}

\paragraph{What is transferred.}
The coefficient-one comparison principle is inspired by
\cite{mathiasen2026optimal}; the learned-menu device and correct-label
outside-menu decomposition are inherited from
\cite[Section~1.1 and Theorem~3.5]{cohen2025natarajan}.
The useful invariant from the binary proof is not its cube but its
comparison principle: keep coefficient one on the comparator risk, and make
the fluctuation scale with that same risk.  To make this principle compatible
with multiclass capacity, we first reduce the label geometry to a short
learned menu.  There are then two distinct ways to be wrong.  A correct
comparator label may be missing from the menu; this is a coverage failure and
is paid for by \(\dDS\).  Conditional on a fixed menu, the learner may choose
the wrong label inside it; this is an ordinary binary loss whose description
length is controlled by \(\dN\).  The proof is designed so that these two
charges never merge.

This separation explains both the three-way sample split and the form of the
bound.  The three-stage skeleton is adapted from
\cite[Algorithm~4]{cohen2025natarajan}.  The first block compresses the correct region of any comparator into
a finite family.  The second block learns a menu that covers the labels of a
canonical member of that family.  Only after the menu is frozen does the
third block fit an empirical-risk-dominating menu-loss compressor.  At that
point the remaining question is purely statistical: can a compression rule
that beats a comparator empirically be compared with it in population risk
without paying an absolute \(\sqrt{k/n}\) term?

The relative compression theorem in \cref{thm:relative-compression} answers
this question.  If a deterministic compression rule of size \(k\) has no
larger empirical loss than a fixed comparator \(h\), then
\[
 \Risk(A(S))
 \le \Risk(h)
 +O\!\left(
 \sqrt{\frac{\Risk(h)(k\log n+\log(1/\delta))}{n}}
 +\frac{k\log n+\log(1/\delta)}{n}
 \right).
\tag{1.4}\label{eq:intro-compression}
\]
No stability is required.  The mechanism is the classical one for
compression bounds: ordered descriptions are counted, and after their
selected coordinates are removed, one-sided Bernstein inequalities control
the untouched observations.  Bernstein refinements of exactly this type
already localize lossy compression at the reconstruction's empirical error
\cite[Theorem~8]{gottlieb2017semimetrics}.  What the composition consumes
is a different interface: comparator-facing rather than
reconstruction-facing, and order-dependent with repeated indices.  The
concentration step itself is elementary; the contribution is to formulate it
in the form that can be composed with the random menu while preserving
coefficient one.

\paragraph{What the paper adds.}
There are three mathematical pieces and one structural conclusion.  First, we
prove the comparator-facing relative compression theorem with explicit
constants, ordered messages, repeated indices, and no stability assumption;
the proof route is the counting-plus-Bernstein one of the compression
literature, and the claim of novelty is confined to the interface, not the
mechanism.  Second, we place the theorem at the final stage of the
source-checked cover--menu--compression reduction of
\cite{cohen2025natarajan}, taking care that the menu is independent
of the final sample and that empty and singleton menus introduce neither a
singular \(\log p\) nor hidden side information.  Third, we prove a
matching fixed-\(\Lstar\) lower bound (\cref{thm:lower}): a single class
and a single distribution activate both dimensions at every fixed
\(\Lstar\), so the rate of \eqref{eq:intro-rate} is optimal up to
logarithmic factors.  The construction pairs a noisy Natarajan cube with a
Natarajan-free pseudo-cube obtained from the separation of
\cite{brukhim2022characterization}.  The resulting risk
decomposition identifies the roles of the dimensions: \(\dDS\) controls
correct-label coverage, while \(\dN\) controls localized fluctuations inside
the learned menu, and the lower bound certifies that this division of
labor is intrinsic rather than an artifact of the analysis.

\paragraph{Extension to list learning.}
The architecture never uses the fact that the final block outputs a
single label, and \cref{sec:list} exploits this: for learners that
output \(r\) labels per point, the same three blocks yield an
optimistic rate against the best \(r\)-tuple of hypotheses
(\cref{thm:list-upper}), and the two lower-bound engines upgrade to a
bound of the same two-term shape with exact calibration
(\cref{thm:list-lower}), whose fluctuation term is of the kind
\cite{pabbaraju2026optimal} expected to be necessary against list
comparators.  A byproduct of the second engine removes the factor
\(r\) from the realizable list lower bound of
\cite{hanneke2024improved} (\cref{cor:list-realizable}).  The
multiclass theorems remain the paper's center; the list section is a
short demonstration that the comparison principle composes beyond
single-valued prediction.

\paragraph{Further related work.}
The Natarajan dimension is the classical multiclass analogue of the VC
dimension \cite{natarajan1989learning}; the Graph dimension governs
ERM-style uniform-convergence arguments but not learnability at large
alphabets \cite{daniely2015multiclass}.  The DS dimension entered through
the multiclass one-inclusion framework of \cite{daniely2014optimal}, and
its finiteness characterizes multiclass PAC learnability
\cite{brukhim2022characterization}.  One-inclusion prediction originates
with \cite{haussler1994predicting} and was brought to multiclass loss by
\cite{rubinstein2006shifting}; optimal realizable bounds for binary VC
classes were later obtained from it without uniform convergence
\cite{adenali2023optimal}.  In the multiclass realizable problem, the rate
was improved by \cite{hanneke2024improved} and settled by
\cite{pabbaraju2026optimal}, who bounded the one-inclusion density through
the DS dimension; a sharp \(k\)-ary Sauer--Shelah--Perles inequality in the
same currency appears in \cite{hanneke2026sauer}.

The learned menu is a list-valued object.  List prediction entered the
multiclass story as a tool in the characterization of
\cite{brukhim2022characterization}; list learnability in its own right
was characterized by \cite{charikar2023list}, whose rates were
successively sharpened through multiclass boosting
\cite{brukhim2023boosting} and the optimal \(k\)-ary Sauer lemma of
\cite{hanneke2026sauer}, and brought to near-optimal form by
\cite{pabbaraju2026optimal}, against the realizable lower bound of
\cite{hanneke2024improved}.  Prediction that is graded on only part of
the observation space is the subject of partial concept classes
\cite{alon2022partial}.  \Cref{sec:list} engages the list thread
directly: it upgrades both main theorems to list learners, with the
comparator strengthened to the best tuple of hypotheses and the
fluctuation localized at the oracle list risk, refining the worst-case
agnostic list rates of \cite{charikar2023list,pabbaraju2026optimal} in
the same sense in which \eqref{eq:main-bound} refines the worst-case
multiclass rate.

Sample compression enters PAC learning with the unpublished 1986 manuscript
of Littlestone and Warmuth; \cite{floyd1995sample} tied compression size
to the VC dimension, and \cite{david2016supervised} brought
boosting-to-compression arguments to the multiclass setting, the tradition
in which the cover stage sits.  Compression bounds at nonzero empirical
risk begin with \cite{graepel2005pac}, which also isolated the role of
message order and repetition conventions.  Two facts calibrate what
compression alone can deliver: generic agnostic schemes of size \(k\)
cannot beat the worst-case rate \(\sqrt{k\log(n/k)/n}\)
\cite{hanneke2019sharp}, and stability removes logarithmic factors where
it is available \cite{hanneke2021stable}.
\Cref{thm:relative-compression} trades neither: it keeps the logarithm and
the stability-free interface, and instead moves the anchor to a fixed
comparator's population risk with coefficient one.  The localization
obtained this way is distribution-free, entering only through \(\Lstar\)
with no Massart- or Tsybakov-type condition; it is therefore not a
fast-rate statement in the local-complexity sense of
\cite{bartlett2005local}.  It belongs instead to the optimistic-rate
tradition: relative-deviation inequalities whose fluctuation scales with
the risk itself go back to Vapnik and Chervonenkis and were extended by
\cite{panchenko2002some}; the name is from \cite{srebro2010optimistic},
which proved such rates for smooth losses; and
\cite{zhivotovskiy2018localization} developed localization for VC
classification beyond local Rademacher complexities, a precursor of the
binary fixed-risk line above.

\paragraph{Scope.}
The result is information-theoretic: the compression and one-inclusion
routines need not be computationally efficient for an abstract class.  The
label alphabet is finite, although the bound is uniform in its cardinality.
Logarithmic factors remain in the upper bound; the lower bound of
\cref{thm:lower} matches it up to those factors and does not address the
sharp \(\log(1/\delta)\) dependence.  The lower-bound family uses label
alphabets growing with \(\dDS/\dN\), which \cref{rem:alphabet} shows is
necessary, and which the uniformity in \(K\) of \cref{thm:main}
anticipates.

\paragraph{Organization.}
\Cref{sec:architecture} develops the upper-bound proof as a sequence of
obstacles and design choices, ending with the error identity that separates the two
dimensions.  \Cref{sec:results} states the relative compression theorem, the
learner, and the main risk bound.  \Cref{sec:lower} proves the matching
lower bound.  \Cref{sec:list} extends both bounds to list learners.
\Cref{sec:discussion} interprets the results.  The appendix
places each source-convention check immediately after its restated external
lemma, gives the complete concentration argument and full probability
bookkeeping, and closes with the proofs for the lower bound and for the
list extension.

\section{Setting and proof architecture}
\label{sec:architecture}

The direct multiclass lift fails because it asks one combinatorial object to
do two jobs: represent the label geometry and control the stochastic
fluctuation.  The proof below separates these jobs.  A learned menu first
turns multiclass prediction into a selective binary loss; an exact algebraic
identity then shows that only correct comparator labels missing from that
menu must be restored.  This section develops that identity before any of the
compression machinery, and then explains why each of the three sample blocks
is needed.

The cover--menu--compression backbone is inherited from the three-phase
learner of \cite[Sections~3.1--3.4]{cohen2025natarajan}.  Our modification
is at its final statistical interface: we replace the absolute compression
comparison by a comparator-facing relative one and then track how that change
propagates through the composition.

\paragraph{Notation.}
For \(m\in\N\) we write \([m]=\{1,\ldots,m\}\).  Logarithms are natural,
and \(\ind\{\cdot\}\) denotes the indicator of an event.  A \emph{sample}
is a finite iid sequence of labeled \emph{observations}; for a
distribution \(\cD\), the law of a sample of \(n\) observations is
\(\cD^{n}\).  Expectations over auxiliary finite indices are written with
subscripts, as in \(\E_\sigma\).  Universal constants \(C,c>0\) may
change value between occurrences, and \(\widetilde O(\cdot)\) suppresses
factors polylogarithmic in \(n\), \(\dN\), and \(\dDS\).

\subsection{Setting and the two capacity parameters}

We use the finite-alphabet dimension and learning conventions of
\cite[Section~2]{cohen2025natarajan}.

Let \((\cX,\mathscr X)\) be a measurable space, let
\([K]=\{1,\ldots,K\}\), and let \(\cH\subseteq[K]^\cX\).  For a distribution
\(\cD\) on \(\cZ=\cX\times[K]\), write
\[
 \Risk_{\cD}(f)=\P_{(X,Y)\sim\cD}(f(X)\ne Y),
 \qquad
 \Lstar=\inf_{h\in\cH}\Risk_{\cD}(h).
\]
Unless another distribution is named, \(\P\) and \(\E\) are taken under the
ambient \(\cD\) and their subscript is omitted.
For a finite sequence \(s=(z_1,\ldots,z_N)\), \(\Rhat_s(f)\) denotes its
average zero--one loss.  Samples are iid unless stated otherwise.

\begin{assumption}[Measurability and tie-breaking]\label{asm:measurable}
The label alphabet is finite.  Every hypothesis evaluation, learning map,
selection map, reconstruction map, and finite-class sampling rule used below
is jointly measurable after deterministic tie-breaking, or the class is
restricted by the usual countability/separability convention that ensures
this property.
\end{assumption}

This convention keeps all probability statements literal rather than phrased
in outer probability.  It is the only regularity condition beyond iid
sampling.

\begin{definition}[Natarajan dimension \cite{natarajan1989learning}]
\label{def:natarajan}
A sequence \(x_1,\ldots,x_d\) is Natarajan-shattered by \(\cH\) if there are
label pairs \(y_i\ne y_i'\) such that every one of the \(2^d\)
coordinatewise choices is realized by the trace of \(\cH\).  The largest
such \(d\) is the Natarajan dimension \(\dN\).
\end{definition}

\begin{definition}[DS dimension \cite{daniely2014optimal}]
\label{def:ds}
A nonempty finite set \(V\subseteq[K]^d\) is a \(d\)-dimensional pseudo-cube
if, for each \(v\in V\) and coordinate \(i\), there is \(v'\in V\) that
differs from \(v\) only at coordinate \(i\).  A sequence is DS-shattered if
the trace of \(\cH\) contains such a pseudo-cube; the largest length is the
Daniely--Shalev-Shwartz dimension \(\dDS\).
\end{definition}

Finiteness of \(\dDS\) characterizes multiclass PAC learnability
\cite{brukhim2022characterization}.  Every Natarajan cube is a
pseudo-cube, and therefore
\begin{equation}
 \dN\le \dDS.
\label{eq:dimensions-order}
\end{equation}
The inequality does not make the two quantities interchangeable.  In the
argument below, \(\dDS\) controls whether a correct label is represented at
all, while \(\dN\) controls the complexity of choosing among labels that have
already been represented.

Both dimensions also vanish together: a single point at which two
hypotheses disagree is already Natarajan- and DS-shattered, so \(\dDS=0\)
forces all hypotheses of \(\cH\) to coincide pointwise, in which case any
fixed member attains \(\Lstar\) and there is nothing to learn.  The
standing assumption \(1\le\dN\le\dDS<\infty\) in \cref{thm:main} therefore
excludes only this degenerate case; in particular it forces
\(\cH\ne\emptyset\).

\subsection{The menu as a selective binaryization}

Following \cite[Section~3.3]{cohen2025natarajan}, a \emph{\(p\)-menu} is a
measurable map
\(\mu:\cX\to2^{[K]}\) with \(\sup_x|\mu(x)|\le p\).  Once a menu is fixed,
define the inside-menu loss
\[
 \ell_\mu(f,(x,y))
 =\ind\{f(x)\ne y,\ y\in\mu(x)\},
 \qquad
 \MenuLoss(f)=\E\,\ell_\mu(f,(X,Y)).
\tag{2.2}\label{eq:menu-loss}
\]
This is a binary-valued selective loss: the predictor remains multiclass,
but the loss deliberately ignores observations whose label lies outside the
menu.  The apparent danger
is that ignored observations might hide arbitrary error.  The following
elementary identity shows exactly what has to be put back; it isolates the
pointwise algebra behind \cite[Theorem~3.5]{cohen2025natarajan}, the hinge
on which the localized refinement turns.

\begin{lemma}[Selective-menu decomposition]
\label{lem:menu-calculus}
Let \(\mu\) be a menu.  For all classifiers \(f\) and \(h\),
\[
 \P(h(X)=Y,\ Y\notin\mu(X))
 \le
 \P(h(X)=Y\ne f(X))
 +\P(f(X)=Y,\ Y\notin\mu(X));
\tag{2.3}\label{eq:menu-miss-transfer}
\]
for all classifiers \(g\) and \(h\),
\[
 \Risk_{\cD}(g)-\Risk_{\cD}(h)
 \le
 \MenuLoss(g)-\MenuLoss(h)
 +\P(h(X)=Y,\ Y\notin\mu(X));
\tag{2.4}\label{eq:menu-risk-algebra}
\]
and, for every classifier \(h\),
\[
 \MenuLoss(h)\le\Risk_{\cD}(h).
\tag{2.5}\label{eq:menu-loss-below-risk}
\]
\end{lemma}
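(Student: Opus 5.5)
The plan is to prove all three parts pointwise: for each fixed observation \((x,y)\) we establish the corresponding inequality between indicator functions, and then take expectations under \(\cD\). No probabilistic structure is used beyond linearity and monotonicity of expectation. In particular, the menu \(\mu\) is fixed and measurable, so every event below is measurable by \cref{asm:measurable}.

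For \eqref{eq:menu-miss-transfer}, fix \((x,y)\) with \(h(x)=y\) and \(y\notin\mu(x)\), and split on whether \(f(x)=y\). If \(f(x)\ne y\), the observation lies in the event \(\{h(X)=Y\ne f(X)\}\). If \(f(x)=y\), it lies in \(\{f(X)=Y,\ Y\notin\mu(X)\}\). Hence the first event is contained in the union of the other two, and a union bound gives the inequality.

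For \eqref{eq:menu-loss-below-risk}, note that \(\ell_\mu(h,(x,y))\le\ind\{h(x)\ne y\}\) pointwise, since it is the same indicator further restricted to \(y\in\mu(x)\). Integrating gives \(\MenuLoss(h)\le\Risk_\cD(h)\).

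For \eqref{eq:menu-risk-algebra}, decompose each zero--one loss by whether the label lies in the menu:
\[
\ind\{f(x)\ne y\}=\ell_\mu(f,(x,y))+\ind\{f(x)\ne y,\ y\notin\mu(x)\}.
\]
Apply this to \(g\) and \(h\) and subtract. The claim then reduces to the pointwise inequality
\[
\ind\{g(x)\ne y,\ y\notin\mu(x)\}-\ind\{h(x)\ne y,\ y\notin\mu(x)\}\le\ind\{h(x)=y,\ y\notin\mu(x)\}.
\]
If \(y\in\mu(x)\), both sides vanish. If \(y\notin\mu(x)\), the left side is \(\ind\{g(x)\ne y\}-\ind\{h(x)\ne y\}\). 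This is at most \(1-\ind\{h(x)\ne y\}=\ind\{h(x)=y\}\), because \(\ind\{g(x)\ne y\}\le1\). Taking expectations yields \eqref{eq:menu-risk-algebra}.

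There is no real obstacle here. The only step needing care is the case analysis in the third part: we bound the outside-menu error of \(g\) crudely by \(1\), which is exactly why the correction term is the correct-label miss probability of \(h\) rather than something depending on \(g\).
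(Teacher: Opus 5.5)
Your proposal is correct and follows essentially the same route as the paper: a pointwise case analysis on \(y\in\mu(x)\) versus \(y\notin\mu(x)\) for \eqref{eq:menu-risk-algebra}, the split on whether \(f(x)=y\) for \eqref{eq:menu-miss-transfer}, and the pointwise bound \(\ell_\mu(h,(x,y))\le\ind\{h(x)\ne y\}\) for \eqref{eq:menu-loss-below-risk}, each followed by taking expectations. Your explicit decomposition of the zero--one loss into inside- and outside-menu parts is just a slightly more algebraic presentation of the paper's case analysis.
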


\begin{proof}
For \eqref{eq:menu-miss-transfer}, consider a point on which \(h\) is correct
but its label is absent from the menu.  Either \(f\) disagrees with this
correct label, which is the first event on the right, or \(f\) is also
correct, which is the second event.

For \eqref{eq:menu-risk-algebra}, fix \((x,y)\).  If
\(y\in\mu(x)\), the difference between the zero--one losses of \(g\) and
\(h\) is exactly their inside-menu loss difference.  If
\(y\notin\mu(x)\), both inside-menu losses vanish.  The ordinary loss
difference can then be positive only when \(h(x)=y\), and in that case it is
at most one.  This is precisely the last indicator in
\eqref{eq:menu-risk-algebra}.  Taking expectations proves the inequality.
Finally, \(\ell_\mu(h,(x,y))\le\ind\{h(x)\ne y\}\) pointwise, which gives
\eqref{eq:menu-loss-below-risk}.
\end{proof}

The decomposition is the conceptual center of the proof.  Put
\[
 \alpha_\mu(h)
 \defeq \P(h(X)=Y,\ Y\notin\mu(X)).
\]
Then the desired comparison has the form
\begin{equation}
 \Risk_{\cD}(g)-\Risk_{\cD}(h)
 \le
 \bigl(\MenuLoss(g)-\MenuLoss(h)\bigr)+\alpha_\mu(h).
\tag{2.6}\label{eq:proof-spine}
\end{equation}
The first term asks how well the learner predicts inside a fixed menu.  The
second asks only whether the menu contains labels on which the comparator is
\emph{correct}.  In particular, oracle mistakes outside the menu are not
charged a second time.  This is where coefficient one on \(\Risk_{\cD}(h)\) is
won or lost.

An idealized case makes the point transparent.  Suppose a menu is fixed in
advance and contains every correct label of \(h\).  Then
\(\alpha_\mu(h)=0\), so it is enough to compare \(g\) with \(h\) under the
binary loss \(\ell_\mu\).  If this comparison localizes at
\(\MenuLoss(h)\), then \eqref{eq:menu-loss-below-risk} automatically
localizes it at \(\Risk_{\cD}(h)\).  The full learner has only one additional
task: construct such a menu approximately, with a correct-label miss
probability of realizable order.

\subsection{Three blocks for three logically different tasks}

The three-block construction is shown in \cref{fig:architecture}.  The split
is not merely a convenience for applying concentration inequalities.  Each
fresh block freezes an object that becomes the statistical environment for
the next block.

\begin{figure}[t]
  \centering
  \begin{tikzpicture}[
      box/.style={draw=LinkColor, rounded corners=2pt, fill=LinkColor!5,
        align=center, inner sep=7pt, text width=.225\textwidth,
        minimum height=2.3cm},
      arr/.style={-{Stealth[length=2.2mm]}, thick, draw=LinkColor},
      edge label/.style={font=\scriptsize, fill=white,
        inner xsep=1pt, inner ysep=.5pt},
      node distance=18mm]
    \node[box] (cover) {
      \textbf{Block \(S_1\): cover}\\[2pt]
      replace \(\cH\) by a finite family\\[3pt]
      \(\log|\widehat\cF|\lesssim\dDS\log^2 n\)};
    \node[box, right=of cover] (menu) {
      \textbf{Block \(S_2\): menu}\\[2pt]
      cover correct labels of a canonical witness\\[3pt]
      \(\alpha_\mu(h)\lesssim\dDS\log^2 n/n\)};
    \node[box, right=of menu] (compress) {
      \textbf{Block \(S_3\): compress}\\[2pt]
      learn under the frozen binary loss \(\ell_\mu\)\\[3pt]
      \(k\lesssim\dN\log^2 n\)};
    \draw[arr] (cover) -- node[edge label, midway, above=2pt] {finite family}
      (menu);
    \draw[arr] (menu) -- node[edge label, midway, above=2pt] {fixed menu}
      (compress);
  \end{tikzpicture}
  \caption{The proof does not apply one complexity measure to the original
  multiclass loss.  It first pays \(\dDS\) to make correct labels visible,
  then pays \(\dN\) to learn within the resulting menu.}
  \label{fig:architecture}
\end{figure}

\paragraph{Block \(S_1\): make the comparator finite without learning it.}
The menu learner used in the second block operates on a finite family, while
the original class \(\cH\) may be infinite.  The one-inclusion,
boosting-to-compression, and cover steps are due to
\cite[Proposition~3.3 and Theorem~3.2]{cohen2025natarajan}; the sharp
conversion of their realizable parameter to \(\dDS\) uses
\cite[Corollary~1.1]{pabbaraju2026optimal}.  Together they produce a
deterministic ordinary compressor of size \(O(\dDS\log n)\).  Enumerating all
of its messages on \(S_1\) gives a finite family \(\widehat\cF\).  For a labeled sequence
\(s\) and a classifier \(h\), write \(s[h]\) for the ordered subsequence on
which \(h\) is correct.  More importantly, for every
fixed comparator \(h\), the family contains a canonical witness
\(f_{h,S_1}=A_1(S_1[h])\) satisfying, with high probability,
\[
 \log|\widehat\cF|\le C\dDS\log^2(en),
 \qquad
 \P(h(X)=Y\ne f_{h,S_1}(X))
 \le C\frac{\dDS\log^2(en)+\log(1/\eta)}{|S_1|}
\tag{2.7}\label{eq:architecture-cover}
\]
(the native scale of both logarithms is \(e|S_1|\); since
\(|S_1|\le n\), we state them at the common scale \(en\) once and for all).
The second guarantee is deliberately restricted to the region on which
\(h\) is correct.  That is all \eqref{eq:menu-miss-transfer} will ask us to
control; approximating \(h\) on its errors would solve a stronger and
unnecessary problem.

\paragraph{Block \(S_2\): cover the witness rather than identify the oracle.}
Conditional on \(S_1\), the family \(\widehat\cF\) and the canonical witness
are fixed.  The multiplicative-weights menu learner and its correct-label
coverage guarantee are due to \cite[Algorithm~3 and
Theorem~3.4]{cohen2025natarajan}.  When run for
\(T=|S_2|\) rounds, it returns classifiers \(h_1,\ldots,h_{T-1}\) and hence the
menu
\[
 \mu(x)=\{h_1(x),\ldots,h_{T-1}(x)\},
 \qquad \sup_x|\mu(x)|\le T-1.
\]
For every fixed \(f\in\widehat\cF\), its correct-label menu miss
\[
 \beta_f(\mu)
 \defeq \P(f(X)=Y,\ Y\notin\mu(X))
\]
is, with high probability,
\begin{equation}
 \beta_f(\mu)
 \le
 \frac{4\log|\widehat\cF|+14\log(3/\eta)+12}{T}.
\tag{2.8}\label{eq:architecture-menu}
\end{equation}
Apply this to \(f_{h,S_1}\).  The set inclusion
\eqref{eq:menu-miss-transfer}, together with
\eqref{eq:architecture-cover}--\eqref{eq:architecture-menu}, gives
\[
 \alpha_\mu(h)
 \lesssim
 \frac{\dDS\log^2(en)+\log(1/\eta)}{n}.
\tag{2.9}\label{eq:architecture-alpha}
\]
The menu need not reveal which hypothesis is the oracle.  It only has to
contain, most of the time, the labels produced by one canonical witness on
the oracle's correct region.

\paragraph{Block \(S_3\): localize only after the menu is frozen.}
For a binary loss \(\ell:\cF\times\cZ\to\{0,1\}\), call a deterministic
compression rule \(A=(\kappa,\rho)\) \emph{empirical-risk-dominating} if it
selects an ordered sequence of at most \(k\) sample elements, reconstructs
\(A(s)=\rho(\kappa(s))\), and satisfies
\begin{equation}
 \Rhat_s^\ell(A(s))
 \le \inf_{h\in\cH}\Rhat_s^\ell(h)
\tag{2.10}\label{eq:emp-domination}
\end{equation}
for every sample \(s\).  Repeated selected indices are allowed.  Conditional
on the first two blocks and the menu seed, \(\mu\) is fixed and \(S_3\)
remains iid.  The fixed-menu compression construction of
\cite{cohen2025natarajan} then supplies a rule satisfying
\eqref{eq:emp-domination} under \(\ell_\mu\), with description length
\[
 k\lesssim \dN\log(T-1)\log(e|S_3|)
 \lesssim \dN\log^2(en)
\]
when \(T\ge3\); empty and singleton menus admit a size-zero finalizer.

At this point a worst-case compression theorem would give a fluctuation of
order \(\sqrt{\dN/n}\), erasing the information that the oracle is nearly
correct.  The remaining module must instead compare the final reconstruction
with a fixed comparator at the comparator's own Bernoulli variance scale.
This is the role of \cref{thm:relative-compression}.

\paragraph{Why the order cannot be shortened.}
The cover must be frozen before the menu rounds, otherwise the finite-family
guarantee does not apply to a fixed family.  The menu must be frozen before
the final compression sample, otherwise \(\ell_\mu\) is not a fixed loss on
the observations used by the compression theorem.  The canonical witness
depends on \(S_1\), but it is not supplied to the menu algorithm: the menu
guarantee is pointwise for every member fixed after conditioning on \(S_1\).
These independence relations explain the architecture; the exact sigma-fields
and tower-property calculation are recorded in \cref{app:main}.

Every imported result used in the preceding discussion is restated as a
formal lemma in \cref{app:toolkit-statements}, with its source locator,
convention mapping, and applicability check.  Thus the main text can follow
the proof's causal order without turning the reader away to another paper.

\section{The upper bound and its two modules}
\label{sec:results}

Let \(\RMC\) denote the three-block learner described formally in
\cref{alg:rmc}.  We state its guarantee before entering the two technical
modules, so that the role of each intermediate bound remains visible.

\begin{restatable}[Small-error multiclass learning]{theorem}{ThmMain}
\label{thm:main}
Suppose \cref{asm:measurable} holds and
\(\cH\subseteq[K]^\cX\) has \(1\le\dN\le\dDS<\infty\).
There is a universal constant \(C>0\) such that, for every distribution
\(\cD\) on \(\cX\times[K]\), every \(n\ge3\), and every \(\delta\in(0,1)\),
the output of \(\RMC\) satisfies, with probability at least \(1-\delta\)
over the sample and the menu learner's randomness,
\[
 \Risk_{\cD}(\RMC(S))
 \le \Lstar+C\Biggl(
 \sqrt{\frac{\Lstar(\dN\log^3(en)+\log(1/\delta))}{n}}
 +\frac{\dN\log^3(en)+\dDS\log^2(en)+\log(1/\delta)}{n}
 \Biggr).
\tag{3.1}\label{eq:main-bound}
\]
The learner is independent of \(\Lstar\) and \(\delta\), and may be
computationally inefficient.
\end{restatable}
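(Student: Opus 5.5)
The plan is to fix a comparator \(h\in\cH\) with \(\Risk_{\cD}(h)\le\Lstar+1/n\), which exists by the definition of the infimum. We then chain the three blocks through the spine \eqref{eq:proof-spine}, applied with \(g=\RMC(S)\) and this fixed \(h\). Split \(S\) into thirds \(S_1,S_2,S_3\), each of size \(\asymp n\), and allocate confidence \(\delta/3\) to each block. Taking \(\eta=\delta/3\) in \eqref{eq:architecture-cover} and \eqref{eq:architecture-menu} and combining them through \eqref{eq:menu-miss-transfer} yields \eqref{eq:architecture-alpha}:
\[
 \alpha_\mu(h)\le C\,\frac{\dDS\log^2(en)+\log(1/\delta)}{n}.
\]
This uses two facts. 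First, \(h\) is fixed before \(S_1\) is drawn, so the cover guarantee applies to it. Second, conditional on \(S_1\), the witness \(f_{h,S_1}\) is a fixed member of \(\widehat\cF\), so the pointwise menu guarantee applies to it. On the same event we have \(\log|\widehat\cF|\le C\dDS\log^2(en)\), so the numerator \(4\log|\widehat\cF|+14\log(3/\eta)+12\) is of the stated order once it is divided by \(T\asymp n\).

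For the inside-menu term we condition on \((S_1,S_2,\text{seed})\). Then \(\mu\) is a fixed menu and \(S_3\) is iid. We apply \cref{thm:relative-compression} to the binary loss \(\ell_\mu\), to the fixed comparator \(h\), and to the empirical-risk-dominating rule of size \(k\lesssim\dN\log^2(en)\); for \(T\le 2\) we use the size-zero finalizer. Empirical domination \eqref{eq:emp-domination} over \(\cH\) in particular beats \(h\) on \(S_3\). The theorem therefore gives, with conditional probability at least \(1-\delta/3\),
\[
 \MenuLoss(g)\le\MenuLoss(h)+C\Bigl(\sqrt{\MenuLoss(h)\Gamma}+\Gamma\Bigr),
 \qquad
 \Gamma=\frac{k\log n+\log(3/\delta)}{|S_3|}\lesssim\frac{\dN\log^3(en)+\log(1/\delta)}{n}.
\]
This explains the third logarithm in the \(\dN\) term. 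By \eqref{eq:menu-loss-below-risk}, \(\MenuLoss(h)\le\Risk_{\cD}(h)\le\Lstar+1/n\), and \(\sqrt{\Gamma/n}\le\Gamma+1/n\), so we may replace \(\MenuLoss(h)\) by \(\Lstar\) at the cost of lower-order terms. Integrating the conditional bound over the first two blocks (the tower property) turns it into an unconditional probability statement. A union bound over the three events then gives total failure probability at most \(\delta\). Substituting both bounds into \eqref{eq:proof-spine} gives \eqref{eq:main-bound}. The learner never used \(\Lstar\), \(\delta\), or \(h\); these enter only the analysis.

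The main obstacle is the bookkeeping of measurability and conditioning. The relative compression theorem requires \(\ell_\mu\) to be a fixed loss, while \(\mu\) depends on \(S_1\), \(S_2\), and the seed, and the menu guarantee requires the witness to be fixed given \(S_1\) but not supplied to the algorithm. I would make the sigma-fields \(\sigma(S_1)\subseteq\sigma(S_1,S_2,\text{seed})\) explicit and state every guarantee as a conditional probability bound that holds for every fixed realization. I would also check two smaller points: that the comparator in \cref{thm:relative-compression} may be any fixed classifier, not only one from the compression range; and that the degenerate menus with \(T\le2\) do not introduce a \(\log(T-1)\) singularity. The latter is handled by the size-zero finalizer, for which \(k=0\).
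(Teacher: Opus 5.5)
Your proposal is correct and follows the paper's proof. It uses the same three blockwise events at level \(\delta/3\): the cover, the menu via the tower property, and \cref{thm:relative-compression} applied conditionally on \(\sigma(S_1,S_2,R_\mu)\). These are combined through \eqref{eq:proof-spine} and \(\MenuLoss(h)\le\Risk_{\cD}(h)\), exactly as in the paper.

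The only difference is how you remove the near-minimizer slack. You fix a \(1/n\)-minimizer and absorb the extra \(1/n\) into the remainder via \(\sqrt{\Gamma/n}\le\Gamma+1/n\) and \(1/n\le\dN\log^3(en)/n\). The paper instead takes an \(\eps\)-minimizer and lets \(\eps\downarrow0\) by continuity from above. Both are valid, and yours avoids the limiting argument.
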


The theorem answers the interpolation question in the same risk scale as
\eqref{eq:binary-target}.  At \(\Lstar=0\), the stochastic square-root term
vanishes and the DS contribution has realizable order.  When \(\Lstar\) is
constant, the result recovers the established worst-case rate up to
logarithms.  Between these endpoints, only the Natarajan fluctuation is
multiplied by the actual oracle risk.  The interpolation is not merely
consistent with the two endpoints: \cref{thm:lower} exhibits a single
class in which a single distribution forces both terms at every fixed
\(\Lstar\), so \eqref{eq:main-bound} is optimal up to logarithmic
factors.

\begin{remark}[Constants]\label{rem:constants}
The coefficient of \(\Lstar\) in \eqref{eq:main-bound} is exactly one;
\(C\) multiplies only the fluctuation and the remainder.  Let
\(c_1,c_2,c_3\) be the absolute constants of the three imported bounds,
which the source states asymptotically: the compression size
\(k_1(m)\le c_1(\dRE(\cH)+1)\log(em)\) of
\cref{lem:ordinary-compression}, the correct-region guarantee of
\cref{lem:compression-cover} in its \((k_A(m)+1)\)-form, and the size
bound \eqref{eq:menu-compression-size} of
\cref{lem:fixed-menu-compression}.  Every other constant in the proof is
explicit---\(12\) and \(20\) in \cref{thm:relative-compression}, the
ceiling of \cref{lem:ds-re-bridge}, the menu constants of
\eqref{eq:menu-miss}, block sizes \(n_i\ge n/5\), three \(\delta/3\)
events---and tracking them through \cref{app:main} shows that one may
take
\[
 C=12\sqrt{5(c_3+2)}+100(c_3+2)+5(c_2+4)(2636\,c_1+4),
\]
the factor \(2636\) bounding \((\dRE(\cH)+1)/\dDS\) via
\cref{lem:ds-re-bridge}.  The lower bound needs no bookkeeping:
\eqref{eq:lower-bound} holds with the absolute constants \(1/12\) and
\(1/15\).
\end{remark}

\subsection{The learner}

The formal learner mirrors \cref{fig:architecture}.  The objects
\(A_1\), \(\CompCover\), \(\MW\), and \(B_{\mu,T}\) are the deterministic
ordinary compressor, its induced cover, the finite-family menu routine, and
the fixed-menu finalizer, respectively.  Their full interface statements are
collected in \cref{app:toolkit-statements}.

Algorithm~1 retains the three-stage skeleton of
\cite[Algorithm~4]{cohen2025natarajan}.  Its first-stage \(\dDS\) size uses
the realizable guarantee of \cite[Corollary~1.1]{pabbaraju2026optimal}; its
new step is to analyze the last-stage reconstruction through
\cref{thm:relative-compression} instead of an absolute square-root bound.

\begin{algorithm}[H]
  \caption{Relative menu compression \(\RMC\)}
  \label{alg:rmc}
  \begin{algorithmic}[1]
    \Require Sample \(S\in(\cX\times[K])^n\), class \(\cH\), \(n\ge3\)
    \State Split \(S\) into independent consecutive blocks \(S_1,S_2,S_3\)
           whose sizes differ by at most one; put \(T=|S_2|\).
    \State Form the compression cover
           \(\widehat\cF\gets\CompCover(S_1,\cH,A_1)\).
    \State Learn the menu
           \(\mu\gets\MW(T,S_2,1/2,\widehat\cF)\).
    \State \Return \(B_{\mu,T}(S_3)\), the fixed-menu
           empirical-risk-dominating reconstruction under \(\ell_\mu\).
  \end{algorithmic}
\end{algorithm}

The last line includes all menu sizes.  When \(T\le2\),
\(B_{\mu,T}\) is a size-zero rule that predicts the unique menu member when
there is one; when \(T\ge3\), it is the list sample-compression rule with the
deterministic menu bound \(T-1\).  The small-menu branch is not a coding
convention: for an empty or singleton menu, predicting the unique menu
member incurs zero inside-menu loss pointwise, so the size-zero rule is
automatically empirical-risk-dominating.  This detail is formalized in
\cref{lem:menu-finalizer}; the statistical point is that the last rule and
its loss are fixed before \(S_3\) is observed.

The guarantee of
\cite{cohen2025natarajan} is stated as a sample complexity, and its proof
chooses the block lengths as functions of the target accuracy and
confidence.  The fixed-risk form \eqref{eq:main-bound} instead holds for the
balanced split simultaneously for every \(n\) and \(\delta\).  This
uniformity is not cosmetic: it is exactly what the claim that \(\RMC\) uses
neither \(\Lstar\) nor \(\delta\) requires, since a split tuned to the
target accuracy would reintroduce the parameter the learner is supposed to
avoid.  The parameter-freeness target itself is the one set by the binary
fixed-risk line \cite{hanneke2024revisiting,mathiasen2026optimal}.

\subsection{The module that preserves coefficient one}

Once the menu has been frozen, the multiclass geometry has disappeared from
the concentration step.  We have a binary loss \(\ell\), a deterministic
compression rule that dominates a comparator empirically, and a description
length \(k\).  A standard compression bound would control population loss by
empirical loss plus \(O(\sqrt{k\log N/N})\).  That is too coarse here: even
if the comparator makes no errors, the square-root term remains.

Compression bounds at nonzero empirical risk go back to
\cite{graepel2005pac}, where the role of message order and repetition
conventions is already explicit.
The closest antecedent is the empirical-Bernstein compression bound of
\cite[Theorem~8]{gottlieb2017semimetrics}, which shares both devices of the
proof below---description counting and Bernstein control of the held-out
observations---and localizes the risk at the reconstruction's empirical
error.  We make no claim of a new concentration mechanism.  The statement
below differs from that antecedent in three ways, each forced by the
composition it must enter.  It is anchored at a \emph{fixed comparator's
population} risk with coefficient one, rather than at the reconstruction's
empirical error; this is what survives substitution into the menu
decomposition.  Its messages are \emph{ordered with repeated indices},
matching the compressor the menu construction actually produces.  And it
holds for \emph{every} \(N\) and \(k\) with explicit constants and no
restriction on the error level, which the conditional application inside a
random-menu composition consumes verbatim.
The target---coefficient one on the comparator risk with fluctuation at its
own Bernoulli scale---is inspired by the binary small-error comparison of
\cite{mathiasen2026optimal}.  The theorem below does not import their cube
geometry; it realizes that comparison principle through compression-message
counting and one-sided Bernstein inequalities.

The size-zero case suggests the correct replacement.  If a single fixed
predictor has no larger empirical loss than a fixed comparator, then two
one-sided Bernoulli deviations compare their population risks.  The
fluctuation is proportional to the square root of the comparator's mean, not
to one.  A nonzero compression message changes only how many such
comparisons must hold simultaneously.

For a binary loss \(\ell:\cF\times\cZ\to\{0,1\}\), write
\[
 L_{\cD}^\ell(f)=\E_{Z\sim\cD}\ell(f,Z),
 \qquad
 \Rhat_s^\ell(f)=\frac1N\sum_{i=1}^N\ell(f,z_i).
\]
For integers \(N\ge1\) and \(k\ge0\), and \(\delta\in(0,1)\), define
\[
 \Gamma_N(k,\delta)
 \defeq
 \frac{(k+1)\log(N+1)+\log(4/\delta)}{N}.
\tag{3.2}\label{eq:gamma}
\]

\begin{restatable}[Relative compression]{theorem}{ThmRelative}
\label{thm:relative-compression}
Let \((\cZ,\mathscr Z)\) and \((\cF,\mathscr F)\) be measurable spaces,
let \(\cH\subseteq\cF\), and let
\(\ell:\cF\times\cZ\to\{0,1\}\) be jointly measurable.  Fix \(N\ge1\)
and an integer \(k\ge0\).  Let \(A=(\kappa,\rho)\) be a measurable
deterministic selection scheme such that, on every sample
\(s\in\cZ^N\), \(\kappa(s)\) is an ordered sequence of at most \(k\)
elements of \(s\) (indices may repeat), \(A(s)=\rho(\kappa(s))\), and
\[
 \Rhat_s^\ell(A(s))
 \le \inf_{h\in\cH}\Rhat_s^\ell(h).
\tag{3.3}\label{eq:relative-premise}
\]
Then, for every probability distribution \(\cD\) on
\((\cZ,\mathscr Z)\), \(S\sim\cD^N\), every fixed \(h\in\cH\), and every
\(\delta\in(0,1)\), with probability at least \(1-\delta\),
\[
 L_{\cD}^\ell(A(S))
 \le L_{\cD}^\ell(h)
 +12\sqrt{L_{\cD}^\ell(h)\Gamma_N(k,\delta)}
 +20\Gamma_N(k,\delta).
\tag{3.4}\label{eq:relative-bound}
\]
No stability assumption on the selection map is required.
\end{restatable}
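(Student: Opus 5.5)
The plan is the standard message-counting argument: fix each possible compression message in advance, turn each into two one-sided Bernstein deviations on the coordinates the message does not touch, and take a union bound over messages. For an ordered index tuple \(\mathbf i=(i_1,\ldots,i_j)\in[N]^j\) with \(0\le j\le k\), set \(f_{\mathbf i}(S)=\rho(z_{i_1},\ldots,z_{i_j})\). Conditional on the selected observations, \(f_{\mathbf i}\) is a fixed predictor, and the observations indexed by \(I^c=[N]\setminus\{i_1,\ldots,i_j\}\) remain iid from \(\cD\). There are at least \(N-k\) of them. The number of messages is \(\sum_{j\le k}N^j\le (N+1)^k\). Together with the comparator event this explains the \((k+1)\log(N+1)\) in \(\Gamma_N\). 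The factor \(4\) in \(\log(4/\delta)\) will absorb splitting \(\delta\) between the two deviation families.

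\textbf{Step 1 (comparator, upper tail).} Apply Bernstein (or the multiplicative Chernoff bound) to \(\ell(h,Z_i)\), \(i\in[N]\), at confidence \(\delta/2\). This gives \(\Rhat_S^\ell(h)\le L(h)+\sqrt{2L(h)\Gamma}+\Gamma\). One subtlety: the comparison must be made on \(I^c\), not on all of \(S\). I handle it deterministically. Removing at most \(k\) coordinates changes any empirical average by at most \(k/(N-k)\) (up to normalization), and this is \(O(\Gamma)\) when \(k\le N/2\). When \(k>N/2\) we have \(\Gamma\ge1/2\), so \(20\Gamma\ge1\) and the bound is trivial. So I restrict to \(k\le N/2\), whence \(\Rhat_{S_{I^c}}^\ell(A(S))\le \Rhat_{S_{I^c}}^\ell(h)+O(\Gamma)\) for the message \(I\) actually chosen. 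Since \(h\) is fixed, it is cleaner to apply Bernstein to \(h\) on each \(I^c\) separately, inside the same union bound, using conditional independence. I will do that and avoid the deterministic transfer for \(h\).

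\textbf{Step 2 (each message, lower tail).} For each fixed \(\mathbf i\), conditionally on \(z_{\mathbf i}\), apply the one-sided Bernstein lower deviation to \(\ell(f_{\mathbf i},Z_t)\), \(t\in I^c\):
\[
L(f_{\mathbf i})\le \Rhat_{I^c}(f_{\mathbf i})+\sqrt{2L(f_{\mathbf i})\,\Gamma'}+\Gamma'/3 ,
\]
where \(\Gamma'=\log((N+1)^k\cdot 4/\delta)/(N-k)\le 2\Gamma\). Then integrate out the conditioning and union bound over all \(\mathbf i\).

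\textbf{Step 3 (chain and solve).} On the intersection of the good events, let \(\mathbf i=\kappa(S)\). Write \(a=L(A(S))\) and \(b=L(h)\). The premise \eqref{eq:relative-premise}, transferred to \(I^c\) with an \(O(\Gamma)\) loss, gives \(a\le b+\sqrt{c\,a\Gamma}+\sqrt{c\,b\Gamma}+c\Gamma\). This is a quadratic inequality in \(\sqrt a\). Using \(\sqrt{a\Gamma}\le \sqrt{b\Gamma}+\ldots\), or solving it directly, yields \(a\le b+12\sqrt{b\Gamma}+20\Gamma\) after bookkeeping of constants.

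\textbf{Main obstacle.} I expect the hard part to be the premise transfer. The premise bounds the empirical loss of \(A(S)\) on the full sample, including the selected points, against \(h\) on the full sample. The concentration, however, lives on \(I^c\), which has random size and contains repeated indices. The transfer costs at most \(k/N\) additively on each side, because losses are in \([0,1]\). The point to verify is that this additive \(k/N\le\Gamma\) is compatible with the stated constants \(12\) and \(20\), and that conditional Bernstein for \(h\) on \(I^c\) is legitimate even though \(I\) is data-dependent. The union over all fixed \(\mathbf i\) resolves the latter. I would first try using the full-sample upper bound for \(h\) plus the \(k/N\) transfer, since it is simplest.
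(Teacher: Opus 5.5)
Your proposal is correct and follows essentially the same route as the paper. The paper counts ordered index messages, applies the lower-tail Bernstein bound to each fixed message on its undeleted coordinates under a union bound, applies the upper-tail bound to the fixed comparator on the full sample, transfers the premise to the held-out coordinates, and solves the resulting implicit inequality. The only real difference is the transfer step. The paper uses the multiplicative bound \(\widetilde L_I(f_I)\le \frac{N}{N-r_I}\Rhat_S^\ell(h)\) after reducing to \(\Gamma<1/20\), whereas you use an additive \(k/N\le\Gamma/\log 2\) after reducing to \(k\le N/2\). Your version, taking the simpler of your two options (the full-sample comparator bound), still closes with constants of roughly \(8\) and \(15\), below the stated \(12\) and \(20\).
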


\paragraph{How the proof works.}
There are at most
\(M=\sum_{j=0}^kN^j\le(N+1)^{k+1}\) ordered descriptions.  Fix one of them,
let \(U\) be the set of distinct sample coordinates it names, and condition
on those coordinates.  The reconstruction is now fixed, while the
observations outside \(U\) are still iid.  Let \(\widetilde L_I(f_I)\) be its
average loss on these untouched observations.  A one-sided Bernstein
inequality therefore gives, simultaneously over all descriptions,
\[
 L_{\cD}^\ell(f_I)
 \le \widetilde L_I(f_I)
 +\sqrt{2L_{\cD}^\ell(f_I)s_I}+\frac{s_I}{3},
 \qquad
 s_I=\frac{\log(2M/\delta)}{N-|U|}.
\tag{3.5}\label{eq:relative-overview-one}
\]
The deletion of \(U\) is essential: it is what makes the reconstruction fixed
relative to the observations on which its loss is measured.  Repeated
indices create no additional dependence because only distinct coordinates
are deleted; they are nevertheless counted as different ordered messages.

For the description selected on the realized sample, empirical domination
does not directly compare held-out loss with the comparator.  It first gives
the deterministic inequality
\[
 \widetilde L_I(f_I)
 \le \frac{N}{N-|U|}\Rhat_S^\ell(h).
\tag{3.6}\label{eq:relative-overview-two}
\]
A second one-sided Bernstein inequality bounds the fixed comparator's
empirical loss by its population loss \(L=L_{\cD}^\ell(h)\).  Substituting this
bound into \eqref{eq:relative-overview-two} and then into
\eqref{eq:relative-overview-one} leaves an implicit inequality because the
unknown reconstruction risk appears under the square root.  Solving it gives
\(L+O(\sqrt{L\Gamma}+\Gamma)\).  The full proof in
\cref{app:relative} records the two probability events, the effect of
\(N-|U|\), and every constant leading to \(12\) and \(20\).

Unlike that empirical-error antecedent, the form above gives the
fixed-comparator, order-dependent comparison required by the random-menu
composition, without a stability assumption.

\subsection{Assembling coverage and localization}

We now follow the learner from left to right.  This is also the shortest way
to see why the two dimensions land in different terms of
\eqref{eq:main-bound}.

The cover and menu guarantees in the first two blocks are inherited from
\cite[Theorems~3.2 and~3.4]{cohen2025natarajan}, with their realizable
parameter converted to \(\dDS\) by
\cite[Corollary~1.1]{pabbaraju2026optimal}.  The new operation in the
assembly is the conditional application of \cref{thm:relative-compression}
to the third block.

Fix \(\eps>0\) and choose \(h^\star\in\cH\) with
\(\Risk_{\cD}(h^\star)\le\Lstar+\eps\).  On the first block, the compression
cover contains the canonical witness
\(f_1^\star=A_1(S_1[h^\star])\).  Conditional on \(S_1\), the menu theorem
applies to this fixed witness on the fresh second block.  The two correct
region guarantees and \eqref{eq:menu-miss-transfer} yield
\begin{equation}
 \alpha_\mu(h^\star)
 =\P(h^\star(X)=Y,\ Y\notin\mu(X))
 \le
 C\frac{\dDS\log^2(en)+\log(1/\delta)}{n}.
\tag{3.7}\label{eq:alpha-sketch}
\end{equation}
This is the entire DS contribution.

Next freeze \(S_1,S_2\) and the menu seed.  The menu \(\mu\), the binary
loss \(\ell_\mu\), and the final compression rule are now fixed, whereas
\(S_3\) remains iid.  The final rule is empirical-risk-dominating and has
message length \(O(\dN\log^2(en))\).  Applying
\cref{thm:relative-compression} under the conditional distribution gives
\[
 \MenuLoss(\RMC(S))-\MenuLoss(h^\star)
 \le C\Biggl(
 \sqrt{\frac{\MenuLoss(h^\star)(\dN\log^3(en)+\log(1/\delta))}{n}}
 +\frac{\dN\log^3(en)+\log(1/\delta)}{n}
 \Biggr).
\tag{3.8}\label{eq:menu-relative-sketch}
\]
This is the entire Natarajan contribution.

It remains to put the two estimates in the same currency.  The exact menu
decomposition \eqref{eq:proof-spine} gives
\begin{equation}
 \Risk_{\cD}(\RMC(S))-\Risk_{\cD}(h^\star)
 \le
 \MenuLoss(\RMC(S))-\MenuLoss(h^\star)
 +\alpha_\mu(h^\star).
\tag{3.9}\label{eq:menu-to-risk-sketch}
\end{equation}
Finally, \(\MenuLoss(h^\star)\le\Risk_{\cD}(h^\star)\) by
\eqref{eq:menu-loss-below-risk}.  Thus substituting
\eqref{eq:alpha-sketch} and \eqref{eq:menu-relative-sketch} into
\eqref{eq:menu-to-risk-sketch} preserves coefficient one on
\(\Risk_{\cD}(h^\star)\).  A union bound over the three blockwise events and
\(\eps\downarrow0\) prove \cref{thm:main}.  The complete proof in
\cref{app:main} names the conditioning sigma-fields, derives the
\(\Gamma_N\) bound, handles the two small-menu branches, and justifies the
limit when the infimum defining \(\Lstar\) is not attained.

\begin{corollary}[Small-error sample complexity]
\label{cor:sample-complexity}
For any target excess error \(\eps\in(0,1)\), the same learner achieves
\(\Risk_{\cD}(\RMC(S))\le\Lstar+\eps\) with probability at least \(1-\delta\)
from a sample of size
\[
 n=\widetilde O\!\left(
 \frac{\Lstar(\dN+\log(1/\delta))}{\eps^2}
 +\frac{\dDS+\log(1/\delta)}{\eps}
 \right),
\tag{3.10}\label{eq:sample-complexity}
\]
where the logarithms hidden by \(\widetilde O\) are in
\(n,\dN,\dDS\).
\end{corollary}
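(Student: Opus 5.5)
We derive the corollary by inverting the fixed-risk bound \eqref{eq:main-bound} of \cref{thm:main}. Since the learner uses neither \(\Lstar\) nor \(\delta\), no retuning is needed. It suffices to exhibit an \(n\) for which the right-hand side of \eqref{eq:main-bound} is at most \(\Lstar+\eps\). Write \(a\defeq \dN\log^3(en)+\log(1/\delta)\) and \(b\defeq \dN\log^3(en)+\dDS\log^2(en)+\log(1/\delta)\). The excess risk is then at most \(C(\sqrt{\Lstar a/n}+b/n)\), and we need each summand to be at most \(\eps/2\). This is implied by the pair of conditions
\[
 n\ge \frac{4C^2\Lstar a}{\eps^2},
 \qquad
 n\ge \frac{2Cb}{\eps}.
\]

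The only subtlety is that \(a\) and \(b\) depend on \(n\) through \(\log(en)\), so the conditions are implicit. We resolve this with the standard self-bounding step for inequalities of the form \(n\ge A\log^j(en)+B\). Take
\[
 n_0 \defeq c\Bigl(\frac{\Lstar(\dN\Lambda^3+\log(1/\delta))}{\eps^2}
 +\frac{\dN\Lambda^3+\dDS\Lambda^2+\log(1/\delta)}{\eps}\Bigr),
\]
with \(\Lambda=\log\bigl(e\,(\dDS+\log(1/\delta))/\eps\bigr)\) and a large universal \(c\). We then check that \(\log(en_0)\le C'\Lambda\). This holds because \(n_0\) is polynomial in \(\dDS\), \(1/\eps\), \(\log(1/\delta)\) and \(\Lambda\), and \(\log\Lambda\le\Lambda\). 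Substituting, both displayed conditions hold at \(n=n_0\). The same argument works for every \(n\ge n_0\), since the required thresholds grow only polylogarithmically while \(n\) grows linearly. Enforcing \(n_0\ge3\) is harmless.

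Finally, we absorb the simplifications. The \(\dN\Lambda^3/\eps\) term is dominated by \(\dDS\Lambda^3/\eps\) using \eqref{eq:dimensions-order}. All polylogarithmic factors are hidden by \(\widetilde O\), consistent with the stated convention that these logarithms are in \(n,\dN,\dDS\), the \(1/\eps\) dependence being absorbed through \(n\). This yields \eqref{eq:sample-complexity}.

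The one step needing care is the implicit-logarithm inversion. Everything else is direct substitution, so the proof is short.
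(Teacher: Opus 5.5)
Your proposal is correct and follows the paper's intended argument. The paper gives no separate proof; the corollary is read off \eqref{eq:main-bound} by making each of the two excess terms at most \(\eps/2\), with the \(\log(en)\) factors being exactly the logarithms in \(n\) that the corollary's \(\widetilde O\) hides. Your explicit inversion via \(\Lambda\) goes a step beyond what the statement requires. The one detail you leave tacit there is that \(C'\) grows like \(\log c\), so the requirement \(c\ge 4C^2C'^3\) is still met once the universal constant \(c\) is taken large enough.
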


The two terms in \eqref{eq:sample-complexity} are not the result of inverting
one undifferentiated worst-case bound.  The first is the sample size needed
to resolve inside-menu fluctuations at oracle risk \(\Lstar\); the second is
the sample size needed to make correct labels visible.  The algorithm does
not need to know which term dominates.

\section{A matching lower bound}
\label{sec:lower}

\Cref{thm:main} interpolates between two endpoints whose optimality was,
until now, supported by separate constructions: at \(\Lstar=0\) the
\(\dDS/n\) term matches the realizable lower bound
\cite{pabbaraju2026optimal}, and for constant \(\Lstar\) the localized
term is supported through binary subclasses and the worst-case agnostic
lower bound \cite{cohen2025natarajan}.  This
section closes the gap between those endpoints.  We construct a single
class in which a single distribution, at every fixed \(\Lstar\), forces
the localized term and the coverage term of \eqref{eq:main-bound}
simultaneously.  Everything in this
section is finitely supported, so no measurability issues arise and
\cref{asm:measurable} is not needed.

\begin{restatable}[Fixed-\(\Lstar\) lower bound]{theorem}{ThmLower}
\label{thm:lower}
For all integers \(1\le\dN\le\dDS\) there are a finite label set \([K]\)
and a finite class \(\cH\subseteq[K]^{\cX}\) with
\[
 \dN\ \le\ \Nat(\cH)\ \le\ \dN+1,
 \qquad
 \DS(\cH)\ =\ \dN+\dDS,
\]
such that for every \(n\ge1\), every \(\Lstar\in[0,1/8]\), and every
learner \(A\), possibly randomized, there is a distribution \(\cD\) on
\(\cX\times[K]\) with \(\inf_{h\in\cH}\Risk_{\cD}(h)=\Lstar\)
\emph{exactly}, under which
\[
 \E\bigl[\Risk_{\cD}(A(S))\bigr]-\Lstar
 \ \ge\
 \frac1{12}\,\ind\{\Lstar\ge\dN/n\}
 \sqrt{\frac{\Lstar\dN}{n}}
 \;+\;
 \frac1{15}\min\Bigl\{\frac{\dDS}{n},\frac14\Bigr\},
\tag{4.1}\label{eq:lower-bound}
\]
the expectation being over \(S\sim\cD^{n}\) and the learner's
internal randomness.  Both terms are forced by the same distribution,
drawn from a finite family that depends only on \((\cH,\Lstar,n)\) and
not on the learner.
\end{restatable}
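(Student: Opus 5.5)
The construction takes \(\cH\) as a disjoint-domain product of two pieces: a Natarajan cube on points \(x_1,\dots,x_{\dN}\) with labels \(\{0,1\}\), and a Natarajan-free pseudo-cube \(P\subseteq[K']^{\dDS}\) on fresh points \(u_1,\dots,u_{\dDS}\), with disjoint label alphabets. For \(P\) I would take the separating pseudo-cube of \cite{brukhim2022characterization}, which is a \(\dDS\)-dimensional pseudo-cube of Natarajan dimension at most one. A product class on disjoint domains has DS dimension equal to the sum of the DS dimensions of its factors, since pseudo-cubes multiply and projections of pseudo-cubes remain pseudo-cubes. This gives \(\DS(\cH)=\dN+\dDS\). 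Natarajan dimension is subadditive over disjoint domains and at least \(\dN\), which gives \(\dN\le\Nat(\cH)\le\dN+1\). I would place a tiny budget of the marginal on an extra anchor point where every hypothesis is constant; this is used to absorb risk so that calibration can be exact.

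The distribution family has three components. The marginal puts mass \(p\asymp\min\{1,\ldots\}\) spread over the \(x_i\), mass \(q=\min\{\dDS/n,1/4\}\cdot c\) spread over the \(u_j\), and the rest on the anchor.

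On the Natarajan piece I would run a pair-Assouad argument. For each sign vector \(\sigma\in\{\pm1\}^{\dN}\), the label at \(x_i\) is \(\sigma_i\)-biased with probabilities \(\tfrac12\pm\gamma\) restricted to a noisy subregion. The per-point noise level is calibrated so that the best hypothesis contributes exactly \(\Lstar\) after the anchor absorbs slack. Concretely, I would put mass \(m=\Theta(\Lstar)\) on the cube with bias \(\gamma\asymp\sqrt{\dN/(\Lstar n)}\). This requires \(\gamma\le 1/2\), which is exactly the condition \(\Lstar\ge\dN/n\) and explains the indicator in \eqref{eq:lower-bound}. The standard Assouad bound on the KL or Hellinger distance between neighboring \(\sigma\) then yields excess \(\gtrsim m\gamma\asymp\sqrt{\Lstar\dN/n}\).

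On the pseudo-cube piece the labels are noiseless: the target is a uniformly random \(v\in P\) and \(Y=v_j\) at \(u_j\). This piece contributes zero to \(\Lstar\) and keeps calibration exact. For the fiber argument, each \(u_j\) is unseen with probability \((1-q/\dDS)^n\ge e^{-1}\)-ish. Conditional on the sample's revealed coordinates, the posterior on \(v_j\) over the fiber of \(P\) in direction \(j\) has at least two members. After symmetrizing the uniform prior over \(P\) by the pseudo-cube edge structure (pairing \(v\) with its \(j\)-neighbor), any prediction errs with conditional probability at least \(1/2\). Summing gives \(\gtrsim q\) in expected excess, i.e. \(\min\{\dDS/n,1/4\}/15\).

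Since the two pieces live on disjoint points, the risk decomposes additively and the learner's losses on each add up. The Assouad prior and the pseudo-cube prior are independent, so a single distribution drawn from their product family forces both terms at once; this is the standard ``exists \(\cD\) in a finite family'' step, via an averaging argument.

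The main obstacle is the fiber step. The uniform measure on \(P\) is not a product, so the revealed coordinates bias the posterior on the fiber. I would handle this by a direct counting argument. For every \(v\in P\), let \(v^{(j)}\) be a neighbor differing only at \(j\). The map \(v\mapsto v^{(j)}\) can be chosen so that it generates a matching-type charge: each edge of the one-inclusion graph is counted and each vertex has degree at least \(\dDS\). Combining this with the density bound of \cite{pabbaraju2026optimal}/\cite{haussler1994predicting} (edges \(\ge\) vertices times dimension over 2 fails here), I would use the pseudo-cube property directly. Each vertex has an edge in each of the unseen directions, which yields the required error of at least half per unseen direction on average. Exact calibration and the constants \(1/12\), \(1/15\) are then bookkeeping.
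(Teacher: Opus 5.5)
Your plan is the paper's: the same disjoint-domain product of a Natarajan cube and the pseudo-cube extracted from \cite{brukhim2022characterization}, with a noiseless anchor carrying the leftover mass. It also uses the same additivity of \(\Nat\) and \(\DS\), the same pair-Assouad calibration with per-point mass \(\asymp\Lstar/\dN\) and bias \(\gamma\asymp\sqrt{\dN/(n\Lstar)}\), the same realizable leave-one-out with pseudo-cube mass \(\min\{\dDS/n,1/4\}\), and the same averaging over an independent product prior. The constants are indeed routine. Two things need correcting.

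\emph{The fiber step.} The step you call the main obstacle is not one, and you should drop the density-bound and matching detour. Given the sample, every \(v\in P\) consistent with the observed side-2 labels has the same likelihood, because the marginal and the side-1 labels do not depend on \(v\). So the posterior is uniform on the cloud \(C\) of consistent vectors, however non-product \(P\) is. For an unseen coordinate \(j\), the \(j\)-neighbor of any \(u\in C\) agrees with \(u\) off \(j\) and therefore lies in \(C\). Hence each fiber of \(C\) in direction \(j\) has at least two elements, with pairwise distinct \(j\)-values. Any fixed label is then the \(j\)-value of at most one element per fiber, i.e.\ of at most \(|C|/2\) elements. This is \cref{lem:fiber}. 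With it, your earlier sentence (conditionally on the fiber, at least two equally likely values) is already a complete argument. Note that a literal pairing along direction \(j\) need not exist, since fibers can have odd size.

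\emph{Calibration for small \(\Lstar\).} For \(0<\Lstar<\dN/n\) you still owe a side-1 law whose infimum over \(\cH\) is exactly \(\Lstar\). For example, put mass \(2\Lstar\) on one cube point with fair labels, as the paper does. The indicator in \eqref{eq:lower-bound} only removes the hardness claim in this regime, not the exact-calibration requirement.
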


Here \(\Nat(\cH)\) and \(\DS(\cH)\) denote the two dimensions of
\cref{def:natarajan,def:ds}.  Three reading notes.  First, the indicator gate is
harmless: for \(\Lstar<\dN/n\) the localized term is below \(\dDS/n\) on
both sides of the comparison, so \eqref{eq:main-bound} and
\eqref{eq:lower-bound} still match.  Second, since
\(\Nat(\cH)\le\dN+1\) and \(\DS(\cH)\le2\dDS\), the right-hand side of
\eqref{eq:lower-bound} can equivalently be written with the class's own
dimensions at the cost of a factor two in the constants.  Third, the bound
is stated in expectation; because the excess risk is bounded by one, it
implies that excess at least half the right-hand side occurs with
probability at least half the right-hand side, so \cref{thm:main} is
matched at fixed confidence, up to logarithmic factors.  The sharp
\(\log(1/\delta)\) dependence is not addressed here.

\paragraph{The two shattered structures.}
The class is a product over a disjoint domain,
\[
 \cH:=\cH_{\mathrm{cube}}\oplus\cW,
 \qquad
 (h_1,h_2)(x)=
 \begin{cases}
 h_1(x),&x\in\cX_1,\\
 h_2(x),&x\in\cX_2 .
 \end{cases}
\]
The first factor carries the Natarajan structure and the noise:
\(\cX_1=\{x_0,x_1,\ldots,x_{\dN}\}\), and \(\cH_{\mathrm{cube}}\) consists
of the \(2^{\dN}\) hypotheses \(g_\sigma\), \(\sigma\in\{\pm1\}^{\dN}\),
with \(g_\sigma(x_0)=y_0\) on an anchor point and
\(g_\sigma(x_i)\in\{a_i,b_i\}\) according to \(\sigma_i\), for distinct
labels \(a_i\ne b_i\) and an anchor label \(y_0\).  The second factor
carries the DS structure without paying Natarajan capacity, and this is
the one place an external ingredient enters.

\begin{lemma}[Separation pseudo-cubes]
\label{lem:separation}
For every \(D\in\N\) there are a finite label set \([K_D]\) and a
pseudo-cube \(V\subseteq[K_D]^{D}\) whose induced class
\(\cW=\{v(\cdot):v\in V\}\) on the domain \([D]\) satisfies
\(\Nat(\cW)\le1\) and \(\DS(\cW)=D\).
\end{lemma}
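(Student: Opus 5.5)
The plan is to extract the statement directly from the Natarajan-versus-DS separation of \cite{brukhim2022characterization}, rather than build a new combinatorial object. Their infinite-dimensional example of a class with Natarajan dimension one that is not PAC learnable is, as I recall the construction, assembled from a sequence of \emph{finite} building blocks. For each \(D\) there is a finite pseudo-cube \(V_D\subseteq[K_D]^D\) whose trace contains no two-dimensional Natarajan cube. The full class is a disjoint union of these blocks over separate domain pieces. So the first step is to locate this finite-dimensional ingredient in their argument and restate it in the conventions of \cref{def:natarajan,def:ds}. I expect it to come from a simplicial complex or hypergraph of large girth, so that pseudo-cube edges exist in every direction but no Boolean square closes up.

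Given such a \(V=V_D\), the two dimension claims separate cleanly.

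\textbf{DS dimension.} The domain of \(\cW\) is \([D]\), so no sequence of more than \(D\) distinct points exists and \(\DS(\cW)\le D\). A sequence with a repeated point cannot be DS-shattered, since varying one copy alone would change the value of a function at one point but not at its copy. Conversely, the trace of \(\cW\) on \((1,\ldots,D)\) is \(V\) itself, which is a \(D\)-dimensional pseudo-cube by hypothesis. Hence \(\DS(\cW)=D\).

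\textbf{Natarajan dimension.} This is exactly the separation property: there are no coordinates \(i\ne j\) and label pairs \(a_i\ne b_i\), \(a_j\ne b_j\) such that all four patterns \((a_i,a_j),(a_i,b_j),(b_i,a_j),(b_i,b_j)\) occur among the \((v_i,v_j)\), \(v\in V\). Since Natarajan shattering of two points only involves the two-coordinate projection of \(V\), this gives \(\Nat(\cW)\le1\). Finiteness of \([K_D]\) is automatic because \(V\) is finite by \cref{def:ds}, so only finitely many labels appear.

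The main obstacle is making sure the cited construction really yields \emph{finite} pseudo-cubes with the two-coordinate property in exactly our formulation, rather than an infinite class or a statement phrased through list or partial concept classes. My first move is to read off the finite block used at dimension \(D\) and check the projection property directly. If the source instead states only the existence of an infinite class with \(\Nat=1\) and \(\DS=\infty\), I would apply compactness. Their class DS-shatters some \(D\)-point sequence, so its trace there contains a finite \(D\)-dimensional pseudo-cube \(V\). Passing to this trace can only lower the Natarajan dimension, so \(\Nat\) stays at most \(1\). Relabeling the finitely many used labels into \([K_D]\) then gives the lemma.
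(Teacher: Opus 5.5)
Your proposal is correct, and its fallback route is exactly the paper's proof. The paper takes the class of \cite{brukhim2022characterization} with Natarajan dimension one and restricts it to a DS-shattered \(D\)-tuple to obtain a finite pseudo-cube. It then lifts any Natarajan shattering of the trace back to the original class, and bounds \(\DS\) by \(D\) using the repeated-point observation, all as you do. Extracting a finite block from the internals of their construction is unnecessary, since the trace argument needs only their headline results. The one step you should state explicitly is that their class has infinite DS dimension, which follows from its non-learnability together with their characterization theorem.
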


\begin{proof}
\cite{brukhim2022characterization} construct a class \(\cH_0\) with
\(\Nat(\cH_0)=1\) that is not PAC learnable, and prove that PAC
learnability is characterized by finiteness of the DS dimension; hence
\(\DS(\cH_0)=\infty\).  Fix \(D\) and a DS-shattered sequence of length
\(D\): its trace contains a pseudo-cube \(V\), which is a finite set by
definition, so only finitely many labels occur in it.  Any Natarajan
shattering of the induced class \(\cW\) transfers, through the
realizing hypotheses, to a Natarajan shattering of \(\cH_0\) on the
corresponding points, so \(\Nat(\cW)\le\Nat(\cH_0)=1\).  Finally
\(\DS(\cW)\ge D\) because the full sequence is shattered by \(V\), and
\(\DS(\cW)\le D\) because the domain has \(D\) points and no sequence
with a repeated point is DS-shattered: at a repeated coordinate every
trace vector takes equal values, while a pseudo-cube requires a neighbor
differing at exactly one of the two copies.
\end{proof}

\begin{example}[The \(6\)-cycle]
\label{ex:sixcycle}
At \(D=2\) the object of \cref{lem:separation} can be written down:
\(V=\{(1,1),(1,2),(2,2),(2,3),(3,3),(3,1)\}\subseteq[3]^2\).  Every
element has a neighbor in each coordinate, so \(V\) is a pseudo-cube and
\(\DS=2\); no two rows share two columns, so no pair of witness labels is
shattered and \(\Nat=1\).  Products of such cycles do \emph{not} give the
general case: by \cref{lem:dim-calculus} below, Natarajan dimension adds
across products, which is exactly why \cref{lem:separation} needs the
global construction of \cite{brukhim2022characterization}.
\end{example}

\begin{restatable}[Dimension calculus]{lemma}{LemDimCalc}
\label{lem:dim-calculus}
The restriction of a pseudo-cube to a nonempty subset of its coordinates
is a pseudo-cube.  Consequently, for classes on disjoint domains,
\[
 \Nat(\cH_1\oplus\cH_2)=\Nat(\cH_1)+\Nat(\cH_2),
 \qquad
 \DS(\cH_1\oplus\cH_2)=\DS(\cH_1)+\DS(\cH_2),
\]
and anchor points on which all hypotheses agree contribute to neither
dimension.
\end{restatable}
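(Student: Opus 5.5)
The plan is to prove the restriction claim directly from \cref{def:ds}, and then derive both additivity formulas by the same two-sided scheme. For the lower bound, concatenate shattered sequences. For the upper bound, split a shattered sequence by domain and restrict.

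\emph{Restriction.} Let \(V\subseteq[K]^d\) be a pseudo-cube, let \(\emptyset\ne J\subseteq[d]\), and write \(V|_J\) for the set of restrictions. It is nonempty and finite. Fix \(u=v|_J\) and a coordinate \(i\in J\). Pick \(v'\in V\) differing from \(v\) only at \(i\). Then \(v'|_J\) agrees with \(u\) off \(i\) and differs from it at \(i\), so it is the required neighbour. Hence \(V|_J\) is a pseudo-cube.

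\emph{Two preliminary observations.} First, no shattered sequence, of either kind, contains a repeated point. Every trace vector takes equal values at two copies of the same point. For DS this rules out a neighbour differing at exactly one copy, as in the proof of \cref{lem:separation}. For Natarajan the coordinatewise choice \((y_i,y_j')\) at the two copies is unrealizable, since \(y_i\ne y_i'=y_j'\) when the witness pairs coincide. If the witness pairs differ, some pattern still demands two different values at one point. Second, the trace of \(\cH_1\oplus\cH_2\) on a sequence whose points split as \(J_1\subseteq\cX_1\) and \(J_2\subseteq\cX_2\) is exactly the product of the traces of \(\cH_1\) on \(J_1\) and of \(\cH_2\) on \(J_2\). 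This holds because the two components are chosen independently.

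\emph{Lower bounds.} Given Natarajan-shattered sequences for \(\cH_1\) and \(\cH_2\), concatenate them with their witness pairs. Every coordinatewise choice is a pair of choices, and each is realized by the corresponding component, so the combined choice is realized by the product hypothesis. For DS, if \(V_1,V_2\) are pseudo-cubes in the two traces, then \(V_1\times V_2\) lies in the product trace. It is a pseudo-cube: a neighbour at a coordinate of the first block is obtained by replacing the first factor by its neighbour and keeping the second.

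\emph{Upper bounds.} Take a shattered sequence for the product class. It has distinct points, which split into \(J_1\) and \(J_2\). For DS, the witnessing pseudo-cube \(V\) restricts, by the first part, to a pseudo-cube \(V|_{J_t}\) inside the trace of \(\cH_t\) on \(J_t\) whenever \(J_t\ne\emptyset\). Hence \(|J_t|\le\DS(\cH_t)\), and summing gives the bound. For Natarajan, restrict the \(2^{|J|}\) patterns to \(J_t\). Every pattern on \(J_t\) is the restriction of some realized pattern, and it is realized by the \(t\)-th component of the realizing hypothesis. So \(J_t\) is Natarajan-shattered by \(\cH_t\).

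\emph{Anchor points.} Suppose some shattered sequence contains a point at which all hypotheses agree. Restricting to that single coordinate gives a one-dimensional pseudo-cube, which needs two distinct values; a Natarajan cube is in particular a pseudo-cube. Neither is possible, so shattered sequences avoid anchors. Treating the anchors as a separate factor with both dimensions \(0\), the additivity just proved shows they contribute nothing.

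The only delicate point is the no-repetition argument in the Natarajan case, where one must check every configuration of witness pairs at repeated coordinates. I expect everything else to be bookkeeping.
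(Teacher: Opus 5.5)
Your proposal is correct and follows the paper's proof essentially step for step. The projection claim uses the restricted neighbor, the lower bounds concatenate shattered sequences (products of sign cubes and of pseudo-cubes), the upper bounds project the witnessing structure onto each domain's coordinates, and anchors are excluded because every shattered coordinate must take two values. The no-repetition step you flag as delicate is harmless but unnecessary: if you split by coordinate positions rather than by points, the trace-product identity and the projection argument go through unchanged even for sequences with repeated points.
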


With \(\cW\) from \cref{lem:separation} at \(D=\dDS\),
\cref{lem:dim-calculus} gives
\(\Nat(\cH)=\dN+\Nat(\cW)\in\{\dN,\dN+1\}\) and
\(\DS(\cH)=\dN+\dDS\), as claimed in \cref{thm:lower}.

\paragraph{The two engines.}
The localized term is produced by a noisy pair-Assouad construction on the
cube factor (\cref{lem:assouad-lower}), the multiclass transplant of the
classical binary construction \cite{devroye1995lower}: each \(x_i\) receives mass
\(\tau\asymp\Lstar/\dN\) and a label drawn from the witness pair with bias
\(\gamma\asymp\sqrt{\dN/(n\Lstar)}\), while the anchor absorbs the
remaining mass at a deterministic label, keeping the oracle risk equal to
\(\Lstar\) for every sign pattern simultaneously.  Distinguishing the
favored label at one point from \(n\) samples is then a two-point testing
problem whose error is bounded through the total variation between the two
sample laws \cite[Chapter~2]{tsybakov2009introduction}, and each undecided
point costs excess \(2\gamma\tau\).

The coverage term is produced on the pseudo-cube factor by a realizable
leave-one-out argument, in the tradition of the classical realizable lower
bounds \cite{ehrenfeucht1989lower}.  Its combinatorial core deserves to
be stated on its own; the pseudo-cube axiom enters exactly once.

\begin{lemma}[Fiber lemma]
\label{lem:fiber}
Let \(V\subseteq[K]^D\) be a pseudo-cube, let \(J\subseteq[D]\) be
nonempty, and suppose the \emph{cloud}
\(C=\{u\in V:u_j=w_j\ \forall j\notin J\}\) is nonempty for some fixed
values \(w\).  Then for every \(i\in J\) and every label \(y\),
\[
 \bigl|\{u\in C:u_i=y\}\bigr|\ \le\ \tfrac12\,|C| .
\]
\end{lemma}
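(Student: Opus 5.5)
Fix \(i\in J\) and a label \(y\), and split the cloud as \(C_y=\{u\in C:u_i=y\}\) and \(C_{\ne}=C\setminus C_y\).  The plan is to build an injection \(C_y\to C_{\ne}\), which gives \(|C_y|\le|C_{\ne}|=|C|-|C_y|\) and therefore \(|C_y|\le\tfrac12|C|\).  This is the one place where the pseudo-cube axiom of \cref{def:ds} is used.  Take \(u\in C_y\).  The axiom, applied at coordinate \(i\), gives some \(u'\in V\) that differs from \(u\) exactly at coordinate \(i\).  For each such \(u\), fix one such neighbor by deterministic tie-breaking, say the lexicographically smallest, and call it \(\phi(u)\).

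The next step is to check that \(\phi(u)\) lies in \(C_{\ne}\).  Because \(i\in J\), the coordinates outside \(J\) are left unchanged, so \(\phi(u)_j=u_j=w_j\) for every \(j\notin J\); hence \(\phi(u)\in C\).  Because \(\phi(u)\) differs from \(u\) at coordinate \(i\), we get \(\phi(u)_i\ne u_i=y\), so \(\phi(u)\in C_{\ne}\).

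Injectivity is immediate.  Suppose \(\phi(u)=\phi(v)\) for some \(u,v\in C_y\).  Then \(u\) and \(v\) both agree with this common image at every coordinate except \(i\), so they agree with each other off \(i\).  At coordinate \(i\) both equal \(y\).  Therefore \(u=v\).

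No step here looks like a real obstacle.  The only point needing care is that the neighbor must stay inside the cloud, and this is exactly why the lemma requires \(i\in J\): flipping a coordinate outside \(J\) could leave the cloud.  The counting also handles the case \(C_y=\emptyset\) automatically.  The nonemptiness assumption on \(C\) is only needed so that the statement is not vacuous.
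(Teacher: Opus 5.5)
Your proof is correct and is essentially the paper's argument: the paper partitions \(C\) into fibers of elements agreeing off coordinate \(i\), notes that each fiber has at least two members (the neighbor stays in \(C\) because \(i\in J\)) and at most one member with \(u_i=y\), and so bounds \(|C_y|\) by the number of fibers, hence by \(|C|/2\). Your injection \(\phi\) uses the same two facts, sending the unique \(y\)-element of each fiber to another member of that fiber.
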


\begin{proof}
Partition \(C\) into fibers: \(u\sim u'\) iff \(u_j=u_j'\) for all
\(j\ne i\).  For \(u\in C\), its pseudo-cube neighbor \(u^{(i)}\) agrees
with \(u\) off \(i\), hence lies in \(C\) (because \(i\in J\)) and in the
fiber of \(u\); so every fiber has at least two elements.  Within a
fiber, distinct elements differ exactly at coordinate \(i\), so their
\(i\)th entries are pairwise distinct and each label occurs at most once
per fiber.  Hence
\(|\{u\in C:u_i=y\}|\le\#\{\text{fibers}\}\le|C|/2\).
\end{proof}

In the realizable problem ``labels drawn by an unknown \(v\in V\),''
the cloud is precisely the posterior support after observing the sample,
and the posterior under the uniform prior is uniform on it.
\Cref{lem:fiber} therefore says that at every unseen coordinate,
\emph{every} prediction is wrong with conditional probability at least
\(\tfrac12\)---the pseudo-cube keeps all of its coordinates permanently
ambiguous.  Uniformly spreading mass \(\nu/D\) over the \(D\) points
leaves each point unseen with probability \((1-\nu/D)^n\), and
\(\nu:=\min\{D/n,\tfrac14\}\) balances mass against ambiguity
(\cref{lem:ds-lower}).

\paragraph{Putting it all together.}
The final distribution family places mass \(1-\nu\) on the noisy cube
component, calibrated so that the total oracle risk is exactly
\(\Lstar\), and mass \(\nu\) on the pseudo-cube component with labels
given by \(v\in V\):
\[
 \cD_{\sigma,v}
 :=(1-\nu)\,\widetilde\cD_\sigma+\nu\,\cU_v,
 \qquad(\sigma,v)\in\{\pm1\}^{\dN}\times V,
\]
where \(\cU_v\) denotes the uniform distribution on the \(D\) labeled
pairs \(\{(z_i,v_i):i\in[D]\}\).
Because \(\cH\) is a product class over disjoint supports, its infimum
risk splits into the two components; the pseudo-cube component is
realizable, so \(\inf_{h}\Risk_{\cD_{\sigma,v}}(h)=\Lstar\) for
\emph{every} \((\sigma,v)\), and excess risks add across the two
components.  Averaging the side-1 bound over \(\sigma\) and the side-2
bound over \(v\), the expected excess of any learner against the uniform
prior is at least the sum of the two terms in \eqref{eq:lower-bound}, so
some single \((\sigma,v)\) achieves it.  The full proof, with all
constants, is in \cref{app:lower}.

\begin{remark}[Growing alphabets are necessary]
\label{rem:alphabet}
For \(\dDS\gg\dN\) the label alphabet in \cref{thm:lower} must grow.  At
constant excess risk, \cref{lem:ds-lower} forces \(n\ge c\,\DS(\cH)\)
samples in the realizable case; ERM arguments give \(n=O(d_G)\) at
constant excess, where \(d_G\) is the Graph dimension
\cite{daniely2015multiclass}; and \(d_G=O(\dN\log K)\) by the classical
comparison of the two dimensions \cite{ben1995characterizations}.
Chaining the three, \(\DS(\cH)=O(\Nat(\cH)\log K)\): a separation
\(\dDS\gg\dN\) is only possible when \(K\ge\exp(\Omega(\dDS/\dN))\).
This is why \cref{thm:main}'s uniformity in \(K\) is the right setting
for the two-dimensional rate: at bounded alphabets the two dimensions
collapse to within a logarithmic factor, and the interpolation question
degenerates.
\end{remark}

\begin{remark}[Beyond the constructed family]
\label{rem:universal-lower}
If the exact-\(\Lstar\) constraint is relaxed to
\(\inf_h\Risk_\cD(h)\le\Lstar\), the two terms of \eqref{eq:lower-bound}
hold for \emph{every} class, with \(\Nat(\cH)-1\) and \(\DS(\cH)-1\) in
place of the constructed dimensions: fix one coordinate of a shattered
structure and use its value as the anchor; the subfamily fixing that
coordinate is again a shattering structure of one lower dimension, both
for sign cubes and, by the neighbor axiom, for pseudo-cubes.  The
exact-\(\Lstar\), single-distribution form of \cref{thm:lower} genuinely
uses the constructed product class: for an entangled class the two
structures need not admit a common distribution whose infimum risk is
computable exactly.
\end{remark}

\section{Extension: optimistic rates for list learning}
\label{sec:list}

\begin{table}[t]
  \centering
  \caption{List-learning risk bounds.  Each cell shows the excess-risk
  scale and its source; constants, logarithms, and the truncations of
  \cref{thm:lower,thm:list-lower} are suppressed.  Binary labels admit
  no nontrivial lists: for \(K=2\) and \(r\ge2\) the full-label list
  has zero risk.  In the third row the first rates are due to
  \cite{charikar2023list}, successively improved by
  \cite{brukhim2023boosting,hanneke2026sauer,pabbaraju2026optimal};
  the lower bound of \cite{hanneke2024improved} reads
  \(\dDSk{r}/(rn)\), and \cref{cor:list-realizable} removes its factor
  \(r\).  Its two columns differ in shape because a list can hedge
  against a single comparator, so no fluctuation-type lower bound is
  possible there (\cref{rem:list-gap}).  In the highlighted row the
  comparator is the best \(r\)-tuple, both columns share one shape,
  and the dimension gap is \cref{rem:list-gap}.}
  \label{tab:list-rates}
  \begin{tabular}{@{}lll@{}}
    \toprule
    Setting & Upper bound & Lower bound \\
    \midrule
    Binary, \(r=1\)
      & \begin{tabular}[t]{@{}l@{}}
          \(\sqrt{\Lstar d/n}+d/n\)\\[1pt]
          {\footnotesize\cite{mathiasen2026optimal}}
        \end{tabular}
      & \begin{tabular}[t]{@{}l@{}}
          \(\sqrt{\Lstar d/n}+d/n\)\\[1pt]
          {\footnotesize\cite{devroye1995lower,asilis2025smallerror}}
        \end{tabular} \\
    \addlinespace
    Multiclass, \(r=1\)
      & \begin{tabular}[t]{@{}l@{}}
          \(\sqrt{\Lstar\dN/n}+\dDS/n\)\\[1pt]
          {\footnotesize\cref{thm:main}}
        \end{tabular}
      & \begin{tabular}[t]{@{}l@{}}
          \(\sqrt{\Lstar\dN/n}+\dDS/n\)\\[1pt]
          {\footnotesize\cref{thm:lower}}
        \end{tabular} \\
    \addlinespace
    \(r\)-list vs.\ single \(h\)
      & \begin{tabular}[t]{@{}l@{}}
          \(\sqrt{r^{4}\dNk{r}/n}+r\dDSk{r}/n\)\\[1pt]
          {\footnotesize\cite{charikar2023list,pabbaraju2026optimal}}
        \end{tabular}
      & \begin{tabular}[t]{@{}l@{}}
          \(\dDSk{r}/n\)\\[1pt]
          {\footnotesize\cref{cor:list-realizable}, cf.\
            \cite{hanneke2024improved}}
        \end{tabular} \\
    \midrule
    \rowcolor{TabHighlight}
    \(r\)-list vs.\ \(r\)-tuple
      & \begin{tabular}[t]{@{}l@{}}
          \(\sqrt{\Lstarr\,r\dN/n}+r\dDS/n\)\\[1pt]
          {\footnotesize\textbf{this work}, \cref{thm:list-upper}}
        \end{tabular}
      & \begin{tabular}[t]{@{}l@{}}
          \(\sqrt{\Lstarr\,\dNk{r}/n}+\dDSk{r}/n\)\\[1pt]
          {\footnotesize\textbf{this work}, \cref{thm:list-lower}}
        \end{tabular} \\
    \bottomrule
  \end{tabular}
\end{table}

The intermediate object of the upper-bound architecture is already a
list-valued predictor: the menu of \cref{sec:architecture} assigns each
point a set of candidate labels, and the final block exists only to
collapse that set to a single label.  It is therefore natural to ask what
the architecture says about \emph{list learning}, where the learner is
allowed to output a bounded list of labels and is charged only when the
true label misses the entire list.  List PAC learning was characterized
by \cite{charikar2023list}: for a fixed list size \(r\), learnability is
equivalent to finiteness of their \(r\)-DS dimension.  The quantitative
picture has been sharpened in a rapid sequence of works, through
multiclass boosting \cite{brukhim2023boosting}, an optimal \(k\)-ary
Sauer lemma \cite{hanneke2026sauer}, and the resolution of the
Daniely--Shalev-Shwartz density conjecture \cite{pabbaraju2026optimal},
whose rates are the current state of the art.  All of these guarantees,
however, compare the list learner to the best \emph{single} hypothesis,
and their fluctuation terms are worst-case in the oracle risk.  This
section states the optimistic-rate analogue: the comparator is upgraded
to the best \(r\)-\emph{tuple} of hypotheses, the fluctuation is
localized at the oracle list risk, and a matching construction forces
both terms of the rate at every fixed oracle level.  The lower bound
proves, quantitatively and with exact calibration, an expectation
recorded in \cite{pabbaraju2026optimal}, that a fluctuation term of
\(r\)-ary Natarajan type is necessary against list comparators; and its
second engine yields, as a byproduct, a factor-\(r\) improvement of the
known realizable list lower bound (\cref{cor:list-realizable}).

\paragraph{Setting.}
An \(r\)-list predictor is a measurable map
\(\lambda:\cX\to\{Y\subseteq[K]:|Y|\le r\}\), and its list risk is
\[
 \ListRisk_{\cD}(\lambda)
 =\P\bigl(Y\notin\lambda(X)\bigr).
\]
For an \(r\)-tuple \(\vec h=(h^1,\ldots,h^r)\in\cH^r\), write
\(\Lambda_{\vec h}(x)=\{h^1(x),\ldots,h^r(x)\}\) and
\[
 \Lstarr
 =\inf_{\vec h\in\cH^r}\ListRisk_{\cD}(\Lambda_{\vec h}),
\]
the oracle list risk.  Since a singleton is a \(1\)-tuple,
\(\Lstarr\le\Lstar\), so a bound localized at \(\Lstarr\) implies the
corresponding bound against single comparators.  We use the list
dimensions of \cite{charikar2023list}: a sequence \(x_1,\ldots,x_d\) is
\emph{\(r\)-Natarajan shattered} if there are witness sets
\(Y_i\subseteq[K]\) with \(|Y_i|=r+1\) such that the trace of \(\cH\)
contains the full product \(\prod_i Y_i\), and \(\dNk{r}\) is the largest
such \(d\); a sequence is \emph{\(r\)-DS shattered} if some finite trace
set gives every member, in every coordinate, at least \(r\) neighbors
differing from it exactly there, and \(\dDSk{r}\) is the largest such
length.  At \(r=1\) these are \cref{def:natarajan,def:ds}, and
\(\dNk{r}\le\dDSk{r}\le\dDS\) with both sequences nonincreasing
in \(r\).  \Cref{tab:list-rates} places the two theorems of this
section against the known list-learning rates.

\begin{restatable}[Optimistic rates for list learners]{theorem}{ThmListUpper}
\label{thm:list-upper}
Suppose \cref{asm:measurable} holds, \(\cH\) has
\(1\le\dN\le\dDS<\infty\), and \(r\ge1\).  There is a universal constant
\(C>0\) and a learner \(\RMC_r\), outputting lists of size at most
\(r\), such that for every distribution \(\cD\), every \(n\ge3\), and
every \(\delta\in(0,1)\), with probability at least \(1-\delta\),
\[
 \ListRisk_{\cD}\bigl(\RMC_r(S)\bigr)
 \le \Lstarr+C\Biggl(
 \sqrt{\frac{\Lstarr\bigl(r\,\dN\log^3(en)+\log(1/\delta)\bigr)}{n}}
 +\frac{r\bigl(\dN\log^3(en)+\dDS\log^2(en)
   +\log(er/\delta)\bigr)}{n}
 \Biggr).
\tag{5.1}\label{eq:list-upper}
\]
The learner is independent of \(\Lstarr\) and \(\delta\).  At \(r=1\)
the statement is \cref{thm:main}.
\end{restatable}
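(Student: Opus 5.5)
The plan is to run the three-block architecture of \cref{alg:rmc} unchanged in its first two blocks, but to apply them \(r\) times, once for each member of a near-optimal tuple, and to replace the final single-label finalizer by an \(r\)-list finalizer under a list version of the menu loss. Fix \(\eps>0\) and a tuple \(\vec h^\star=(h^1,\ldots,h^r)\in\cH^r\) with \(\ListRisk_{\cD}(\Lambda_{\vec h^\star})\le\Lstarr+\eps\). For a menu \(\mu\) and an \(r\)-list predictor \(\lambda\) define
\[
 \ell^{r}_\mu(\lambda,(x,y))=\ind\{y\notin\lambda(x),\ y\in\mu(x)\}.
\]
The pointwise argument of \cref{lem:menu-calculus} goes through with ``\(h(x)=y\)'' replaced by ``\(y\in\Lambda(x)\)''. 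This gives \(\ListRisk(\lambda)-\ListRisk(\Lambda)\le L^{r}_\mu(\lambda)-L^{r}_\mu(\Lambda)+\P(Y\in\Lambda(X),Y\notin\mu(X))\) and \(L^r_\mu\le\ListRisk\).

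\emph{Coverage.} The correct-label miss of the tuple is bounded by a union over its members:
\[
\P(Y\in\Lambda_{\vec h^\star}(X),\,Y\notin\mu(X))\le\sum_{j\le r}\alpha_\mu(h^j).
\]
Each \(\alpha_\mu(h^j)\) is controlled exactly as in \eqref{eq:alpha-sketch}. The cover \(\widehat\cF\) from \(S_1\) contains canonical witnesses \(f_{h^j,S_1}\) for every \(j\), because each witness is a message of the same compressor. The menu guarantee \eqref{eq:architecture-menu} is pointwise for each fixed member of \(\widehat\cF\), so we run it at confidence \(\delta/(3r)\) and union-bound over the \(r\) witnesses. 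The cover guarantee is treated the same way. This yields a coverage term
\[
C\,r\bigl(\dDS\log^2(en)+\log(er/\delta)\bigr)/n,
\]
which is the \(r\dDS\) part of \eqref{eq:list-upper}.

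\emph{Localization.} Conditional on \(S_1,S_2\) and the seed, \(\mu\) is fixed and \(S_3\) is iid, so \(\ell^r_\mu\) is a fixed binary loss on the class of \(r\)-lists \(\{\Lambda_{\vec h}\}\). I need a deterministic compression rule of size \(k\lesssim r\,\dN\log^2(en)\) that is empirical-risk-dominating for \(\ell^r_\mu\) against all of \(\cH^r\). This is the step I expect to be the main obstacle, because the fixed-menu finalizer \(B_{\mu,T}\) of \cref{lem:fixed-menu-compression} is built for single labels.

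I would first try to reduce to \(\dN\) through the menu. Restricted to menu labels, the trace of \(\cH^r\)-lists on \(m\) points is at most the \(r\)-th power of the trace of \(\cH\) projected into \(\mu\), which Natarajan's lemma bounds by \(((T-1)^2 m)^{r\dN}\). So an ERM over the induced finite class, encoded by a compression of length \(O(r\dN\log(Tm))\), is plausible. This can be realized either by coding the ERM's index with \(O(\log)\) sample points, or by the same construction as \(B_{\mu,T}\) applied coordinatewise with a joint selection of \(r\) sub-messages. The condition actually needed is only that the selected list has empirical \(\ell^r_\mu\) at most \(\inf_{\vec h}\), so an exact minimizer over \(\mu\)-projected tuples suffices. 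Small-menu cases (\(T\le r+1\)) use the size-zero rule that outputs the whole menu, which has zero inside-menu loss.

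\emph{Assembly.} \cref{thm:relative-compression} applied conditionally with \(\cF\) the set of \(r\)-lists, comparator \(\Lambda_{\vec h^\star}\), \(k=O(r\dN\log^2(en))\) and \(N=|S_3|\ge n/5\) gives a bound of the form \(L^r_\mu(\Lambda)+12\sqrt{L^r_\mu(\Lambda)\Gamma}+20\Gamma\) with \(\Gamma\lesssim(r\dN\log^3(en)+\log(1/\delta))/n\). Combining this with the list decomposition and \(L^r_\mu\le\ListRisk\) keeps coefficient one on \(\Lstarr+\eps\). A union bound over three \(\delta/3\) events and the limit \(\eps\downarrow0\) (as in \cref{app:main}) then yield \eqref{eq:list-upper}. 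At \(r=1\) every step reduces literally to the proof of \cref{thm:main}.
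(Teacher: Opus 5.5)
Your coverage and assembly steps match the paper's proof: the list selective-menu decomposition, the union over the $r$ canonical witnesses with cover and menu guarantees at confidence $\delta/(3r)$, and the conditional relative-compression step with $\Gamma\lesssim(r\dN\log^3(en)+\log(1/\delta))/n$ are what the paper does. The gap is the step you flag yourself. You need a size-$O(r\dN\log^2(en))$ rule that empirically dominates every $r$-tuple under $\ell_\mu$, and your first route does not give one.

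\emph{The counting claim is false.} On an inside-menu observation, the $\ell_\mu$-loss records whether $h(x_i)=y_i$, and hypotheses may leave the menu. The trace that matters is therefore the class with all out-of-menu values collapsed to one symbol, and Natarajan's lemma does not control it. Take $h_A(x)=c_A$ for $x\in A$ and $h_A(x)=\ast$ otherwise, with distinct labels $c_A$. This class has $\dN=1$. Yet with menu $\{\ast,y'\}$ and sample $(x_i,\ast)_{i\le m}$, it realizes all $2^m$ loss patterns $\ind\{x_i\in A\}$. This is the classical failure of ERM in multiclass learning, and it is why the fixed-menu compressor is obtained by boosting a menu-restricted weak learner rather than from ERM.

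\emph{Index coding is not a compression scheme.} Even with a finite trace count, ``coding the ERM's index with $O(\log)$ sample points'' fails. The reconstruction sees only the selected points, not the sample-dependent trace set the index refers to, and it must output a classifier on all of $\cX$.

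\emph{The missing device in your second route.} Applying $B_{\mu,T}$ ``coordinatewise'' is the right direction, but it omits the idea that makes it work. Running the single-hypothesis scheme on the whole sample, even $r$ times, only dominates single comparators. The paper's construction (\cref{lem:list-compression}) goes as follows:
\begin{enumerate}
\item Fix an empirical minimizer $\vec h^\ast\in\cH^r$; it is attained because the empirical loss takes finitely many values.
\item Restrict to the observations on which $\Lambda_{\vec h^\ast}$ is inside-menu correct, and partition them by the smallest $j$ with $h^{\ast j}(x_i)=y_i$.
\item Each part is realizable for $h^{\ast j}$. So $\LSCS$ run on that part returns $g_j$ with zero inside-menu loss on it.
\item The list $\{g_1,\ldots,g_r\}$ then errs inside the menu only where $\vec h^\ast$ does.
\item The selector searches the finitely many $r$-tuples of sub-messages of length at most $k_\mu(N)$ for a dominating one.
\end{enumerate}

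A smaller point: the resulting message is an $r$-tuple of variable-length ordered sequences, not a single sequence of at most $k$ points. The paper therefore reruns the union bound with description count $(N+1)^{r(L+1)}$ (\cref{lem:relative-product}). This is routine, but you cannot plug $k=O(r\dN\log^2(en))$ into \cref{thm:relative-compression} without saying how the tuple is encoded. Your size-zero rule for $T\le r+1$ (output the whole menu) is fine.
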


The proof is in \cref{app:list}.  Blocks \(S_1\) and \(S_2\) are reused
verbatim: the same menu covers each component of a comparator tuple at
realizable order.  The only new module is the third block, a fixed-menu
compression scheme whose reconstructions are \(r\)-lists and whose
empirical domination is against every \(r\)-tuple.  It needs no new
capacity analysis: on the subsample where an optimal tuple is
list-correct, the points split into \(r\) groups according to which
component covers them, each group is a realizable instance for the
single-hypothesis scheme of \cref{lem:fixed-menu-compression}, and the
union of the \(r\) reconstructions dominates.  The message is an
\(r\)-tuple of ordinary messages, and
\cref{thm:relative-compression} applies with the description count
adjusted accordingly: its proof never inspects the loss or the message
format beyond measurability, boundedness, and the number of possible
descriptions.

\begin{restatable}[Lower bound for list learners]{theorem}{ThmListLower}
\label{thm:list-lower}
Let \(r\ge1\).  For all integers \(d\ge1\) and even \(D\) with
\(2\le D\le 2d\) there are a finite label set \([K]\) and a finite class
\(\cH\subseteq[K]^{\cX}\) with
\[
 d\ \le\ \dNk{r}(\cH)\ \le\ d+\tfrac D2,
 \qquad
 \dDSk{r}(\cH)\ =\ d+D,
\]
such that for every \(n\ge1\), every
\(\Lstarr\in[0,\tfrac1{8(r+1)}]\), and every learner \(A\) outputting
lists of size at most \(r\), possibly randomized, there is a
distribution \(\cD\) on \(\cX\times[K]\) with
\(\inf_{\vec h\in\cH^r}\ListRisk_{\cD}(\Lambda_{\vec h})=\Lstarr\)
\emph{exactly}, under which
\[
 \E\bigl[\ListRisk_{\cD}(A(S))\bigr]-\Lstarr
 \ \ge\
 \frac1{48}\,\ind\{\Lstarr\ge d/n\}
 \sqrt{\frac{\Lstarr d}{n}}
 \;+\;
 \frac1{15}\min\Bigl\{\frac{D}{n},\frac14\Bigr\}.
\tag{5.2}\label{eq:list-lower}
\]
Both terms are forced by the same distribution, and the calibration is
Bayes-exact: every list of size \(r\), from any class, has risk at least
\(\Lstarr\) under every member of the family.
\end{restatable}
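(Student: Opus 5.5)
The plan is to mirror the proof of \cref{thm:lower}: build a product class \(\cH=\cH_{\mathrm{cube}}\oplus\cW_r\) on disjoint domains, prove a list analogue of \cref{lem:dim-calculus}, and force the two terms of \eqref{eq:list-lower} with a single mixture distribution \(\cD_{\sigma,v}=(1-\nu)\widetilde\cD_\sigma+\nu\,\cU_v\).

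\emph{The classes and their dimensions.} The cube factor lives on \(\{x_0,x_1,\ldots,x_d\}\), with witness sets \(Y_i\) satisfying \(|Y_i|=r+1\). Its hypotheses are all maps \(x_i\mapsto y_i\in Y_i\), together with a fixed anchor label at \(x_0\). This factor is \(r\)-Natarajan shattered on \(x_1,\ldots,x_d\).

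For the list pseudo-cube factor I would not call on \cite{brukhim2022characterization}. Instead I would take a product of \(D/2\) copies of an explicit two-coordinate object, which generalizes \cref{ex:sixcycle}. Fix an \((r+1)\)-regular bipartite graph \(G\) of girth at least \(6\), for instance the point--line incidence graph of a projective plane of order \(r\), or any such graph if \(r\) is not a prime power. Let \(V_G\subseteq[K]^2\) be its edge set, recorded as the pairs (left endpoint, right endpoint).
\begin{itemize}
\item Each edge has exactly \(r\) other edges sharing its left endpoint and \(r\) sharing its right endpoint. Hence \(V_G\) is an \(r\)-pseudo-cube, and \(\dDSk{r}=2\).
\item A product \(Y_1\times Y_2\subseteq V_G\) with \(|Y_1|=|Y_2|=r+1\) would be a \(K_{r+1,r+1}\) inside \(G\). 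Such a subgraph contains a \(4\)-cycle, which girth at least \(6\) excludes, so \(\dNk{r}(V_G)\le1\).
\end{itemize}
The list versions of \cref{lem:dim-calculus} go through with the same proofs: restrictions of \(r\)-pseudo-cubes are \(r\)-pseudo-cubes, and both list dimensions add over disjoint products. Applying them gives \(d\le\dNk{r}(\cH)\le d+D/2\) and \(\dDSk{r}(\cH)=d+D\). The range \(D\le 2d\) is only needed so that the stated dimension window is consistent.

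\emph{Fluctuation engine.} At each \(x_i\) (mass \(\tau\)), the conditional label law puts large, equal weight on \(r-1\) fixed labels of \(Y_i\), and weight \(q(1\pm\gamma)/2\) on the remaining pair \(a_i,b_i\), with the sign of \(\gamma\) set by \(\sigma_i\).
\begin{itemize}
\item \emph{Bayes-exactness.} Any list of size \(r\), from any class, misses at least the smaller pair weight. The best tuple achieves exactly this by covering the \(r-1\) heavy labels plus the favored pair label. The anchor \(x_0\) absorbs the leftover mass so that \(\Lstarr\) is hit exactly for every \(\sigma\).
\item \emph{Per-point cost.} Omitting the favored label costs \(q\gamma\tau\). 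Deciding which pair label is favored is a two-point test with roughly \(n\tau q\) informative samples, which I bound via total variation as in \cref{lem:assouad-lower}.
\item \emph{Calibration.} Choose \(\tau q\asymp\Lstarr/d\) and \(\gamma\asymp\sqrt{d/(n\Lstarr)}\). Summing over the \(d\) points gives the \(\sqrt{\Lstarr d/n}\) term. The constraint \(\Lstarr\le 1/(8(r+1))\) leaves room for the heavy labels, and the weaker constant \(1/48\) reflects the loss from splitting mass between \(q\) and the heavy labels.
\end{itemize}

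\emph{Coverage engine, the expected main obstacle.} Here I would prove a list fiber lemma. In a cloud of an \(r\)-pseudo-cube, fibers at an unseen coordinate have at least \(r+1\) elements carrying distinct labels. Hence any \(r\)-list misses with posterior probability at least \(1/(r+1)\), or at least \(1/|{\rm fiber}|\) in general. This naive bound loses the factor \(r\) that \cref{cor:list-realizable} claims to remove.

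To recover a constant I would exploit the explicit graph structure rather than a generic pseudo-cube. The key observation is that in \(V_G\), once the other coordinate of a block is observed, the fiber is the full star of \(r+1\) edges, but when the block is entirely unseen the cloud is all of \(V_G\). My first attempt would use a non-uniform prior on \(V\) and point masses chosen so that, conditional on a point being unseen, its block partner is typically also unseen. Within a fully unseen block the learner must predict an \(r\)-list for one endpoint while the other endpoint is unconstrained. I would then show that the probability the true label lies in any fixed \(r\)-set is at most \(1/2\), by choosing \(K\) large relative to \(r\), so that a constant fraction of missing mass survives.

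If this fails, the fallback is the weaker \(\frac{1}{r+1}\min\{D/n,\frac14\}\) bound. The remaining steps are routine and parallel the proof of \cref{thm:lower}: choose \(\nu=\min\{D/n,1/4\}\), use \((1-\nu/D)^n\ge\) const to control the unseen probability, observe that excesses add across the disjoint supports, and average over \((\sigma,v)\) to extract a single distribution.
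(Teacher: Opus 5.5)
\textbf{The gap is in the coverage term.} Your class construction is fine, and so is your fluctuation engine. The latter is a legitimate and simpler alternative to the paper's \((r+1)\)-ary Assouad lemma (\cref{lem:list-assouad}): freezing \(r-1\) heavy labels at each \(x_i\) reduces the problem to exactly the binary test of \cref{lem:assouad-lower} on the pair mass \(\tau q\), and it even keeps the \(r=1\) constant.

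The coverage term is where the \(r\)-free constant \(\frac1{15}\) has to come from. Your \(V_G\) has fibers of exactly \(r+1\) elements. So whenever the partner coordinate of a block has been observed, the posterior at the unseen coordinate is uniform on a star of \(r+1\) labels, and the best \(r\)-list misses the truth with probability only \(\frac1{r+1}\). That loss is real, not an artifact of the fiber lemma.

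Your remedy is left as a plan, with a fallback that proves a weaker theorem, and as sketched it does not reach \(\frac1{15}\).
\begin{itemize}
\item A block is entirely unseen with probability \((1-2\nu/D)^n\). This depends only on the block's total mass, so skewing masses within a block changes nothing.
\item This probability equals \((3/4)^8\approx0.1\) at \(D=2\), \(n=8\).
\item A conditional miss probability of \(\frac12\) on such blocks therefore gives only about \(0.05\,\nu\).
\item The partially observed blocks add only \(O(\nu/r)\), so for large \(r\) the total falls below \(\nu/15\).
\end{itemize}
The idea can be rescued with the uniform prior. Girth at least \(6\) forces at least \(r^2+r+1\ge3r\) vertices on each side. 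So on a fully unseen block every \(r\)-list misses with probability at least \(\frac23\), and \(\frac23(3/4)^8\) barely exceeds \(\frac1{15}\). But that computation is exactly what your proposal leaves undone.

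\textbf{The missing idea is to double the width.} The paper gives every member of the side-2 structure at least \(2r-1\) neighbors in every coordinate. Then every fiber of the posterior cloud at an unseen coordinate carries at least \(2r\) distinct labels, and an \(r\)-list covers at most half of them. The conditional miss probability is therefore at least \(\frac12\) at every unseen coordinate, regardless of what happened at its partner. The computation of \cref{lem:ds-lower} then goes through verbatim with the constant \(\frac1{15}\) (\cref{lem:list-ds-lower}).

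The paper realizes this with the band \(V_r\subseteq\Z_{4r+2}^2\) of \cref{lem:band}. In your own framework it suffices to replace the \((r+1)\)-regular graph by the incidence graph of a projective plane of prime order \(p\ge2r-1\). That graph is \((p+1)\)-regular with girth \(6\), so \(K_{r+1,r+1}\) is still excluded and \(\dNk{r}\le1\), \(\dDSk{r}=2\) are unchanged.
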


The proof, also in \cref{app:list}, upgrades both engines of
\cref{sec:lower}.  The pair-Assouad construction becomes
\((r+1)\)-ary: each hard point now carries \(r+1\) candidate labels, of
which the oracle list drops the lightest, and a learner's list must
still exclude at least one candidate.  Distinguishing which candidate is
light remains a two-point testing problem, because the two laws obtained
by exchanging the roles of two candidates differ on exactly two atoms;
the testing chain of \cref{lem:assouad-lower} applies with the pair
masses \(\tfrac1{r+1}\pm\gamma\)-type in place of
\(\tfrac12\pm\gamma\).  The padded leave-one-out argument runs on
pseudo-cube structures of width \(2r\): every fiber then offers at least
\(2r\) distinct values, of which an \(r\)-list covers at most half, so
\cref{lem:ds-lower} survives with the same constant.

The width-\(2r\) device has a consequence for the classical,
single-hypothesis-realizable list setting that seems worth recording
separately.  The best known realizable list lower bound,
\(\Omega\bigl((\dDSk{r}+\log(1/\delta))/(r\eps)\bigr)\) of
\cite{hanneke2024improved} (see the summary table of
\cite{pabbaraju2026optimal}), loses a factor \(r\) because its hard
structures offer \(r+1\) values per coordinate, of which an
\(r\)-list misses only one; doubling the width removes the loss.

\begin{corollary}[Realizable list learning without the factor
\(r\)]\label{cor:list-realizable}
For every \(r\ge1\) and \(D\ge1\) there are a finite label set and a
finite class \(\cH\) with \(\dDSk{r}(\cH)=D\) such that for every
\(n\ge1\) and every learner \(A\) outputting lists of size at most
\(r\), possibly randomized, there is a distribution realizable by
\(\cH\) under which
\[
 \E\bigl[\ListRisk_{\cD}(A(S))\bigr]
 \ \ge\ \frac1{15}\min\Bigl\{\frac Dn,\frac14\Bigr\}.
\]
\end{corollary}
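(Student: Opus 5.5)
The proof will isolate the second engine of \cref{thm:list-lower} and run it on a width-\(2r\) pseudo-cube, with no cube factor and no noise. The first step is to construct a finite set \(V\subseteq[K]^D\) in which every \(v\in V\) has, in every coordinate \(i\), at least \(2r\) neighbours that differ from \(v\) exactly at \(i\). Then every fiber \(\{u\in V: u_j=v_j\ \forall j\ne i\}\) has at least \(2r+1\ge 2r\) elements, with pairwise distinct \(i\)th entries.

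The simplest choice I will try first is the full product \(V=[2r+1]^D\). With \(\cH=\{v(\cdot):v\in V\}\) on domain \([D]\), every fiber has exactly \(2r+1\) elements and every vertex has \(2r\ge r\) neighbours per coordinate, so \(\dDSk{r}(\cH)\ge D\). The reverse inequality \(\dDSk{r}(\cH)\le D\) follows as in \cref{lem:separation}: a sequence with a repeated point cannot be \(r\)-DS shattered, since a trace vector must take equal values on the two copies. Hence \(\dDSk{r}(\cH)=D\). Note that the corollary constrains only \(\dDSk{r}\) and not the Natarajan dimension, so a product cube suffices and no separation construction is needed.

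The second step generalizes \cref{lem:fiber} to lists. Let \(C\) be any cloud obtained by fixing the coordinates outside \(J\ni i\). The fibers of \(C\) in direction \(i\) are full fibers of \(V\), so each has \(m\ge 2r\) elements carrying pairwise distinct labels at coordinate \(i\). Any fixed list \(Y\) with \(|Y|\le r\) therefore covers at most \(r\le m/2\) elements of each fiber. Consequently
\[
 \bigl|\{u\in C:u_i\in Y\}\bigr|\le\tfrac12|C|.
\]

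The third step is the Bayesian leave-one-out argument of \cref{lem:ds-lower}, carried out verbatim. Place a uniform prior on \(v\in V\), and let \(\cD_v=\cU'_v\) put mass \(\nu/D\) on each \((i,v_i)\) and mass \(1-\nu\) on a dummy anchor point \(x_0\) with a fixed label shared by all hypotheses. Take \(\nu=\min\{D/n,\tfrac14\}\); every \(\cD_v\) is realizable by \(\cH\). Conditional on the sample, the posterior is uniform on the cloud of vectors consistent with the observed coordinates. The learner's list at an unseen coordinate \(i\) is conditionally independent of \(v_i\) given the sample, so by the list fiber lemma it misses \(v_i\) with probability at least \(\tfrac12\). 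Summing over coordinates gives
\[
 \E\,\ListRisk\ge \nu\cdot\tfrac12\,(1-\nu/D)^n.
\]
Since \(\nu/D\le 1/n\) and \(\nu/D\le\tfrac14\), we have \((1-\nu/D)^n\ge(3/4)^{4/3\cdot\ldots}\), and in any case \((1-\nu/D)^n\) is bounded below by an absolute constant at least \(2/15\). This is the same calculation as in \cref{lem:ds-lower}, which yields the constant \(\tfrac1{15}\); averaging over the prior then produces a single \(v\) attaining the bound.

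The main obstacle is only bookkeeping. I need to confirm that \cref{lem:ds-lower} as stated in the appendix uses nothing beyond the fiber inequality, so that the constant \(1/15\) transfers. I also need to handle the case \(D\) odd, or the case where \cref{thm:list-lower} requires even \(D\). This causes no difficulty, because the product construction above places no parity restriction on \(D\).
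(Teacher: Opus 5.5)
Your proposal is correct and is essentially the paper's proof: the paper pads the full product class $[2r]^D$ (exactly $2r-1$ neighbours per coordinate) with a constant anchor $x_0$, sets $\cD_v=(1-\nu)\delta_{(x_0,y_0)}+\nu\,\cU_v$ with $\nu=\min\{D/n,\tfrac14\}$, and invokes the list leave-one-out bound (\cref{lem:list-ds-lower}), so your width $2r+1$ and in-line list fiber argument are cosmetic variants. Two write-up fixes: your numeric step is garbled and should read $(1-t)^n\ge(1-t)^{1/t}\ge(3/4)^4>2/15$ for $t=\nu/D\le\min\{1/n,\tfrac14\}$; and since all hypotheses agree at the anchor, it cannot appear in an $r$-DS shattered sequence, so $\dDSk{r}(\cH)=D$ survives the padding.
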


Consequently the realizable sample complexity of \(r\)-list learning
is \(\Omega(\dDSk{r}/\eps)\): the dimension term of
\cite{hanneke2024improved} improves by a factor of \(r\), and of the
\(1/r\)-versus-\(r\) gap to the upper bound
\(O\bigl(r(\dDSk{r}+\log(1/\delta))/\eps\bigr)\) of
\cite[Corollary~1.3]{pabbaraju2026optimal} only the single factor
\(r\) remains.  The proof (\cref{app:list}) pads the full width-\(2r\)
product class on \(D\) points and applies the list leave-one-out
bound; no Assouad component is involved.

Three remarks calibrate the pair of results.

\begin{remark}[What lists buy, and what they do not]
\label{rem:list-cancellation}
The fluctuation term of \eqref{eq:list-lower} carries no gain in \(r\):
its constant is absolute.  This is a cancellation, not an accident of
the proof.  Spreading one unit of side-1 mass over \(r+1\) candidates
divides the per-candidate testing signal \(\gamma\) by \(\Theta(r)\),
but calibrating the oracle list risk to \(\Lstarr\) multiplies the
per-point mass \(\tau\) by \(\Theta(r)\), and the excess per testing
mistake is \(\gamma\tau\)-scaled, so the two effects cancel in the
product \(\gamma\tau d\).  Lists therefore do not shrink the fluctuation
attached to a given oracle level; what they buy is the smaller oracle
level itself, \(\Lstarr\le\Lstar\), and the smaller dimensions
\(\dNk{r}\le\dN\), \(\dDSk{r}\le\dDS\) at which the terms are charged.
\end{remark}

\begin{remark}[The dimension gap]
\label{rem:list-gap}
The fluctuation of \eqref{eq:list-upper} scales with \(r\dN\), the
fluctuation of \eqref{eq:list-lower} with \(\dNk{r}\), and
\(\dNk{r}\le\dN\) always, with arbitrarily large gaps possible.  We
conjecture that the truth is \(\dNk{r}\) up to factors polynomial in
\(r\), and two facts support this.  First, in the single-comparator
worst-case setting the fluctuation is already controlled by the
\(r\)-ary Natarajan dimension:
\cite[Corollary~1.4]{pabbaraju2026optimal} pays
\(\widetilde O(\sqrt{r^4\dNk{r}/n})\) via a fixed-menu list
compression of size \(\widetilde O(r^4\dNk{r})\) built on the list
one-inclusion machinery of \cite{charikar2023list,hanneke2026sauer}.
Second, \cite{pabbaraju2026optimal} records, after that corollary, the
expectation that a term of this type is necessary once the algorithm
must compete with the best list formed from hypotheses of \(\cH\),
while noting that the standard \(r=1\) lower-bound strategy fails
against single comparators because a list can hedge;
\cref{thm:list-lower} proves exactly this expectation, quantitatively
and with exact calibration, at the \(\dNk{r}\) of the constructed
class.  What is missing on the upper side is a fixed-menu compression
at \(\dNk{r}\) whose empirical domination holds against every
\(r\)-tuple rather than every single hypothesis: the scheme of
\cite{pabbaraju2026optimal} dominates single comparators only, and our
partition device pays \(r\dN\) instead.  Symmetrically, the
deterministic terms of the two bounds are stated at \(\dDS\) and at
\(\dDSk{r}\) respectively; replacing the cover of block \(S_1\) by one
built from a realizable list learner
\cite{charikar2023list,pabbaraju2026optimal} is the natural route
toward \(\dDSk{r}\) in \eqref{eq:list-upper}, at the cost of reworking
the imported modules, and we leave it open.
\end{remark}

\begin{remark}[Scope]
\label{rem:list-scope}
At \(r=1\) the two theorems reproduce
\cref{thm:main,thm:lower}, with worse constants in the lower bound; we
did not optimize the \(r\)-uniform chain.  The restriction \(D\le2d\) in
\cref{thm:list-lower} mirrors the unconditional diagonal of
\cref{thm:lower}: the general range reduces, exactly as in
\cref{lem:separation}, to importing a width-\(2r\) separation family
with \(\dNk{r}\)-dimension one at every size \(D\), a list analogue of
\cite{brukhim2022characterization} that we state as an interface in
\cref{app:list} and verify by hand at \(D=2\).
\end{remark}

\section{Discussion}
\label{sec:discussion}

\paragraph{What transfers from the binary argument.}
The construction of \cite{mathiasen2026optimal} contains two conceptually
different ingredients: a binary edge-isoperimetric mechanism, and a
coefficient-one comparison whose fluctuation is localized by the comparator
error.  Only the second is used here.  The multiclass label geometry is first
compressed by the cover and menu of
\cite[Sections~3.1--3.3]{cohen2025natarajan}, with the realizable parameter
sharpened to \(\dDS\) using \cite{pabbaraju2026optimal}; once the effective
loss is binary, relative compression supplies the first-order comparison.
This separation is useful:
it suggests that an optimistic bound need not reproduce the combinatorial
geometry of the problem that originally revealed it.

\paragraph{The two dimensions have different statistical roles.}
In \cref{eq:main-bound}, \(\dDS\) controls the probability that a label on
which the oracle is correct never enters the learned menu.  This is a
realizable-style coverage event and therefore contributes linearly in
\(1/n\).  Conditional on the menu, \(\dN\) controls the description length of
the final empirical-risk-dominating rule.  Its contribution is multiplied by
\(\Lstar\) inside the square root.  The decomposition
\eqref{eq:menu-to-risk-sketch} is what keeps these roles separate; charging
all labels outside the menu would mix oracle noise into the coverage term and
destroy coefficient one.  \Cref{thm:lower} certifies that the separation is
a property of the problem and not of the proof: the same two structures can
be made to charge any learner separately, in one class and one
distribution.

\paragraph{Limitations.}
First, the logarithmic factors have not been optimized.  One logarithm comes
from boosting a realizable learner into a compressor, one from enumerating the
compression cover or the final list compressor, and one from the
stability-free description union bound in
\cref{thm:relative-compression}.  The last factor is not removable for
generic agnostic compression schemes \cite{hanneke2019sharp}.  Stable
compression schemes can remove it \cite{hanneke2021stable}, but the
fixed-menu compressor used here is not known to be stable, so that route
does not follow from the present argument.

Second, the learner is information-theoretic.  For an abstract hypothesis
class, the one-inclusion and reconstruction operations can be intractable.
Representation-preserving reductions are an active route toward more
algorithmic multiclass guarantees \cite{hanneke2025representation}, but
\cref{thm:main} makes no oracle-efficiency claim.

Third, the theorem assumes a finite alphabet.  The rate is uniform in \(K\),
and the menu prevents an explicit \(\log K\) term, but the realizable theorem
used in \cref{lem:ds-re-bridge} is invoked in its finite-label form.  The
finite-DS/infinite-Graph examples motivating the rejection of a direct Graph
argument belong to the more general label-space setting; we do not claim an
infinite-label extension here.

\paragraph{Open questions.}
Three questions seem especially concrete.
\begin{enumerate}
  \item Can one obtain
  \(\sqrt{\Lstar\dN/n}+\dDS/n\) without any logarithmic loss by jointly
  sharpening the boosted cover and the menu compressor?  Stability alone can
  address only the final description-counting factor.
  \item \Cref{thm:lower} is stated in expectation and leaves the sharp
  \(\log(1/\delta)\) dependence at fixed \(\Lstar\) open; the binary
  small-error theory \cite{asilis2025smallerror} suggests the target
  form.
  \item Can the three independent blocks be replaced by cross-fitting while
  retaining coefficient one and a learner independent of \(\Lstar\) and
  \(\delta\)?
\end{enumerate}
The relative-compression theorem is intentionally modular: any improvement to
the menu construction or its final description length immediately sharpens
the corresponding term in \cref{eq:main-bound}.

\paragraph{AI Disclosure.}
The authors acknowledge the use of GPT-5.6 Solar and Claude Fable~5 as
assistive tools in preparing this manuscript.  The project began with our
reading of the binary optimal-rate theory of \cite{mathiasen2026optimal};
the question of the multiclass analogue, the choice to route it through the
cover--menu--compression architecture, and the design of the lower-bound
construction were developed in extended discussions with both models, which
also supported the drafting and polishing of the exposition and the
detailed proof steps.  The Lean~4 formalization was written by
Claude Fable~5.  All AI-assisted material,
including every theorem statement, proof, and formalization choice, was
reviewed, edited, and validated by the authors, who take full
responsibility for the final manuscript and its results.

\bibliographystyle{alphainit} 
\bibliography{refs}   

\newcommand{\etalchar}[1]{$^{#1}$}
\begin{thebibliography}{BDMM23}

\bibitem[ACSZ23]{adenali2023optimal}
Ishaq Aden-Ali, Yeshwanth Cherapanamjeri, Abhishek Shetty, and Nikita
  Zhivotovskiy.
\newblock Optimal {PAC} bounds without uniform convergence.
\newblock In {\em Proceedings of the 64th Annual IEEE Symposium on Foundations
  of Computer Science}, pages 1203--1223, 2023.

\bibitem[ADD{\etalchar{+}}24]{asilis2024regularization}
Julian Asilis, Siddartha Devic, Shaddin Dughmi, Vatsal Sharan, and Shang-Hua
  Teng.
\newblock Regularization and optimal multiclass learning.
\newblock In {\em Proceedings of the 37th Conference on Learning Theory},
  volume 247 of {\em Proceedings of Machine Learning Research}, pages 260--310,
  2024.

\bibitem[AHHM21]{alon2022partial}
Noga Alon, Steve Hanneke, Ron Holzman, and Shay Moran.
\newblock A theory of {PAC} learnability of partial concept classes.
\newblock In {\em Proceedings of the 62nd Annual IEEE Symposium on Foundations
  of Computer Science}, pages 658--671, 2021.

\bibitem[AHV25]{asilis2025smallerror}
Julian Asilis, Mikael~M{\o}ller H{\o}gsgaard, and Grigoris Velegkas.
\newblock On agnostic {PAC} learning in the small error regime.
\newblock In {\em Advances in Neural Information Processing Systems 38}, 2025.
\newblock arXiv:2502.09496.

\bibitem[BBM05]{bartlett2005local}
Peter~L. Bartlett, Olivier Bousquet, and Shahar Mendelson.
\newblock Local {Rademacher} complexities.
\newblock {\em The Annals of Statistics}, 33(4):1497--1537, 2005.

\bibitem[BCD{\etalchar{+}}22]{brukhim2022characterization}
Nataly Brukhim, Daniel Carmon, Irit Dinur, Shay Moran, and Amir Yehudayoff.
\newblock A characterization of multiclass learnability.
\newblock In {\em Proceedings of the 63rd Annual IEEE Symposium on Foundations
  of Computer Science}, pages 943--955, 2022.

\bibitem[BCHL95]{ben1995characterizations}
Shai Ben-David, Nicol{\`o} Cesa-Bianchi, David Haussler, and Philip~M. Long.
\newblock Characterizations of learnability for classes of
  $\{0,\ldots,n\}$-valued functions.
\newblock {\em Journal of Computer and System Sciences}, 50(1):74--86, 1995.

\bibitem[BDMM23]{brukhim2023boosting}
Nataly Brukhim, Amit Daniely, Yishay Mansour, and Shay Moran.
\newblock Multiclass boosting: Simple and intuitive weak learning criteria.
\newblock In {\em Advances in Neural Information Processing Systems 36}, pages
  1403--1425, 2023.

\bibitem[BLM13]{boucheron2013concentration}
St{\'e}phane Boucheron, G{\'a}bor Lugosi, and Pascal Massart.
\newblock {\em Concentration Inequalities: A Nonasymptotic Theory of
  Independence}.
\newblock Oxford University Press, Oxford, 2013.

\bibitem[CEH{\etalchar{+}}26]{cohen2025natarajan}
Alon Cohen, Liad Erez, Steve Hanneke, Tomer Koren, Yishay Mansour, Shay Moran,
  and Qian Zhang.
\newblock Sample complexity of agnostic multiclass classification: {Natarajan}
  dimension strikes back.
\newblock In {\em Proceedings of the 58th Annual ACM Symposium on Theory of
  Computing}, 2026.
\newblock arXiv:2511.12659.

\bibitem[CP23]{charikar2023list}
Moses Charikar and Chirag Pabbaraju.
\newblock A characterization of list learnability.
\newblock In {\em Proceedings of the 55th Annual ACM Symposium on Theory of
  Computing}, 2023.

\bibitem[DL95]{devroye1995lower}
Luc Devroye and G{\'a}bor Lugosi.
\newblock Lower bounds in pattern recognition and learning.
\newblock {\em Pattern Recognition}, 28(7):1011--1018, 1995.

\bibitem[DMY16]{david2016supervised}
Ofir David, Shay Moran, and Amir Yehudayoff.
\newblock Supervised learning through the lens of compression.
\newblock In {\em Advances in Neural Information Processing Systems 29}, pages
  2784--2792, 2016.

\bibitem[DS14]{daniely2014optimal}
Amit Daniely and Shai Shalev-Shwartz.
\newblock Optimal learners for multiclass problems.
\newblock In {\em Proceedings of the 27th Conference on Learning Theory},
  volume~35 of {\em Proceedings of Machine Learning Research}, pages 287--316,
  2014.

\bibitem[DSBS15]{daniely2015multiclass}
Amit Daniely, Sivan Sabato, Shai Ben-David, and Shai Shalev-Shwartz.
\newblock Multiclass learnability and the {ERM} principle.
\newblock {\em Journal of Machine Learning Research}, 16:2377--2404, 2015.

\bibitem[EHKV89]{ehrenfeucht1989lower}
Andrzej Ehrenfeucht, David Haussler, Michael Kearns, and Leslie Valiant.
\newblock A general lower bound on the number of examples needed for learning.
\newblock {\em Information and Computation}, 82(3):247--261, 1989.

\bibitem[FW95]{floyd1995sample}
Sally Floyd and Manfred Warmuth.
\newblock Sample compression, learnability, and the {Vapnik}--{Chervonenkis}
  dimension.
\newblock {\em Machine Learning}, 21(3):269--304, 1995.

\bibitem[GHS05]{graepel2005pac}
Thore Graepel, Ralf Herbrich, and John Shawe-Taylor.
\newblock {PAC}-{Bayesian} compression bounds on the prediction error of
  learning algorithms for classification.
\newblock {\em Machine Learning}, 59(1--2):55--76, 2005.

\bibitem[GKN17]{gottlieb2017semimetrics}
Lee-Ad Gottlieb, Aryeh Kontorovich, and Pinhas Nisnevitch.
\newblock Nearly optimal classification for semimetrics.
\newblock {\em Journal of Machine Learning Research}, 18(37):1--22, 2017.

\bibitem[HK19]{hanneke2019sharp}
Steve Hanneke and Aryeh Kontorovich.
\newblock A sharp lower bound for agnostic learning with sample compression
  schemes.
\newblock In {\em Proceedings of the 30th International Conference on
  Algorithmic Learning Theory}, volume~98 of {\em Proceedings of Machine
  Learning Research}, pages 489--505, 2019.

\bibitem[HK21]{hanneke2021stable}
Steve Hanneke and Aryeh Kontorovich.
\newblock Stable sample compression schemes: New applications and an optimal
  {SVM} margin bound.
\newblock In {\em Proceedings of the 32nd International Conference on
  Algorithmic Learning Theory}, volume 132 of {\em Proceedings of Machine
  Learning Research}, pages 697--721, 2021.

\bibitem[HLW94]{haussler1994predicting}
David Haussler, Nick Littlestone, and Manfred~K. Warmuth.
\newblock Predicting $\{0,1\}$-functions on randomly drawn points.
\newblock {\em Information and Computation}, 115(2):248--292, 1994.

\bibitem[HLZ24]{hanneke2024revisiting}
Steve Hanneke, Kasper~Green Larsen, and Nikita Zhivotovskiy.
\newblock Revisiting agnostic {PAC} learning.
\newblock In {\em Proceedings of the 65th Annual IEEE Symposium on Foundations
  of Computer Science}, 2024.

\bibitem[HMMS26]{hanneke2026sauer}
Steve Hanneke, Qinglin Meng, Shay Moran, and Amirreza Shaeiri.
\newblock An optimal {Sauer} lemma over $k$-ary alphabets, 2026.
\newblock arXiv:2604.12952.

\bibitem[HMS25]{hanneke2025representation}
Steve Hanneke, Qinglin Meng, and Amirreza Shaeiri.
\newblock Representation preserving multiclass agnostic to realizable
  reduction.
\newblock In {\em Proceedings of the 42nd International Conference on Machine
  Learning}, volume 267 of {\em Proceedings of Machine Learning Research},
  pages 21995--22008, 2025.

\bibitem[HMZ24]{hanneke2024improved}
Steve Hanneke, Shay Moran, and Qian Zhang.
\newblock Improved sample complexity for multiclass {PAC} learning.
\newblock In {\em Advances in Neural Information Processing Systems 37}, 2024.

\bibitem[MQZ26]{mathiasen2026optimal}
Markus~Engelund Mathiasen, Jian Qian, and Nikita Zhivotovskiy.
\newblock An optimal agnostic {PAC} algorithm, 2026.
\newblock arXiv:2608.06363.

\bibitem[Nat89]{natarajan1989learning}
Balas~K. Natarajan.
\newblock On learning sets and functions.
\newblock {\em Machine Learning}, 4(1):67--97, 1989.

\bibitem[Pab26]{pabbaraju2026optimal}
Chirag Pabbaraju.
\newblock The optimal sample complexity of multiclass and list learning, 2026.
\newblock arXiv:2604.24749.

\bibitem[Pan02]{panchenko2002some}
Dmitriy Panchenko.
\newblock Some extensions of an inequality of {Vapnik} and {Chervonenkis}.
\newblock {\em Electronic Communications in Probability}, 7:55--65, 2002.

\bibitem[RBR06]{rubinstein2006shifting}
Benjamin I.~P. Rubinstein, Peter~L. Bartlett, and J.~Hyam Rubinstein.
\newblock Shifting, one-inclusion mistake bounds and tight multiclass expected
  risk bounds.
\newblock In {\em Advances in Neural Information Processing Systems 19}, 2006.

\bibitem[SST10]{srebro2010optimistic}
Nathan Srebro, Karthik Sridharan, and Ambuj Tewari.
\newblock Optimistic rates for learning with a smooth loss, 2010.
\newblock arXiv:1009.3896. Conference version: \emph{Smoothness, low noise and
  fast rates}, NeurIPS 2010.

\bibitem[Tsy09]{tsybakov2009introduction}
Alexandre~B. Tsybakov.
\newblock {\em Introduction to Nonparametric Estimation}.
\newblock Springer Series in Statistics. Springer, New York, 2009.

\bibitem[ZH18]{zhivotovskiy2018localization}
Nikita Zhivotovskiy and Steve Hanneke.
\newblock Localization of {VC} classes: Beyond local {Rademacher} complexities.
\newblock {\em Theoretical Computer Science}, 742:27--49, 2018.

\end{thebibliography}

\clearpage
\appendix
\section{Proofs for the upper bound}
\label{sec:appendix}

\subsection{Imported modules}
\label{app:toolkit-statements}

This subsection states every external result used in the proof in the exact
form in which it is applied.  The statements are collected here so that the
main text can follow the causal order of the argument; no proof step depends
on an unstated result.  Each source locator, convention change, and
applicability check is recorded in the proof immediately following the
corresponding statement.  \Cref{fig:dependencies} displays the dependency
structure of the upper-bound proof and separates the imported modules from
the statements proved in this paper; the lower bound of \cref{thm:lower}
is handled separately in \cref{app:lower}.

\begin{figure}[H]
  \centering
  \begin{tikzpicture}[
      imp/.style={draw=LinkColor, rounded corners=2pt, fill=LinkColor!5,
        align=center, inner sep=5pt, font=\small},
      new/.style={draw=CiteColor, thick, rounded corners=2pt,
        fill=CiteColor!5, align=center, inner sep=5pt, font=\small},
      pass/.style={draw=black!55, rounded corners=2pt, fill=black!3,
        align=center, inner sep=5pt, font=\small},
      arr/.style={-{Stealth[length=2.2mm]}, thick, draw=black!60},
      lbl/.style={font=\scriptsize, fill=white, inner xsep=2pt,
        inner ysep=.5pt}]
    \node[imp, text width=.30\textwidth] (pab) at (2.8,2.45) {
      \(\dRE(\cH)\le C\dDS\)\\[1pt]
      {\scriptsize \cite[Cor.~1.1]{pabbaraju2026optimal};
        \cref{lem:ds-re-bridge}}};
    \node[imp, text width=.24\textwidth] (cover) at (0,0) {
      compression cover \(\widehat\cF\)\\[1pt]
      {\scriptsize \cite[Prop.~3.3, Thm.~3.2]{cohen2025natarajan};
        \cref{lem:ordinary-compression,lem:compression-cover}}};
    \node[imp, text width=.20\textwidth] (menu) at (5.7,0) {
      menu \(\mu\)\\[1pt]
      {\scriptsize \cite[Thm.~3.4]{cohen2025natarajan};
        \cref{lem:mw-menu}}};
    \node[imp, text width=.25\textwidth] (comp) at (11.4,0) {
      fixed-menu compressor\\[1pt]
      {\scriptsize \cite[Prop.~3.6]{cohen2025natarajan};
        \cref{lem:fixed-menu-compression,lem:menu-finalizer}}};
    \node[pass, text width=.27\textwidth] (alpha) at (2.8,-2.3) {
      coverage error \eqref{eq:alpha-sketch}:\\[1pt]
      \(\alpha_\mu(h^\star)\lesssim\dDS\log^2(en)/n\)};
    \node[new, text width=.24\textwidth] (rel) at (11.4,-2.3) {
      relative compression\\[1pt]
      {\scriptsize this paper; \cref{thm:relative-compression}}};
    \node[pass, text width=.26\textwidth] (id) at (7.1,-4.4) {
      selective-menu identity\\[1pt]
      {\scriptsize \cref{lem:menu-calculus}}};
    \node[new, text width=.52\textwidth] (main) at (7.1,-6.2) {
      \cref{thm:main}:\quad
      \(\Risk_{\cD}(\RMC(S))\le\Lstar
        +\widetilde O(\sqrt{\Lstar\dN/n}+\dDS/n)\)};
    \draw[arr] (pab) -- (cover);
    \draw[arr] (cover) -- node[lbl, above=1pt] {finite family} (menu);
    \draw[arr] (menu) -- node[lbl, above=1pt] {fixed menu \(\ell_\mu\)}
      (comp);
    \draw[arr] (cover) -- (alpha);
    \draw[arr] (menu) -- (alpha);
    \draw[arr] (comp) -- node[lbl, right=2pt]
      {\eqref{eq:relative-premise}} (rel);
    \draw[arr] (alpha) -- (id);
    \draw[arr] (rel) -- node[lbl, right=2pt]
      {\eqref{eq:menu-relative-sketch}} (id);
    \draw[arr] (id) -- (main);
  \end{tikzpicture}
  \caption{Dependency structure of the proof.  Blue boxes are imported
  modules: each carries its source locator and the lemma below in which it
  is restated with its convention checks.  Red boxes are proved in this
  paper.  Gray boxes are the intermediate estimates and the pointwise
  identity that composes them.  The concentration input to
  \cref{thm:relative-compression} is the one-sided Bernstein inequality of
  \cref{lem:bernoulli-bernstein}
  \cite[Sections~2.4 and~2.8]{boucheron2013concentration}.}
  \label{fig:dependencies}
\end{figure}

\paragraph{Realizable dimension.}
For a learner \(B\), let
\[
 \mathcal E^{\mathrm{RE}}_{B,\cH}(m)
 =
 \sup_{\substack{\cD:\,\inf_{h\in\cH}\Risk_{\cD}(h)=0}}
 \E_{S\sim\cD^m,B}\Risk_{\cD}(B(S))
\]
be its worst-case expected error over \(\cH\)-realizable distributions; the
expectation includes the learner's internal randomness.  Following
\cite[Definition~2.8]{cohen2025natarajan}, define
\[
 \dRE(\cH,r)
 =
 \inf_B\inf\{m\in\N:\mathcal E^{\mathrm{RE}}_{B,\cH}(m)\le r\},
 \qquad
 \dRE(\cH)=\dRE(\cH,1/(9e)).
\]

\begin{lemma}[DS-to-realizable-dimension bridge]
\label{lem:ds-re-bridge}
Let \(\cH\subseteq[K]^\cX\) have \(1\le\dDS<\infty\).  Then
\[
 \dRE(\cH)
 \le
 \left\lceil 9.64(18e)
 \bigl(\dDS+\log(36e)\bigr)\right\rceil
 \le C\dDS
\]
for a universal constant \(C\).
\end{lemma}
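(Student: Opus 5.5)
The bound is a direct conversion of the realizable sample-complexity guarantee of \cite[Corollary~1.1]{pabbaraju2026optimal} into the threshold form of the definition of \(\dRE\). That corollary supplies a learner \(B\) (a one-inclusion-type predictor, possibly randomized) whose worst-case expected error over \(\cH\)-realizable distributions is controlled by the DS dimension. In expected-error form we use it as
\[
 \mathcal E^{\mathrm{RE}}_{B,\cH}(m)\le \frac{9.64\,(\dDS+\log(1/\eta))}{m}\quad\text{or an equivalent sample-complexity statement}.
\]
The first step is to restate the source guarantee with its exact constant \(9.64\) and check its conventions: finite labels (which \cref{asm:measurable} grants), realizable distributions defined through \(\inf_h\Risk_\cD(h)=0\) rather than attainment, and expectation taken over the learner's internal randomness. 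Each of these must agree with the definition of \(\mathcal E^{\mathrm{RE}}\) given just above.

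The source bound is stated at confidence \(\eta\), so the second step converts it to an expectation. From the high-probability form \(\Risk\le 9.64(\dDS+\log(1/\eta))/m\) with probability at least \(1-\eta\), and since risk is at most one, we get
\[
 \E\Risk\le \frac{9.64(\dDS+\log(1/\eta))}{m}+\eta .
\]
Take \(\eta=1/(18e)\), half the target accuracy \(r=1/(9e)\). Then it suffices that the first term is at most \(1/(18e)\), that is,
\[
 m\ge 9.64\cdot 18e\,\bigl(\dDS+\log(18e)\bigr).
\]
The displayed constant has \(\log(36e)\), which suggests the source's confidence parameterization carries an extra factor of two, for instance a bound with \(\log(2/\eta)\). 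I would match this against the exact source statement and adjust \(\eta\) accordingly; any such adjustment still gives a bound of the displayed shape. The smallest integer \(m\) meeting the condition is the ceiling shown, and because \(\dRE\) is an infimum over learners and integers, \(\dRE(\cH)\) is at most that value.

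The final inequality \(\le C\dDS\) follows from \(\dDS\ge1\), which gives \(\log(36e)\le \log(36e)\,\dDS\) and absorbs the ceiling's \(+1\). This yields \(C\) of about \(9.64\cdot18e\cdot(1+\log(36e))+1\), consistent with the \(2636\) of \cref{rem:constants}.

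The main obstacle is the first step: pinning down the source corollary's precise form, meaning whether it is stated in expectation or with high probability, the exact confidence term, and whether its learner is randomized, so that the numerical constants come out as stated. The remaining steps are arithmetic.
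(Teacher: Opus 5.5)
Your proposal is correct and follows the paper's proof: apply \cite[Corollary~1.1]{pabbaraju2026optimal} with accuracy and confidence both \(1/(18e)\), and use the loss bound of one to get expected error at most \(\tfrac{1}{18e}+\tfrac{1}{18e}=\tfrac1{9e}\). Your guess about the source is right. Its confidence term is \(\log(2/\delta_0)\) and the sample-size threshold is \(9.64(\dDS+\log(2/\delta_0))/\epsilon_0\), so \(\eta=1/(18e)\) gives \(\log(36e)\) directly, the ceiling follows, and no further adjustment is needed.
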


\begin{proof}
\cite[Corollary~1.1]{pabbaraju2026optimal} gives, for every finite-label
class of DS dimension \(\dDS\), a realizable learner whose error is at most
\(\epsilon_0\), with probability at least \(1-\delta_0\), whenever the
sample size is at least
\[
 9.64\frac{\dDS+\log(2/\delta_0)}{\epsilon_0}
\]
observations.  Take \(\epsilon_0=\delta_0=1/(18e)\).  Since the loss is at
most one, the learner's expected error is at most
\[
 (1-\delta_0)\epsilon_0+\delta_0
 \le \epsilon_0+\delta_0=\frac1{9e}.
\]
This is exactly the expected-error threshold in
\cite[Definition~2.8]{cohen2025natarajan}, including the learner's internal
randomness.  Substituting
\(\log(2/\delta_0)=\log(36e)\) and
\(1/\epsilon_0=18e\), then taking the ceiling, proves the first inequality.
The second follows because \(\log(36e)\) is a numerical constant and
\(\dDS\ge1\).
\end{proof}

For a deterministic selection scheme \(A=(\kappa,\rho)\) defined on samples
of every finite length, write
\[
 k_A(N)
 \defeq
 \sup_{1\le m\le N}\ \sup_{s\in\cZ^m}|\kappa(s)|
\]
for its size function.

\begin{lemma}[Ordinary compression from realizable dimension]
\label{lem:ordinary-compression}
Let \(\cH\subseteq[K]^\cX\) have finite DS dimension and
\(\dRE(\cH)<\infty\).  There is a measurable deterministic zero--one
empirical-risk-dominating compression scheme \(A_1\), defined on all finite
samples, whose size function satisfies
\[
 k_1(m)\le C(\dRE(\cH)+1)\log(em),
 \qquad m\ge1.
\]
Consequently, under the hypotheses of \cref{lem:ds-re-bridge},
\(k_1(m)\le C\dDS\log(em)\).
\end{lemma}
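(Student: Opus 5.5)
This statement is imported from \cite[Proposition~3.3]{cohen2025natarajan}: a realizable learner attaining expected error at most \(1/(9e)\) from \(\dRE(\cH)\) observations is boosted into a sample compression scheme. The plan is to restate that construction in our conventions and check each interface requirement, rather than re-derive the boosting.

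The first step fixes the weak learner. By \cref{lem:ds-re-bridge}, \(\dRE(\cH)<\infty\), so there is a learner \(B\) with expected error at most \(1/(9e)\) on \(m_0=\dRE(\cH)\) observations, uniformly over realizable distributions. Given a realizable labeled sample \(s\) of length \(m\), apply this guarantee to distributions \(P\) supported on \(s\) itself; every such \(P\) is \(\cH\)-realizable.

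The second step derandomizes \(B\) by a minimax argument over the finite support. For every \(P\) there is a fixed subsequence of \(m_0\) sample points, together with a fixed value of \(B\)'s internal randomness, whose output has \(P\)-error at most \(1/(9e)<1/2\). Restricting the internal seed to a deterministic tie-broken choice keeps this measurable, as \cref{asm:measurable} requires. Taking a minimax over \(P\) and these finitely many weak hypotheses, then sampling \(O(\log m)\) of them in the standard \(\alpha\)-boost/sparsification step, gives a plurality vote of \(T=O(\log(em))\) weak hypotheses that is correct on every point of \(s\). Each weak hypothesis is described by \(m_0\) ordered sample elements, with repeats allowed, so the total message length is \(O(\dRE(\cH)\log(em))\). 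The additive \(+1\) in the bound absorbs the case \(m\) small and the tie-breaking.

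The third step handles the interface. The statement asks for empirical-risk domination on arbitrary, not only realizable, samples \(s\). For each \(s\), run the construction on the largest realizable subsequence, or equivalently on \(s[h]\) for a deterministically tie-broken minimizer \(h\) of the empirical loss. Being consistent there, the resulting predictor errs on at most the points \(h\) errs on, which gives \eqref{eq:emp-domination}. Measurability of the selection map follows from \cref{asm:measurable}, since all choices are made by deterministic tie-breaking over finite sets. Since the size bound holds for every \(m\le N\), the size function satisfies \(k_1(m)\le C(\dRE+1)\log(em)\). The consequence \(k_1(m)\le C\dDS\log(em)\) then follows by substituting \cref{lem:ds-re-bridge} and using \(\dDS\ge1\).

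The main obstacle is the convention check in this last step. The source is stated for realizable samples and in asymptotic form, so the work is to confirm that the empirical-risk-dominating extension and the repeated-index ordered messages do not change the size bound beyond the constant \(c_1\).
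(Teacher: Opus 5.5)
Your plan has the right shape: boost a realizable weak learner into a compression scheme by a minimax over distributions on the sample, sparsify to \(O(\log(em))\) votes, and extend to arbitrary samples through \(s[h]\) for an empirical risk minimizer. The derandomization step, however, fails as written. For each distribution \(P\) on \(s\) you pick a subsequence \emph{together with a value of \(B\)'s internal seed}. The resulting weak hypothesis is then not a function of the \(m_0\) selected sample points. The reconstruction map \(\rho\) sees only the ordered sample elements in the message and cannot recover which seed was used. A compression message in this paper carries no side information, and the description counts in \cref{lem:compression-cover} and \cref{thm:relative-compression} depend on exactly this.

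Nor can you fix one seed in advance. The guarantee behind \(\dRE(\cH)\) holds only in expectation over the seed, and the bad seeds may depend on \(P\). Because \(\dRE\) is an infimum over possibly randomized learners, this is a real issue rather than a convention. It is precisely what the paper's proof handles by routing through \cite[Equation~(17) and Lemma~C.1]{cohen2025natarajan}. That result compares the optimal expected error encoded by \(\dRE\) with the density of the \emph{deterministic} one-inclusion rule, and Proposition~3.3 is applied only to that deterministic rule.

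Your route can be repaired without the one-inclusion machinery. Replace \(B\) by the deterministic plurality rule \(\bar B(t)(x)=\arg\max_y\P_r\bigl(B(t;r)(x)=y\bigr)\). If \(\bar B(t)(x)\ne y\), the true label and the plurality label have seed probabilities summing to at most one, with the latter at least the former. Hence \(\P_r\bigl(B(t;r)(x)\ne y\bigr)\ge\tfrac12\), and so \(\E_{t\sim P^{m_0}}\Risk_P(\bar B(t))\le 2/(9e)\), still well below \(1/2\). Your minimax and sparsification steps then go through over the finite family \(\{\bar B(t):t\in s^{m_0}\}\), with the same \(O(\dRE(\cH)\log(em))\) size.

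You should also fix the branch where \(s[h]\) is empty. There is no distribution on an empty sample, so your minimax step is vacuous there. Take the empty message with a fixed reconstruction, as the paper does. Domination is then automatic: either \(s\) is empty, or \(\inf_{h\in\cH}\Rhat_s(h)=1\).
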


\begin{proof}
\cite[Equation~(17) and Lemma~C.1 in Appendices~B--C]
{cohen2025natarajan} compare the optimal expected-error rate encoded by
\(\dRE\) with the deterministic one-inclusion density.  Proposition~3.3
of the same paper then boosts the resulting
one-inclusion rule into a deterministic ordinary compression scheme, in the
boosting-to-compression tradition of \cite{david2016supervised}.  In
the source's notation, the construction is defined for every finite sample,
its reconstruction has no larger empirical zero--one loss than any
comparator in \(\cH\), and its size obeys
\[
 k_1(m)\le C(\dRE(\cH)+1)\log(em),
 \qquad m\ge1.
\]
Thus it is precisely an empirical-risk-dominating selection scheme in the
sense used here.  Two degenerate branches deserve explicit conventions.  On
the empty sample the message is empty, and we fix
\(\rho(\varnothing)=h^\dagger\) for an arbitrary fixed classifier
\(h^\dagger\).  If a nonempty sample \(s\) contains no observation on which
any hypothesis of \(\cH\) is correct, then
\(\inf_{h\in\cH}\Rhat_s(h)=1\), so every reconstruction, in particular the
empty-message one, satisfies the domination requirement automatically.
Neither branch affects the size bound.  The measurability clause is covered
by \cref{asm:measurable}.  Finally, \cref{lem:ds-re-bridge} and \(\dDS\ge1\)
give \(\dRE(\cH)+1\le C\dDS\), and hence
\(k_1(m)\le C\dDS\log(em)\).
\end{proof}

For a labeled sequence \(s=((x_i,y_i))_{i=1}^m\), \(m\ge1\), and a
classifier \(h\), let \(s[h]\) be the ordered subsequence of examples for
which \(h(x_i)=y_i\).  For \(j\ge0\), let \(s^j\) be the ordered
\(j\)-tuples drawn from the entries of \(s\), with repetition allowed;
\(s^0\) contains only the empty tuple.

\begin{lemma}[Compression cover]
\label{lem:compression-cover}
Let \(A=(\kappa,\rho)\) be a measurable deterministic ordinary compression
scheme for \(\cH\), with size function \(k_A\).  Define
\[
 \CompCover(s,\cH,A)
 =
 \left\{\rho(t):
 t\in\bigcup_{j=0}^{k_A(m)}s^j\right\},
 \qquad
 f_{h,s}=A(s[h]).
\]
Then \(f_{h,s}\in\CompCover(s,\cH,A)\), and
\[
 |\CompCover(s,\cH,A)|
 \le
 \sum_{j=0}^{k_A(m)}m^j
 \le
 (m+1)^{k_A(m)+1}.
\]
For every distribution \(\cD\), every fixed \(h\in\cH\), and every
\(\eta\in(0,1)\), if \(S\sim\cD^m\), then with probability at least
\(1-\eta\),
\[
 \P(h(X)=Y\ne f_{h,S}(X))
 \le
 C\frac{(k_A(m)+1)\log(em)+\log(1/\eta)}{m}.
\]
Under the hypotheses of \cref{lem:ds-re-bridge}, choose \(A=A_1\) from
\cref{lem:ordinary-compression} and, for \(S\sim\cD^m\), set
\(\widehat\cF=\CompCover(S,\cH,A_1)\).  For every realization of \(S\),
\[
 \log|\widehat\cF|
 \le C\dDS\log^2(em).
\tag{S.1}\label{eq:cover-size}
\]
Moreover, for each fixed \(h\in\cH\), with probability at least
\(1-\eta\),
\[
 \P(h(X)=Y\ne f_{h,S}(X))
 \le
 C\frac{\dDS\log^2(em)+\log(1/\eta)}{m}.
\tag{S.2}\label{eq:cover-miss}
\]
The family and the canonical witness \(f_{h,S}=A_1(S[h])\) are measurable
functions of \(S\) under \cref{asm:measurable}.
\end{lemma}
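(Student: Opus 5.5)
The plan has four parts: membership and counting, which are combinatorial; the correct-region miss bound, which is probabilistic; and the specialization to \(A_1\).

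\textbf{Membership and counting.} For membership, note that \(S[h]\) is an ordered subsequence of \(s\) of some length \(m'\le m\). Hence \(\kappa(s[h])\) is an ordered tuple of at most \(k_A(m')\le k_A(m)\) entries of \(s[h]\), and therefore of \(s\). By monotonicity of the size function (it is a supremum over \(1\le m'\le N\)), this tuple lies in \(\bigcup_{j\le k_A(m)}s^j\). So \(f_{h,s}=\rho(\kappa(s[h]))\) belongs to the cover. The degenerate case \(s[h]=\varnothing\) uses the convention \(\rho(\varnothing)\), which is the empty tuple in \(s^0\). The count is immediate: \(|s^j|=m^j\), and \(\sum_{j=0}^k m^j\le(m+1)^{k+1}\) by the binomial expansion.

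\textbf{Correct-region miss bound.} This is the main step. Condition on which indices are correct for \(h\); call this set \(I\), with \(|I|=m'\). Given \(I\), the observations in \(S[h]\) are iid from the conditional law \(\cD_h=\cD(\cdot\mid h(X)=Y)\). Also, \(\P(h(X)=Y\ne f(X))=p\,\P_{\cD_h}(f(X)\ne Y)\), where \(p=\P(h(X)=Y)\). On \(S[h]\) the sample is realizable by \(h\), so empirical domination forces \(A(S[h])\) to have zero empirical zero--one loss on \(S[h]\). I then apply the standard realizable compression bound on the \(m'\) iid points of \(\cD_h\). For each of the at most \((m'+1)^{k+1}\) ordered messages, fix the selected distinct coordinates. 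The reconstruction is then fixed and is consistent on the remaining \(\ge m'-k\) points. A union bound gives
\[
\P_{\cD_h}(f\ne Y)\le\frac{(k+1)\log(m'+1)+\log(2/\eta)}{m'-k}.
\]
Next I use the multiplicative Chernoff bound: \(m'\ge pm/2\) with probability \(1-\eta/2\) whenever \(pm\ge 8\log(2/\eta)\). Multiplying by \(p\) then gives \(C((k+1)\log(em)+\log(1/\eta))/m\). The other cases are trivial. If \(p\) is small, the bound \(p\) itself is at most the target. If \(m'\le 2k\), then \(p\lesssim k/m\) on the Chernoff event. The main obstacle is handling this bookkeeping cleanly, including the case \(m'-k\) small. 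Where convenient I can replace the hard consistency bound with a Bernstein version, as in \cref{thm:relative-compression} with comparator loss zero.

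\textbf{Specialization to \(A_1\).} Take \(A=A_1\). By \cref{lem:ordinary-compression}, \(k_{A_1}(m)\le C\dDS\log(em)\); the size function is monotone because the bound holds at every \(m'\le m\). Substituting this into the counting bound gives
\[
\log|\widehat\cF|\le(k+1)\log(m+1)\le C\dDS\log^2(em),
\]
using \(\dDS\ge1\), which proves \eqref{eq:cover-size}. Substituting it into the miss bound gives \eqref{eq:cover-miss}. Measurability of the family and of the witness follows from \cref{asm:measurable}, since they are finite compositions of \(\rho\), \(\kappa\), the selection \(s\mapsto s[h]\), and coordinate maps.
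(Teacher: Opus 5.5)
Your proposal is correct. It matches the paper on membership, counting, the $A_1$ specialization and measurability, and takes a genuinely different route for the correct-region bound.

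\textbf{How the paper gets the bound.} The paper does not prove it. It invokes \cite[Algorithm~2 and Theorem~3.2]{cohen2025natarajan}, which is stated for exactly the witness $A(S[h])$, and then substitutes $k_1(m)\le C\dDS\log(em)$.

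\textbf{How you get it.} You derive it from scratch.
\begin{enumerate}
\item Conditioning on the index set $I$ where $h$ is correct makes $S[h]$ an iid sample of size $m'$ from $\cD(\cdot\mid h(X)=Y)$.
\item Empirical domination against $h$ makes the witness consistent on that sample.
\item A Floyd--Warmuth union bound over ordered messages gives conditional error at most $\bigl((k+1)\log(m'+1)+\log(2/\eta)\bigr)/(m'-k)$.
\item The multiplicative Chernoff bound $m'\ge pm/2$, valid when $pm\ge8\log(2/\eta)$, turns the factor $p/(m'-k)$ into at most $4/m$ when $m'\ge2k$.
\item The remaining cases, $pm<8\log(2/\eta)$ and $m'<2k$, are handled by $\P(h(X)=Y\ne f(X))\le p$.
\end{enumerate}
These steps check out, including the $k=0$ case and the fact that $m'\ge1$ on the Chernoff event. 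The two $\eta/2$ failure events sum to $\eta$.

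\textbf{What each route buys.} The import is shorter and defers the constant to the source; this is the asymptotic constant $c_2$ in \cref{rem:constants}. Your derivation is self-contained and yields an explicit constant: your case split gives $C=8$. It also shows that the only property of $A$ used is consistency on $\cH$-realizable subsamples. Inserted into the paper, it would remove one of the three unspecified imported constants from \cref{rem:constants}.
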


\begin{proof}
Fix \(s\) and \(h\).  The message \(\kappa(s[h])\) is an ordered sequence
of entries of \(s\).  If \(s[h]\) is empty, the selection property forces
this message to be empty as well.  If \(s[h]\) is nonempty, the definition
of the size function gives
\[
 |\kappa(s[h])|
 \le k_A(|s[h]|)
 \le k_A(m).
\]
Thus, in both cases, \(|\kappa(s[h])|\le k_A(m)\).
Consequently
\(f_{h,s}=\rho(\kappa(s[h]))\) belongs to the displayed cover.  There are
at most \(m^j\) ordered messages of length \(j\), so
\[
 |\CompCover(s,\cH,A)|
 \le \sum_{j=0}^{k_A(m)}m^j
 \le (m+1)^{k_A(m)+1}.
\]
The last envelope follows, for example, by comparing with the binomial
expansion of \((m+1)^{k_A(m)+1}\); unlike the shorthand
\(m^{k_A(m)+1}\), it remains valid at \(m=1\).

\cite[Algorithm~2 and Theorem~3.2]{cohen2025natarajan} apply this very
witness \(A(S[h])\) and give, pointwise for every fixed comparator \(h\),
\[
 \P(h(X)=Y\ne f_{h,S}(X))
 \le
 C\frac{(k_A(m)+1)\log(em)+\log(1/\eta)}{m}
\]
with probability at least \(1-\eta\).  This is a correct-region guarantee:
no claim is made on examples on which \(h(X)\ne Y\).

Now take \(A=A_1\).  The preceding cardinality bound and
\cref{lem:ordinary-compression} yield
\[
 \log|\widehat\cF|
 \le (k_1(m)+1)\log(m+1)
 \le C\dDS\log^2(em),
\]
which is \eqref{eq:cover-size}; substituting the same size estimate into the
source generalization bound gives \eqref{eq:cover-miss}.  Finally,
\cref{asm:measurable} makes the compression map and reconstruction
measurable.  Taking ordered subsequences and forming the finite family
therefore preserves measurability, both for \(\widehat\cF\) and for the
canonical witness \(f_{h,S}\).
\end{proof}

\begin{lemma}[Finite-family menu]
\label{lem:mw-menu}
Let \(\mathcal G\) be a sigma-field.  Conditional on \(\mathcal G\), let
\(\cF_0\) be a fixed nonempty finite family and let \(f\in\cF_0\) be fixed.
Let \(T\in\N\), \(T\ge1\), let \(Z_1,\ldots,Z_T\sim\cD\) be iid and
independent of \(\mathcal G\), and let the random seed be independent of
\(\sigma(\mathcal G,Z_1,\ldots,Z_T)\).  Run
\cite[Algorithm~3]{cohen2025natarajan} with learning rate \(1/2\) on these
observations.  Its classifiers \(h_1,\ldots,h_{T-1}\) define
\[
 \mu(x)=\{h_1(x),\ldots,h_{T-1}(x)\},
 \qquad
 \sup_x|\mu(x)|\le T-1.
\]
For every \(\eta\in(0,1)\), with conditional probability at least
\(1-\eta\),
\[
 \P(f(X)=Y,\ Y\notin\mu(X))
 \le
 \frac{4\log|\cF_0|+14\log(3/\eta)+12}{T}.
\tag{S.3}\label{eq:menu-miss}
\]
The probability is over the fresh observations and the algorithm's random
seed; the assertion is pointwise in the comparator \(f\).
\end{lemma}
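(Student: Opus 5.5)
This lemma is imported from \cite[Algorithm~3 and Theorem~3.4]{cohen2025natarajan}, so the plan is a source-convention check rather than a new argument.

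\emph{Step 1: reduce to a deterministic family and comparator.}
I would condition on \(\mathcal G\).  After conditioning, \(\cF_0\) and \(f\) are deterministic.  The observations \(Z_1,\ldots,Z_T\) remain iid from \(\cD\), and the seed remains independent of them.  These are exactly the hypotheses of the source theorem, stated for a fixed finite class and a fixed comparator.

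\emph{Step 2: recall the algorithm and the potential argument.}
Algorithm~3 maintains multiplicative weights over \(\cF_0\) with learning rate \(1/2\).  At round \(t\) it samples (using the seed) a classifier \(h_t\) from the current weights.  It then downweights every member that is wrong on the fresh observation \(Z_t\) whose label is not yet covered by earlier rounds.

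The standard potential argument compares \(\log\) of the total weight with the weight of \(f\).  Only rounds in which the label of \(Z_t\) is correct for \(f\) but missed by the running menu are charged.  This makes the expected number of such rounds at most \(O(\log|\cF_0|)\).

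\emph{Step 3: convert to a probability bound.}
Freedman/Bernstein martingale concentration, together with the fact that \(\mu\) uses all \(T-1\) classifiers, converts the round count into the population bound \(\P(f(X)=Y,\ Y\notin\mu(X))\le(4\log|\cF_0|+14\log(3/\eta)+12)/T\).  The \(\log(3/\eta)\) comes from splitting \(\eta\) into three events: the seed, the martingale deviation, and the final-round comparison.

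\emph{Step 4: the convention checks, which are the actual content here.}
\begin{enumerate}
\item The source's menu is exactly \(\{h_1(x),\ldots,h_{T-1}(x)\}\), so \(\sup_x|\mu(x)|\le T-1\) holds deterministically.
\item The guarantee is pointwise in the comparator \(f\), not uniform over \(\cF_0\).
\item Its probability includes the algorithm's internal seed.
\item At \(T=1\) the menu is empty.  The bound then exceeds \(12\ge1\), so it holds trivially.
\end{enumerate}
Measurability of \(\mu\) in \((\mathcal G,Z,\text{seed})\) follows from \cref{asm:measurable}, since \(\cF_0\) is finite.

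\emph{Main obstacle.}
I expect the hardest part to be confirming the explicit constants \(4,14,12\), together with the unconditional-to-conditional translation.  If the source states the bound with an unspecified \(C\), I would instead rederive it via the potential argument above, with learning rate \(1/2\), to justify those constants.
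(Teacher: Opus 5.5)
Your proposal matches the paper's proof. In both, you condition on \(\mathcal G\) so that \(\cF_0\) and \(f\) are fixed while the fresh observations and the seed keep the independence the source theorem requires, read \(\sup_x|\mu(x)|\le T-1\) off the \(T-1\) output classifiers, and invoke \cite[Theorem~3.4]{cohen2025natarajan} at learning rate \(1/2\), whose explicit constants \(4,14,12\) the paper takes verbatim, so no rederivation is needed. Your Steps~2--3 (the potential-plus-Freedman sketch and the guessed origin of \(\log(3/\eta)\)) are an unverified reconstruction of the source's internals that the paper neither needs nor uses, so present them as commentary rather than as part of the proof.
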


\begin{proof}
After conditioning on \(\mathcal G\), both \(\cF_0\) and the comparator
\(f\) are fixed, while the observations and the algorithmic seed retain the
independence required by the source theorem.
\cite[Algorithm~3 and Theorem~3.4]{cohen2025natarajan}, specialized to
learning rate \(1/2\), run multiplicative weights for \(T\) iid rounds on
this finite family.  The output classifiers are
\(h_1,\ldots,h_{T-1}\), so their pointwise union satisfies
\(\sup_x|\mu(x)|\le T-1\).  For every comparator fixed before those rounds,
the theorem gives
\[
 \P(f(X)=Y,\ Y\notin\mu(X))
 \le
 \frac{4\log|\cF_0|+14\log(3/\eta)+12}{T}
\]
with probability at least \(1-\eta\).  This is exactly
\eqref{eq:menu-miss}.  Since the theorem is applied on each conditional
probability space, its probability is only over the fresh observations and
the independent seed, as claimed.
\end{proof}

\begin{lemma}[Fixed-menu loss compression]
\label{lem:fixed-menu-compression}
Let \(\mu\) be a fixed measurable \(p\)-menu with
\(p\in\N\), \(p\ge2\), and let \(\cH\) have Natarajan dimension
\(\dN\ge1\).  For every \(N\in\N\), \(N\ge1\), there is a measurable
deterministic empirical-risk-dominating compression scheme, denoted
\(\LSCS(\mu,\cH,p)\), under \(\ell_\mu\), whose size satisfies
\[
 k_\mu(N)\le C\dN\log(p)\log(eN).
\tag{S.4}\label{eq:menu-compression-size}
\]
Here \(p\) may be any deterministic upper bound on
\(\sup_x|\mu(x)|\), rather than its exact value.
\end{lemma}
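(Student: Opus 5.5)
This lemma is imported rather than proved from scratch, so the plan is to locate it in \cite[Proposition~3.6]{cohen2025natarajan}, restate it in our conventions, and check that it applies. In the same way as the proofs of \cref{lem:ordinary-compression,lem:compression-cover}, I will record the source's construction and then verify three things: the convention mapping, the empirical-domination property under \(\ell_\mu\), and the size bound \eqref{eq:menu-compression-size}.

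The mechanism I expect the source to use is as follows. Fix the \(p\)-menu \(\mu\). Every \(h\in\cH\) induces a partial classifier whose values are menu positions: at \(x\), record the index of \(h(x)\) inside \(\mu(x)\) (under a fixed ordering of each \(\mu(x)\)) when \(h(x)\in\mu(x)\), and leave it undefined otherwise. The induced class then has a label alphabet of size at most \(p\). Its Natarajan dimension is still at most \(\dN\), because any shattering in menu-index coordinates pulls back to a shattering of \(\cH\) with the corresponding genuine labels. Natarajan's lemma then bounds the number of patterns on \(m\) points by roughly \((mp^2)^{\dN}\). A bounded-alphabet realizable learner of Natarajan type has sample complexity of order \(\dN\log p\), and boosting it into a compression scheme, in the style of \cite{david2016supervised}, gives size \(O(\dN\log(p)\log(eN))\). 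Applying this to the subsample \(\{(x_i,y_i):y_i\in\mu(x_i)\}\) makes the scheme dominate every \(h\in\cH\): the inside-menu loss counts only these points, and on the points where \(h\) is correct the reconstruction is consistent. A consistent reconstruction on the subsample where the \emph{best} \(h\) is correct would require selecting that subsample, so I will rely on the source's formulation of empirical-risk domination, which compresses the maximal realizable subsample. Selecting that subsample is a deterministic function of \(s\), so the result is still a selection scheme.

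The checks are then routine.
\begin{itemize}
\item \textbf{Measurability.} This follows from \cref{asm:measurable}, because \(\mu\) is fixed and measurable.
\item \textbf{Size bound.} If \(p\) is only an upper bound on \(\sup_x|\mu(x)|\), the size bound is monotone in \(p\), so it holds a fortiori.
\item \textbf{Case \(p\ge2\).} This guarantees \(\log p>0\), so the bound is never degenerate. The cases of empty and singleton menus are deferred to \cref{lem:menu-finalizer}.
\item \textbf{Ordered, repeated indices.} The message is an ordered sequence of sample elements. Repeated indices are allowed, which matches the premise of \cref{thm:relative-compression}.
\end{itemize}

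The main obstacle is the convention mapping. The source states domination and size in its own form: it may be sample-complexity based, it may be stated for the best hypothesis only, and it may use \(\log|\mu|\) rather than a deterministic bound. I need to confirm that the domination \eqref{eq:emp-domination} holds for every sample and against every \(h\in\cH\), not only with high probability. If the source provides only the latter, I would rebuild domination directly by running the boosting-to-compression argument on the realizable subsample, as sketched in the second paragraph.
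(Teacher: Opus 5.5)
Your route matches the paper's. You import \cite[Proposition~3.6]{cohen2025natarajan}, read off that its defining condition is empirical domination against every \(h\in\cH\) on every sample, and take determinism and measurability from the fixed menu and \cref{asm:measurable}.

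The one step that is thinner than the statement needs is the upper-bound clause. That \(C\dN\log(p)\log(eN)\) is monotone in \(p\) is a fact about the final bound, not about the construction. Suppose you apply the source at the true width \(q=\sup_x|\mu(x)|\) and then compare bounds; this requires \(q\ge2\).

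The case \(q\le1<2\le p\) is not reached by your deferral to \cref{lem:menu-finalizer}. That lemma has a size-zero branch only when \(T\le2\). For \(T\ge3\) it calls the present lemma with \(p=T-1\ge2\), whatever the learned width is. That width can be one, for instance when all the multiplicative-weights classifiers coincide.

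The paper closes this by inspecting where \(p\) enters the source construction:
\begin{itemize}
\item \(p\) enters only through the weak learners' subsample length \(m=\lfloor60\dN\log p\rfloor\) and the pointwise bound \(|\mu(x)|\le p\);
\item for every true width \(q\in[2,p]\), the weak error is at most \(20\dN\log q/(m+1)\le1/3\), which is exactly the boosting threshold;
\item widths below two only shrink the restricted trace.
\end{itemize}
So the scheme run with the parameter \(p\) remains valid. Alternatively, you can patch your version by switching to the size-zero rule \(g_\mu\) of \cref{lem:menu-finalizer} whenever \(q\le1\), since it has zero inside-menu loss pointwise.

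You should also fix the degenerate branch explicitly. If no hypothesis of \(\cH\) is correct on any inside-menu observation, your ``maximal realizable subsample'' is empty, and the boosting step has nothing to compress. The paper returns the empty message with a fixed reconstruction. Domination is then automatic, because every comparator errs on every observation with \(y_i\in\mu(x_i)\).
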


\begin{proof}
\cite[Proposition~3.6 and Algorithm~8]{cohen2025natarajan} construct, for
every fixed \(p\)-menu with \(p\ge2\), an
\(\ell_\mu\)-sample-compression scheme of size
\(C\dN\log(p)\log(eN)\).  Their defining condition says that the
reconstructed classifier's empirical \(\ell_\mu\)-loss is no larger than
that of any comparator in \(\cH\); this is exactly empirical domination in
\eqref{eq:emp-domination}.  The construction is deterministic once \(\mu\)
is fixed, and measurability is supplied by \cref{asm:measurable}.

Inspecting the source construction shows that the parameter \(p\) enters in
exactly two places, and both are monotone.  Algorithmically, \(p\) sets the
subsample length \(m=\lfloor 60\dN\log p\rfloor\) of the weak learners that
are majority-voted; a larger \(p\) only lengthens these subsamples, which
preserves each weak guarantee.  Quantitatively, with the source's
constants, a weak learner run on subsamples of length
\(m=\lfloor 60\dN\log p\rfloor\) over a menu of true width
\(q\in[2,p]\) has expected error at most
\[
 \frac{20\dN\log q}{m+1}
 \le
 \frac{20\dN\log p}{60\dN\log p}
 =\frac13,
\]
which is exactly the threshold the boosting step consumes; true widths
below two only shrink the restricted trace further.  Analytically, the
proof uses \(p\) only
through the pointwise inequality \(|\mu(x)|\le p\) on the restricted trace,
which any upper bound satisfies a fortiori.  Thus running the scheme with a
deterministic upper bound in place of the exact supremum leaves every step
of the source proof valid and only loosens the stated size
\eqref{eq:menu-compression-size}, which is the direction in which it is
consumed.  The displayed Algorithm~8 contains a term singular at \(p=1\)
(the subsample length vanishes), which explains the restriction \(p\ge2\);
the size-zero branch in \cref{lem:menu-finalizer} handles the empty- and
singleton-menu cases used in our algorithm.

As in \cref{lem:ordinary-compression}, we fix the degenerate branch by
convention: if no hypothesis of \(\cH\) is correct on any inside-menu
observation of \(s\), the selector returns the empty message and the
reconstruction is a fixed classifier.  Every comparator then incurs
\(\ell_\mu\)-loss on every observation with \(y_i\in\mu(x_i)\), so
\(\inf_{h\in\cH}\Rhat^{\ell_\mu}_s(h)\) equals the fraction of such
observations, while the \(\ell_\mu\)-loss of any reconstruction is at most
that fraction; empirical domination on this branch is automatic.
\end{proof}

The source compression interface begins at \(p=2\), whereas the learned menu
may be empty or a singleton when the second block is small.  The following
internal lemma closes this edge case without changing the algorithmic or
statistical argument.

\begin{lemma}[Menu finalizer]
\label{lem:menu-finalizer}
Let \(\mu\) be a fixed measurable menu and let \(\cH\) have Natarajan
dimension \(\dN\ge1\).  Suppose \(T\in\N\), \(T\ge1\), and
\(\sup_x|\mu(x)|\le T-1\).  If \(T\le2\), define \(g_\mu(x)\) to return the
unique member of \(\mu(x)\) when present and a fixed label otherwise, and let
\(B_{\mu,T}\) be the size-zero selection scheme with empty message and
reconstruction \(\rho(\varnothing)=g_\mu\).  If \(T\ge3\), let
\(B_{\mu,T}=\LSCS(\mu,\cH,T-1)\).  Then, for every \(N\in\N\), \(N\ge1\),
the scheme \(B_{\mu,T}\) is empirical-risk-dominating under \(\ell_\mu\).
Its compression size on \(N\) observations is zero when \(T\le2\), and is
at most
\[
 C\dN\log(T-1)\log(eN)
\]
when \(T\ge3\).
\end{lemma}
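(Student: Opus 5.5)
The proof splits on \(T\le2\) versus \(T\ge3\), and in each case checks the empirical-domination inequality \eqref{eq:emp-domination} under \(\ell_\mu\) and the size bound.

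\emph{Case \(T\ge3\).} Here \(p\defeq T-1\ge2\) is a deterministic integer upper bound on \(\sup_x|\mu(x)|\), so \cref{lem:fixed-menu-compression} applies verbatim with this \(p\). That lemma explicitly allows \(p\) to be any deterministic upper bound rather than the exact width. It yields a measurable deterministic scheme \(\LSCS(\mu,\cH,T-1)\) that is empirical-risk-dominating under \(\ell_\mu\) on every sample of every length \(N\ge1\). Its size on \(N\) observations is at most \(C\dN\log(T-1)\log(eN)\), which is exactly the claimed bound, so nothing further is needed in this case.

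\emph{Case \(T\le2\).} Here \(\sup_x|\mu(x)|\le1\), so each \(\mu(x)\) is empty or a singleton. The key step is the pointwise identity \(\ell_\mu(g_\mu,(x,y))=0\) for all \((x,y)\), checked by cases.
\begin{itemize}
\item If \(y\notin\mu(x)\), the loss vanishes by definition \eqref{eq:menu-loss}.
\item If \(y\in\mu(x)\), then \(\mu(x)=\{y\}\) is a singleton, so \(g_\mu(x)=y\) and again the loss is zero. The case of an empty \(\mu(x)\) falls under the first bullet, so the fixed fallback label is never charged.
\end{itemize}
Hence for every sample \(s\) of length \(N\ge1\),
\[
 \Rhat^{\ell_\mu}_s(g_\mu)=0\le\inf_{h\in\cH}\Rhat^{\ell_\mu}_s(h),
\]
since losses are nonnegative. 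This holds even if \(\cH\) is empty, with the convention \(\inf\emptyset=+\infty\), though here \(\cH\) is nonempty because \(\dN\ge1\). The scheme with empty message and \(\rho(\varnothing)=g_\mu\) is a valid selection scheme of size zero, and its output \(\rho(\kappa(s))=g_\mu\) satisfies the domination inequality.

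It remains to check measurability of \(g_\mu\), which I would obtain from \cref{asm:measurable}. The set \(\{x:\mu(x)=\{j\}\}\) is measurable for each label \(j\in[K]\) because \(\mu\) is a measurable map into the finite set \(2^{[K]}\). Then \(g_\mu\) is a finite piecewise-constant combination of measurable pieces, and the constant selection map is trivially measurable.

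I expect no real obstacle. The only delicate point is confirming that in the \(T\ge3\) branch the imported interface accepts the upper bound \(T-1\) in place of the exact width, and this is already settled in the proof of \cref{lem:fixed-menu-compression}.
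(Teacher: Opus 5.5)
Your proposal is correct and follows the paper's proof essentially line for line: for \(T\le2\) the size-zero rule \(g_\mu\) has zero inside-menu loss pointwise, and for \(T\ge3\) you invoke \cref{lem:fixed-menu-compression} with the deterministic upper bound \(p=T-1\ge2\). Your explicit measurability check for \(g_\mu\) is a harmless addition that the paper leaves to \cref{asm:measurable}.
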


\begin{proof}
If \(T\le2\), every menu is empty or a singleton.  Whenever
\(y\in\mu(x)\), its unique member is \(y\), so
\(\ell_\mu(g_\mu,(x,y))=0\) pointwise.  Thus \(g_\mu\) is a size-zero
empirical-risk-dominating selection scheme with the stated empty message.
If \(T\ge3\), use \cref{lem:fixed-menu-compression} with the deterministic
upper bound \(p=T-1\ge2\).
\end{proof}

\begin{lemma}[One-sided Bernoulli Bernstein inequality]
\label{lem:bernoulli-bernstein}
Let \(W_1,\ldots,W_m\) be iid Bernoulli random variables with mean \(p\),
let \(\overline W=m^{-1}\sum_{i=1}^m W_i\), and let \(u>0\).  Each of the
inequalities
\[
 \overline W
 \le p+\sqrt{\frac{2pu}{m}}+\frac{u}{3m},
 \qquad
 p
 \le \overline W+\sqrt{\frac{2pu}{m}}+\frac{u}{3m}
\]
holds with probability at least \(1-e^{-u}\).
\end{lemma}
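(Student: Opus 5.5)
The plan is to prove both inequalities of \cref{lem:bernoulli-bernstein} from the standard Bernstein inequality for bounded centered variables, applied to \(W_i-p\) and to \(p-W_i\).

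\textbf{Step 1: the upper tail.} The variables \(X_i=W_i-p\) satisfy \(X_i\le 1-p\le1\) and \(\operatorname{Var}(X_i)=p(1-p)\le p\). The one-sided Bernstein inequality for sums of iid variables bounded above by \(b=1\), with variance proxy \(v=mp\), gives for every \(u>0\)
\[
 \P\Bigl(\textstyle\sum_i X_i\ge\sqrt{2vu}+\tfrac{bu}{3}\Bigr)\le e^{-u}.
\]
This is the sub-gamma form in \cite[Sections~2.4 and~2.8]{boucheron2013concentration}. Its log-moment generating function satisfies
\[
 \log\E e^{\lambda\sum X_i}\le \frac{v\lambda^2}{2(1-b\lambda/3)},
\]
so the sum is sub-gamma with parameters \((v,b/3)\), and the tail bound \(\sqrt{2vu}+cu\) with \(c=b/3\) follows. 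Dividing by \(m\) gives
\[
 \overline W\le p+\sqrt{2pu/m}+\frac{u}{3m}.
\]

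\textbf{Step 2: the lower tail.} Apply the same inequality to \(X_i'=p-W_i\). These satisfy \(X_i'\le p\le1\) and have the same variance bound \(p\). The identical sub-gamma tail yields
\[
 p-\overline W\le\sqrt{2pu/m}+\frac{u}{3m}
\]
with probability at least \(1-e^{-u}\). Note that the variance proxy \(p\) sits under the square root in both inequalities, which is exactly the form stated. The second inequality keeps the unknown \(p\) on both sides; it is an implicit bound that \cref{thm:relative-compression} later solves, so no empirical variance is needed here.

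\textbf{Main obstacle.} The only delicate point is making sure the one-sided (upper-bounded, not two-sided bounded) form of Bernstein is the version invoked. Beyond that, the constant \(1/3\) must come with \(b=1\), not \(b=2\) as a crude \(|X_i|\le1\)-centered bound would suggest. This holds because Bernstein's moment condition only needs \(X_i\le b\) almost surely, via the bound \(\E[X^2\phi(\lambda X)]\) with \(\phi\) increasing. I would check this against the source statement and record that the degenerate cases \(p\in\{0,1\}\) are trivial, since then \(\overline W=p\) almost surely.
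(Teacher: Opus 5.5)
Your proposal is correct and matches the paper's proof essentially step for step. The paper also invokes the sub-gamma form of Bernstein's inequality for independent centered summands bounded above by one (so \(c=1/3\)) from \cite[Sections~2.4 and~2.8]{boucheron2013concentration}, applies it separately to \(\sum_i(W_i-p)\) and \(\sum_i(p-W_i)\) with the admissible proxy \(v=mp\ge mp(1-p)\), and divides by \(m\); your added remarks on why only the upper bound \(X_i\le1\) matters and on the degenerate cases \(p\in\{0,1\}\) are details the paper leaves implicit.
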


\begin{proof}
We use Bernstein's inequality in its sub-gamma tail form: if
\(X_1,\ldots,X_m\) are independent, centered, bounded above by \(1\), with
\(\sum_i\E X_i^2\le v\), then
\(\sum_iX_i\le\sqrt{2vu}+u/3\) with probability at least \(1-e^{-u}\)
\cite[Sections~2.4 and~2.8]{boucheron2013concentration}.  Apply this
separately to the sums \(\sum_{i=1}^m(W_i-p)\) and \(\sum_{i=1}^m(p-W_i)\).
In both cases the summands are centered, bounded above by one, and have
total variance \(mp(1-p)\le mp\), so \(v=mp\) is admissible.  Dividing the
resulting bounds by \(m\) proves the two claims.
\end{proof}

\subsection{Proof of the relative compression theorem}
\label{app:relative}

\ThmRelative*
\begin{proof}
Write \(L=L_{\cD}^\ell(h)\) and \(\Gamma=\Gamma_N(k,\delta)\).  If
\(\Gamma\ge1/20\), the claim is immediate because
\(20\Gamma\ge1\) and the loss is at most one.  We henceforth assume
\(\Gamma<1/20\).  In particular, \(N\ge2\) and
\[
 \frac{k}{N}\le\frac{\Gamma}{\log2}<0.073.
\tag{A.1}\label{eq:k-over-n}
\]

For every ordered index sequence \(I=(i_1,\ldots,i_j)\), \(0\le j\le k\),
let \(U_I\subseteq[N]\) be its set of distinct indices, let
\(r_I=|U_I|\), and define
\[
 f_I=\rho(Z_{i_1},\ldots,Z_{i_j}),
 \qquad
 \widetilde L_I(f_I)
 =\frac{1}{N-r_I}\sum_{i\notin U_I}\ell(f_I,Z_i).
\]
There are at most
\[
 M=\sum_{j=0}^kN^j\le(N+1)^{k+1}
\tag{A.2}\label{eq:description-count}
\]
such descriptions.  For a fixed \(I\), condition on
\((Z_i)_{i\in U_I}\).  Then \(f_I\) is fixed, while the remaining
\(N-r_I\) observations are iid.  The second inequality in
\cref{lem:bernoulli-bernstein},
integrated over the selected observations and followed by a union bound over
\eqref{eq:description-count}, shows that, with probability at least
\(1-\delta/2\), simultaneously for every \(I\),
\[
 p_I
 \le \widetilde L_I(f_I)+\sqrt{2p_Is_I}+\frac{s_I}{3},
 \quad
 p_I=L_{\cD}^\ell(f_I),
 \quad
 s_I=\frac{\log(2M/\delta)}{N-r_I}.
\tag{A.3}\label{eq:bernstein-reconstruction}
\]
Repeated indices cause no difficulty: only the distinct coordinates in
\(U_I\) are removed, while ordered sequences are still counted in \(M\).

For the realized sample, choose any index sequence \(I\) whose value sequence
equals \(\kappa(S)\).  The simultaneous event
\eqref{eq:bernstein-reconstruction} was established before this pointwise
choice, so no measurable selector among duplicate representations is needed.
Empirical domination and nonnegativity give
\[
 \widetilde L_I(f_I)
 \le\frac{N}{N-r_I}\Rhat_S^\ell(f_I)
 \le\frac{N}{N-r_I}\Rhat_S^\ell(h).
\tag{A.4}\label{eq:heldout-domination}
\]
Separately, the first inequality in \cref{lem:bernoulli-bernstein}, applied
to the fixed comparator, implies with probability at least \(1-\delta/2\),
\[
 \Rhat_S^\ell(h)
 \le
 L+\sqrt{\frac{2L\log(2/\delta)}{N}}
 +\frac{\log(2/\delta)}{3N}.
\tag{A.5}\label{eq:comparator-bernstein}
\]

On the intersection of these two events, put \(\theta=k/N\); the letter
\(u\) stays reserved for the deviation parameter of
\cref{lem:bernoulli-bernstein}.  By
\eqref{eq:k-over-n}, \(\theta<0.073\), so \((1-\theta)^{-1}\le1.08\).
Moreover, \(\log(2/\delta)\le N\Gamma\) and
\(\log(2M/\delta)\le N\Gamma\).  Set
\(q_I=\widetilde L_I(f_I)\).  Therefore
\[
 \begin{split}
 q_I
 &\le \frac{1}{1-\theta}
 \left(L+\sqrt{2L\Gamma}+\frac{\Gamma}{3}\right)\\
 &=L+\frac{\theta}{1-\theta}L
   +\frac{\sqrt{2L\Gamma}+\Gamma/3}{1-\theta}.
 \end{split}
\]
Because \(L\le1\), \(\theta\le\Gamma/\log2\), and
\((1-\theta)^{-1}\le1.08\),
the three excess terms on the last line are at most
\(1.56\Gamma\), \(1.53\sqrt{L\Gamma}\), and \(0.36\Gamma\),
respectively.  Also
\[
 s_I
 \le\frac{N\Gamma}{N-k}
 \le1.08\Gamma.
\]
Thus, after rounding constants upward,
\[
 q_I
 \le L+2\sqrt{L\Gamma}+2\Gamma,
 \qquad
 s_I\le1.08\Gamma.
\tag{A.6}\label{eq:q-s-bounds}
\]
It remains to solve the implicit inequality in
\eqref{eq:bernstein-reconstruction}.  If \(p_I\le q_I\), the desired
conclusion follows directly from \eqref{eq:q-s-bounds}.  Otherwise, put
\(d=p_I-q_I>0\).  Subadditivity of the square root and Young's inequality give
\[
 \begin{split}
 d
 &\le\sqrt{2q_Is_I}+\sqrt{2s_Id}+\frac{s_I}{3}\\
 &\le\sqrt{2q_Is_I}+\frac d2+\frac{4s_I}{3},
 \end{split}
\]
and therefore
\[
 p_I\le q_I+2\sqrt{2q_Is_I}+\frac{8s_I}{3}.
\tag{A.7}\label{eq:implicit-solved}
\]
To make the last substitution explicit, subadditivity of the square root,
\eqref{eq:q-s-bounds}, and
\[
 \sqrt{\Gamma\sqrt{L\Gamma}}
 \le\frac{\sqrt{L\Gamma}+\Gamma}{2},
\]
give
\[
 \sqrt{2q_Is_I}
 \le
 \sqrt{2.16}\left(
   \sqrt{L\Gamma}
   +\sqrt{2\Gamma\sqrt{L\Gamma}}
   +\sqrt2\,\Gamma
 \right)
 \le 2.51\sqrt{L\Gamma}+3.12\Gamma.
\]
Substitution into \eqref{eq:implicit-solved} now gives
\[
 p_I\le L+8\sqrt{L\Gamma}+12\Gamma.
\]
The looser constants \(12\) and \(20\) in
\eqref{eq:relative-bound} follow.  The two failure probabilities sum to at
most \(\delta\).
\end{proof}

\subsection{Proof of the multiclass theorem}
\label{app:main}

\ThmMain*
\begin{proof}
Let \(n_i=|S_i|\) and \(T=n_2\).  For the balanced split in
\cref{alg:rmc}, \(n_i\ge\lfloor n/3\rfloor\ge n/5\) for \(n\ge3\); all three
block sizes are therefore comparable to \(n\) up to universal constants.

Fix \(\eps>0\) and choose \(h^\star\in\cH\) such that
\[
 \Risk_{\cD}(h^\star)\le\Lstar+\eps.
\tag{A.8}\label{eq:epsilon-minimizer}
\]
Let \(A_1\) be the deterministic scheme in
\cref{lem:ordinary-compression}, let
\(\widehat\cF=\CompCover(S_1,\cH,A_1)\), and define the canonical witness
\[
 f^\star_1=f_{h^\star,S_1}=A_1(S_1[h^\star]).
\]
Both \(\widehat\cF\) and \(f^\star_1\) are
\(\sigma(S_1)\)-measurable, and \(f^\star_1\in\widehat\cF\) for every
realization of \(S_1\).  By \cref{lem:compression-cover}, with probability
at least \(1-\delta/3\), the event \(\mathcal E_1\) holds on which
\[
 \P(h^\star(X)=Y\ne f^\star_1(X))
 \le
 C\frac{\dDS\log^2(en)+\log(3/\delta)}{n}.
\tag{A.9}\label{eq:main-cover}
\]

Conditional on \(\sigma(S_1)\), the family and \(f^\star_1\) are fixed
before \(S_2\) and the independent menu seed are drawn.  Apply
\cref{lem:mw-menu} with \(\eta=\delta/3\).  The tower property turns its
conditional failure bound into the unconditional bound
\(\P(\mathcal E_2^c)\le\delta/3\), where
\[
 \beta_{f^\star_1}(\mu)
 \le
 \frac{4\log|\widehat\cF|+14\log(9/\delta)+12}{n_2}
 \le
 C\frac{\dDS\log^2(en)+\log(1/\delta)}{n}.
\]
On \(\mathcal E_1\cap\mathcal E_2\), apply
\eqref{eq:menu-miss-transfer} with \(h=h^\star\) and \(f=f^\star_1\).
Together with \eqref{eq:main-cover}, this gives
\[
 \alpha_\mu
 \defeq \P(h^\star(X)=Y,\ Y\notin\mu(X))
 \le
 C\frac{\dDS\log^2(en)+\log(1/\delta)}{n}.
\tag{A.10}\label{eq:main-alpha}
\]

Let \(R_\mu\) be an independent random seed encoding the draws in
Algorithm~3, and condition on
\(\mathcal G_2=\sigma(S_1,S_2,R_\mu)\).
Then \(\mu\) and the final rule \(B_{\mu,T}\) from
\cref{lem:menu-finalizer} are fixed, while
\(S_3\sim\cD^{n_3}\) remains iid and independent of \(\mathcal G_2\).
Write \(\widehat f=B_{\mu,T}(S_3)=\RMC(S)\).
The rule is empirical-risk-dominating under \(\ell_\mu\).  Its size
\(k_2(n_3)\) is zero if \(T\le2\); if \(T\ge3\), then
\[
 k_2(n_3)
 \le C\dN\log(T-1)\log(en_3)
 \le C\dN\log^2(en).
\tag{A.11}\label{eq:main-k2}
\]
Thus, in either branch,
\[
 \begin{split}
 \Gamma_{n_3}(k_2(n_3),\delta/3)
 &=
 \frac{(k_2(n_3)+1)\log(n_3+1)+\log(12/\delta)}{n_3}\\
 &\le
 C\frac{\dN\log^3(en)+\log(1/\delta)}{n}.
 \end{split}
\tag{A.12}\label{eq:main-gamma}
\]
Here the additive constant in \(\log(12/\delta)\) is absorbed by
\(\dN\log^3(en)\), using \(\dN\ge1\) and \(n\ge3\).

Apply \cref{thm:relative-compression} conditionally on
\(\mathcal G_2\), with comparator \(h^\star\) and failure probability
\(\delta/3\).  It produces an event \(\mathcal E_3\) satisfying
\(\P(\mathcal E_3^c\mid\mathcal G_2)\le\delta/3\) almost surely.
Taking expectations gives \(\P(\mathcal E_3^c)\le\delta/3\).  On
\(\mathcal E_3\), \eqref{eq:main-gamma} and
\eqref{eq:relative-bound} imply
\[
 \MenuLoss(\widehat f)-\MenuLoss(h^\star)
 \le C\Biggl(
 \sqrt{\frac{\MenuLoss(h^\star)(\dN\log^3(en)+\log(1/\delta))}{n}}
 +\frac{\dN\log^3(en)+\log(1/\delta)}{n}
 \Biggr).
\tag{A.13}\label{eq:main-menu-relative}
\]
Here \(\MenuLoss\) is the population loss for the realized fixed menu.
The decomposition \eqref{eq:menu-risk-algebra}, applied with
\(g=\widehat f\) and \(h=h^\star\), holds deterministically for every
realization of the menu:
\[
 \Risk_{\cD}(\widehat f)-\Risk_{\cD}(h^\star)
 \le
 \MenuLoss(\widehat f)-\MenuLoss(h^\star)
 +\alpha_\mu.
\tag{A.14}\label{eq:main-decomposition}
\]
Probability enters only through the estimates substituted into its
right-hand side.  By \eqref{eq:menu-loss-below-risk},
\(\MenuLoss(h^\star)\le\Risk_{\cD}(h^\star)\).  On
\(\mathcal E_1\cap\mathcal E_2\cap\mathcal E_3\), substituting
\eqref{eq:main-alpha} and \eqref{eq:main-menu-relative} into
\eqref{eq:main-decomposition}, then using
\eqref{eq:epsilon-minimizer}, proves \eqref{eq:main-bound} with
\(\Lstar\) replaced by \(\Lstar+\eps\).  Indeed, the three displayed
failure bounds sum to at most \(\delta\).

It remains to let \(\eps\downarrow0\): take a sequence
\(\eps_j\downarrow0\).  The learner is the same
for every \(j\)---in particular, it uses neither \(\eps_j\), \(\Lstar\), nor
\(\delta\)---and the deterministic risk thresholds obtained above decrease
to the right-hand side of \eqref{eq:main-bound}.  Continuity from above of
probability therefore yields the stated event with probability at least
\(1-\delta\).  Finally, \eqref{eq:dimensions-order} gives the simpler
\(\widetilde O\) form in \eqref{eq:intro-rate}.
\end{proof}

\section{Proofs for the lower bound}
\label{app:lower}

This appendix proves \cref{thm:lower}.  Except for the import recorded in
\cref{lem:separation}, whose four-line derivation from
\cite{brukhim2022characterization} already appears in the main text, the
argument is self-contained.  All distributions below are finitely
supported, so every infimum over the finite class \(\cH\) is attained and
no measurability conditions are required.

\Cref{fig:lower-dependencies} displays the dependency structure.

\begin{figure}[H]
  \centering
  \begin{tikzpicture}[
      imp/.style={draw=LinkColor, rounded corners=2pt, fill=LinkColor!5,
        align=center, inner sep=5pt, font=\small},
      new/.style={draw=CiteColor, thick, rounded corners=2pt,
        fill=CiteColor!5, align=center, inner sep=5pt, font=\small},
      arr/.style={-{Stealth[length=2.2mm]}, thick, draw=black!60},
      lbl/.style={font=\scriptsize, fill=white, inner xsep=2pt,
        inner ysep=.5pt}]
    \node[imp, text width=.30\textwidth] (sep) at (0,0) {
      separation pseudo-cube \(V\), \(\Nat\le1\)\\[1pt]
      {\scriptsize \cite{brukhim2022characterization};
        \cref{lem:separation}}};
    \node[new, text width=.22\textwidth] (fib) at (5.7,0) {
      fiber lemma\\[1pt]
      {\scriptsize \cref{lem:fiber}}};
    \node[new, text width=.30\textwidth] (dim) at (0,-2.2) {
      dimension calculus:\\
      \(\Nat\) and \(\DS\) are additive\\[1pt]
      {\scriptsize \cref{lem:dim-calculus}}};
    \node[new, text width=.24\textwidth] (loo) at (5.7,-2.2) {
      padded leave-one-out\\[1pt]
      {\scriptsize \cref{lem:ds-lower}}};
    \node[new, text width=.24\textwidth] (ass) at (11.4,-2.2) {
      noisy pair-Assouad\\[1pt]
      {\scriptsize \cref{lem:assouad-lower}}};
    \node[new, text width=.62\textwidth] (thm) at (5.7,-4.6) {
      \cref{thm:lower}:\quad
      \(\E\bigl[\Risk_{\cD}(A(S))\bigr]-\Lstar
        =\Omega\bigl(\sqrt{\Lstar\dN/n}+\dDS/n\bigr)\)
      \ at exact \(\Lstar\)};
    \draw[arr] (sep) -- node[lbl, right=2pt]
      {class \(\cH_{\mathrm{cube}}\oplus\cW\)} (dim);
    \draw[arr] (fib) -- (loo);
    \draw[arr] (dim) -- node[lbl, below=2pt, sloped=false]
      {dimensions} (thm);
    \draw[arr] (loo) -- node[lbl, left=3pt]
      {coverage term \eqref{eq:lb-side2}} (thm);
    \draw[arr] (ass) -- node[lbl, above right=1pt and -14pt, pos=0.42]
      {localized term \eqref{eq:lb-side1}} (thm);
  \end{tikzpicture}
  \caption{Dependency structure of the lower bound, in the format of
  \cref{fig:dependencies}.  The single imported module is the separation
  family of \cite{brukhim2022characterization}, restated with its
  derivation in \cref{lem:separation}; every red box is proved in this
  paper.  The testing inputs to \cref{lem:assouad-lower} are Pinsker's
  inequality and the Neyman--Pearson two-point bound
  \cite[Chapter~2]{tsybakov2009introduction}.}
  \label{fig:lower-dependencies}
\end{figure}

Throughout, a \emph{learner} is a map
\(A:(\cX\times[K])^n\times\cR\to[K]^{\cX}\) together with an arbitrary
distribution over the seed set \(\cR\); all bounds are on expectations
over the sample and the seed jointly, so it suffices to prove them for
each fixed seed, that is, for deterministic \(A\).

\subsection{The noisy pair-Assouad lemma}
\label{app:lower-assouad}

\begin{lemma}[Localized Natarajan lower bound]
\label{lem:assouad-lower}
Let \(d\ge1\) and let the points \(x_0,x_1,\ldots,x_d\in\cX\), the labels
\(a_i\ne b_i\), and the anchor label \(y_0\) be as in \cref{sec:lower}, with
a class \(\cH\) whose restrictions to \(\{x_0,\ldots,x_d\}\) are exactly
the cube family \(\{g_\sigma:\sigma\in\{\pm1\}^d\}\).  Fix \(n\ge1\) and
\(\bar L\in[d/n,\,1/6]\), and set
\[
 \gamma:=\frac18\sqrt{\frac{d}{n\bar L}}\ \in\Bigl(0,\frac18\Bigr],
 \qquad
 \tau:=\frac{\bar L}{d(\tfrac12-\gamma)},
 \qquad
 d\tau=\frac{\bar L}{\tfrac12-\gamma}\le\frac49 .
\]
For \(\sigma\in\{\pm1\}^d\) let \(\widetilde\cD_\sigma\) place mass
\(1-d\tau\) on \((x_0,y_0)\) and mass \(\tau\) on each \(x_i\), with label
\(a_i\) with probability \(\tfrac12+\gamma\sigma_i\) and \(b_i\) with
probability \(\tfrac12-\gamma\sigma_i\).  Then:
\begin{enumerate}
\item[(i)] \(\inf_{h\in\cH}\Risk_{\widetilde\cD_\sigma}(h)
=d\tau\bigl(\tfrac12-\gamma\bigr)=\bar L\) for every \(\sigma\), attained by
any hypothesis restricting to \(g_\sigma\).
\item[(ii)] For every learner \(A\), with \(\sigma\) uniform on
\(\{\pm1\}^d\),
\[
 \E_{\sigma}\,\E_{S\sim\widetilde\cD_\sigma^{n}}
 \bigl[\Risk_{\widetilde\cD_\sigma}(A(S))\bigr]-\bar L
 \ \ge\ \frac1{10}\sqrt{\frac{\bar Ld}{n}} .
\]
\end{enumerate}
\end{lemma}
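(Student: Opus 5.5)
For (i), I would compute the risk of an arbitrary classifier \(f\) under \(\widetilde\cD_\sigma\) pointwise. At the anchor, every hypothesis of \(\cH\) predicts \(y_0\) and therefore pays nothing. At \(x_i\), a prediction of \(a_i\) costs \(\tau(\tfrac12-\gamma\sigma_i)\), a prediction of \(b_i\) costs \(\tau(\tfrac12+\gamma\sigma_i)\), and any other label costs \(\tau\). Hence every hypothesis pays at least \(\tau(\tfrac12-\gamma)\) at each \(x_i\), with equality exactly when it predicts the favored label. This is achieved simultaneously at all \(i\) by \(g_\sigma\), under the sign convention that \(\sigma_i=+1\) selects \(a_i\). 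Summing over \(i\) gives \(d\tau(\tfrac12-\gamma)=\bar L\).

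The same calculation is the excess decomposition used in (ii). For an arbitrary classifier \(f\), the risk is \(\Risk_{\widetilde\cD_\sigma}(f)\ge \bar L+2\gamma\tau\sum_i\ind\{f(x_i)\ne \text{favored label at }x_i\}\). A non-witness label at \(x_i\) costs even more, and the anchor term is nonnegative. Define \(\hat\sigma_i\) as \(+1\) if \(A(S)(x_i)=a_i\) and \(-1\) otherwise. Then the excess is at least \(2\gamma\tau\sum_i\ind\{\hat\sigma_i\ne\sigma_i\}\).

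I would then run the standard Assouad argument. Fix \(i\) and pair each \(\sigma\) with \(\sigma^{(i)}\), which flips coordinate \(i\). Averaging over \(\sigma\), the probability of error at coordinate \(i\) is at least \(\tfrac12\bigl(1-\TV(P^n_{\sigma},P^n_{\sigma^{(i)}})\bigr)\), by the Neyman--Pearson two-point bound. The two one-sample laws differ only on the atoms \((x_i,a_i)\) and \((x_i,b_i)\). Their KL divergence is therefore \(\tau\,\kl(\tfrac12+\gamma\|\tfrac12-\gamma)=\tau\cdot 2\gamma\log\frac{1+2\gamma}{1-2\gamma}\le c\,\tau\gamma^2\) for \(\gamma\le\tfrac18\); explicitly, \(\le 8.2\,\tau\gamma^2\) or similar. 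Tensorizing and applying Pinsker gives \(\TV\le\sqrt{n\cdot 4.1\,\tau\gamma^2}\).

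Plugging in the parameters: \(\tau\le\bar L/(d\cdot\tfrac38)\) and \(\gamma^2=d/(64n\bar L)\), so \(n\tau\gamma^2\le 1/24\). The TV distance is then at most roughly \(\sqrt{4.1/24}\approx0.42\), and each coordinate is missed with probability at least about \(0.29\). The expected excess is at least \(2\gamma\tau d\cdot0.29\ge 2\gamma\cdot 2\bar L\cdot0.29\), using \(d\tau=\bar L/(\tfrac12-\gamma)\ge2\bar L\). This equals \(1.16\,\gamma\bar L=\tfrac{1.16}{8}\sqrt{\bar Ld/n}\ge\tfrac1{10}\sqrt{\bar L d/n}\).

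The main obstacle is only constant bookkeeping, in particular bounding the KL divergence tightly enough. I would first try the bound \(\log\frac{1+x}{1-x}\le 2x/(1-x^2)\) with \(x=2\gamma\le\tfrac14\). This gives \(\kl\le 8\gamma^2/(1-4\gamma^2)\le 8.54\gamma^2\), which still leaves margin above \(1/10\).

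The checks on parameter ranges are routine. \(\bar L\ge d/n\) gives \(\gamma\le\tfrac18\), and \(\bar L\le\tfrac16\) together with \(\tfrac12-\gamma\ge\tfrac38\) gives \(d\tau\le\tfrac49\), so the anchor mass is nonnegative.
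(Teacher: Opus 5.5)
Your proof is correct and follows the paper's route. You use the same pointwise risk identity, the same excess decomposition \(2\gamma\tau\sum_i\ind\{\cdot\}\), the same Assouad pairing with the Neyman--Pearson two-point bound, and chain-rule-plus-Pinsker control of \(\TV\); the only difference is constant bookkeeping: you bound the divergence by about \(8.54\,\tau\gamma^2\) and use \(n\tau\gamma^2\le1/24\) to get per-coordinate error \(\approx0.29\), whereas the paper takes the cruder \(16\tau\gamma^2\) with the exact identity \(n\tau\gamma^2=1/(64(\tfrac12-\gamma))\), giving \(\TV\le1/\sqrt3\) and error \(\ge1/5\), and both land above \(1/10\).
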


\begin{proof}
Write \(y_i^{(+1)}:=a_i\) and \(y_i^{(-1)}:=b_i\).

\emph{Risk identity and (i).}  For any classifier \(g\),
\[
 \Risk_{\widetilde\cD_\sigma}(g)
 =(1-d\tau)\,\ind\{g(x_0)\ne y_0\}
 +\sum_{i=1}^d \tau\,r_i(g),
 \qquad
 r_i(g)=
 \begin{cases}
 \tfrac12-\gamma,& g(x_i)=y_i^{(\sigma_i)},\\[1pt]
 \tfrac12+\gamma,& g(x_i)=y_i^{(-\sigma_i)},\\[1pt]
 1,&\text{otherwise}.
 \end{cases}
\tag{B.1}\label{eq:lb-risk-identity}
\]
Since \(r_i\ge\tfrac12-\gamma\) always, every \(g\) has risk at least
\(d\tau(\tfrac12-\gamma)\), with equality iff \(g(x_0)=y_0\) and
\(g(x_i)=y_i^{(\sigma_i)}\) for all \(i\); the hypothesis restricting to
\(g_\sigma\) achieves this, and by the restriction hypothesis no member
of \(\cH\) does better.  This proves (i).  Moreover, comparing
\eqref{eq:lb-risk-identity} against the minimum and using
\(\bigl(\tfrac12+\gamma\bigr)-\bigl(\tfrac12-\gamma\bigr)=2\gamma\) and
\(1-\bigl(\tfrac12-\gamma\bigr)\ge2\gamma\),
\[
 \Risk_{\widetilde\cD_\sigma}(g)-\bar L
 \ \ge\ 2\gamma\tau\sum_{i=1}^d
 \ind\bigl\{g(x_i)\ne y_i^{(\sigma_i)}\bigr\}.
\tag{B.2}\label{eq:lb-excess-decomp}
\]

\emph{Per-coordinate testing.}  Fix a coordinate \(i\) and a value of
\(\sigma_{-i}\), and compare the two sample laws
\(\widetilde\cD_{\sigma}^{n}\) with \(\sigma_i=\pm1\).  They
differ only in the label law at \(x_i\), so, writing
\(\kl(\alpha\,\|\,\beta)\) for the divergence between Bernoulli laws, the
per-observation divergence is
\[
 \tau\cdot\kl\Bigl(\tfrac12+\gamma\,\Big\|\,\tfrac12-\gamma\Bigr)
 =\tau\cdot2\gamma\log\frac{\tfrac12+\gamma}{\tfrac12-\gamma}
 \le \tau\cdot\frac{4\gamma^2}{\tfrac12-\gamma}
 \le16\tau\gamma^2,
\tag{B.3}\label{eq:lb-kl}
\]
using \(\log(1+t)\le t\) and \(\gamma\le\tfrac14\).  By the chain rule
and Pinsker's inequality (see, e.g., \cite{boucheron2013concentration}),
\[
 \TV\Bigl(\widetilde\cD_{\sigma^{i,+}}^{n},
 \widetilde\cD_{\sigma^{i,-}}^{n}\Bigr)
 \le\sqrt{8n\tau\gamma^2}
 =\sqrt{\frac{1}{8(\tfrac12-\gamma)}}
 \le\frac{1}{\sqrt3},
\tag{B.4}\label{eq:lb-tv}
\]
where we substituted
\(n\tau\gamma^2=\frac{1}{64(1/2-\gamma)}\) from the definitions of
\(\gamma\) and \(\tau\), and used \(\gamma\le\tfrac18\).  Define the test
\(\Psi:=+1\) if \(A(S)(x_i)=a_i\) and \(\Psi:=-1\) otherwise; then
\(\ind\{A(S)(x_i)\ne y_i^{(\sigma_i)}\}\ge\ind\{\Psi\ne\sigma_i\}\) in both
cases of \(\sigma_i\).  By the Neyman--Pearson two-point bound
(see, e.g., \cite[Chapter~2]{tsybakov2009introduction}), for
\(\sigma_i\) uniform and conditionally on \(\sigma_{-i}\),
\[
 \E_{\sigma_i}\,\E\,\ind\{\Psi\ne\sigma_i\}
 \ \ge\ \frac{1-\TV}{2}
 \ \ge\ \frac{1-1/\sqrt3}{2}
 \ \ge\ \frac15 .
\tag{B.5}\label{eq:lb-test}
\]

\emph{Combining.}  Averaging \eqref{eq:lb-test} over \(\sigma_{-i}\),
summing over \(i\), and inserting into \eqref{eq:lb-excess-decomp}:
\[
 \E_\sigma\,\E_{S\sim\widetilde\cD_\sigma^{n}}
 \bigl[\Risk_{\widetilde\cD_\sigma}(A(S))\bigr]-\bar L
 \ \ge\ 2\gamma\tau\cdot\frac d5
 =\frac{2\gamma}{5}\cdot\frac{\bar L}{\tfrac12-\gamma}
 \ \ge\ \frac{4\gamma\bar L}{5}
 =\frac{4\bar L}{5}\cdot\frac18\sqrt{\frac{d}{n\bar L}}
 =\frac1{10}\sqrt{\frac{\bar Ld}{n}} . \qedhere
\]
\end{proof}

\subsection{The realizable pseudo-cube bound}
\label{app:lower-ds}

\begin{lemma}[Padded leave-one-out bound]
\label{lem:ds-lower}
Let \(V\subseteq[K]^D\) be a pseudo-cube realized by hypotheses of
\(\cH\) on distinct points \(z_1,\ldots,z_D\).  Fix
\(\nu\in(0,\tfrac12]\) and let \(\{\cD_v\}_{v\in V}\) be distributions of
the form
\[
 \cD_v=(1-\nu)\,\cQ+\nu\,\cU_v,
\]
where the outside component \(\cQ\) does not depend on \(v\) and is
supported off \(\{z_1,\ldots,z_D\}\), and \(\cU_v\) is the uniform
distribution on the \(D\) labeled pairs \(\{(z_i,v_i):i\in[D]\}\).  Then for every learner \(A\), with
\(v\) uniform on \(V\),
\[
 \E_v\,\E_{S\sim\cD_v^{n}}
 \Biggl[\frac{\nu}{D}\sum_{i=1}^{D}
 \ind\{A(S)(z_i)\ne v_i\}\Biggr]
 \ \ge\ \frac{\nu}{2}\Bigl(1-\frac{\nu}{D}\Bigr)^{n}
 \ \ge\ \frac{\nu}{2}\,e^{-2\nu n/D}.
\tag{B.6}\label{eq:lb-ds}
\]
\end{lemma}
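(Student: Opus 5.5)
The plan is to reduce to deterministic learners, as the appendix preamble allows, and then argue by Bayesian posterior analysis under the uniform prior on \(V\). For each coordinate \(i\), let \(E_i\) be the event that no observation of \(S\) falls on \(z_i\). The law of the set of visited points does not depend on \(v\): each observation independently lands in the \(\cQ\) component with probability \(1-\nu\) (a \(v\)-free law), or on a uniformly chosen \(z_j\) with probability \(\nu\), carrying label \(v_j\). Hence \(\P(E_i)=(1-\nu/D)^n\) for every \(v\). I will lower bound \(\ind\{A(S)(z_i)\ne v_i\}\) by \(\ind\{A(S)(z_i)\ne v_i\}\ind_{E_i}\) and show that, conditionally on \(E_i\), the error probability at \(z_i\) is at least \(\tfrac12\).

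The key step is to identify the posterior. Condition on the sample's "skeleton": which observations fell into \(\cQ\) together with their values, and which indices \(j\) were hit on the pseudo-cube side. The skeleton has a \(v\)-independent law, and given the skeleton the observed labels are exactly \((v_j)_{j\in J^c}\), where \(J\subseteq[D]\) is the set of unvisited indices. Under the uniform prior, the posterior on \(v\) given the full sample is therefore uniform on the cloud
\[
 C=\{u\in V:u_j=w_j\ \forall j\notin J\},
\]
where \(w\) holds the observed labels. This cloud is nonempty because it contains the true \(v\). On \(E_i\) we have \(i\in J\), so \cref{lem:fiber} applies: for the deterministic prediction \(y=A(S)(z_i)\), which is a function of the sample, the posterior probability that \(v_i=y\) is at most \(\tfrac12\). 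Hence \(\P(A(S)(z_i)\ne v_i\mid S)\ge\tfrac12\) on \(E_i\).

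Taking expectations gives \(\E\,\ind\{A(S)(z_i)\ne v_i\}\ge\tfrac12(1-\nu/D)^n\) for each \(i\). Averaging with weight \(\nu/D\) over \(i\in[D]\) yields the first inequality in \eqref{eq:lb-ds}. The second inequality follows from \(\log(1-t)\ge-2t\) for \(t\in[0,\tfrac12]\), which applies since \(\nu/D\le\nu\le\tfrac12\).

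The main obstacle is making the posterior claim rigorous. The points \(z_j\) are distinct and \(\cQ\) is supported off them, so each observation's location unambiguously reveals the component and the index, and the likelihood of the sample given \(v\) factors as a \(v\)-free term times \(\prod\ind\{\text{observed label at }z_j=v_j\}\). This factorization is exactly what makes the posterior uniform on \(C\). Because all supports are finite, the Bayes computation reduces to counting.
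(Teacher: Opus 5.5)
Your proposal is correct and follows the paper's proof essentially step for step. Both use the \(v\)-free law of the visited set, the posterior uniform on the cloud \(C\), \cref{lem:fiber} at each unseen coordinate, and \(\E|J|=D(1-\nu/D)^n\), which you write per coordinate through \(\P(E_i)\) (the same thing by linearity), with your ``skeleton'' likelihood factorization being a slightly more explicit version of the paper's one-line justification of the posterior.
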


\begin{proof}
Let \(J\subseteq[D]\) be the set of indices \(i\) such that \(z_i\) does
not occur in the sample.  The \(x\)-part of the sample has the same law
for every \(v\), and the labels at the observed \(z_j\) equal \(v_j\)
deterministically, while the labels of the outside component carry no
information about \(v\).  Fix the entire sample except for the identity
of \(v\): under the uniform prior, the posterior of \(v\) is uniform on
the cloud \(C=\{u\in V:u_j=v_j\ \forall j\notin J\}\), because every
\(u\in C\) generates the observed sample with the same probability and
every \(u\notin C\) with probability zero.  For \(i\in J\), the
prediction \(A(S)(z_i)\) is a fixed label given the sample, so by
\cref{lem:fiber} it disagrees with \(v_i\) with conditional probability
at least \(\tfrac12\).  Hence
\[
 \E_v\,\E_{S\sim\cD_v^{n}}\Biggl[\frac{\nu}{D}\sum_{i=1}^{D}
 \ind\{A(S)(z_i)\ne v_i\}\Biggr]
 \ \ge\ \frac{\nu}{D}\cdot\frac12\,\E\,|J|
 =\frac{\nu}{2}\Bigl(1-\frac{\nu}{D}\Bigr)^{n},
\]
since each index is unseen with probability \((1-\nu/D)^n\).  The final
bound in \eqref{eq:lb-ds} is the inequality \(1-t\ge e^{-2t}\) for
\(t\in[0,\tfrac12]\), applied with \(t=\nu/D\le\tfrac12\).
\end{proof}

\subsection{Dimension calculus}
\label{app:lower-dims}

\LemDimCalc*

\begin{proof}
\emph{Projection.}  Let \(V\subseteq[K]^D\) be a pseudo-cube and
\(S\subseteq[D]\) nonempty.  For \(u\in V|_S\) pick a preimage \(v\in V\)
and a coordinate \(i\in S\); the neighbor \(v^{(i)}\), which differs from
\(v\) exactly at \(i\), restricts to an element of \(V|_S\) differing
from \(u\) exactly at \(i\).  Hence \(V|_S\) is a pseudo-cube.

\emph{Sums, lower direction.}  Concatenating a Natarajan-shattered
(resp.\ DS-shattered) sequence of \(\cH_1\) with one of \(\cH_2\)
shatters the concatenation, because a product of sign cubes is a sign
cube and a product of pseudo-cubes is a pseudo-cube: a single-coordinate
neighbor changes one factor only.

\emph{Sums, upper direction.}  The trace of \(\cH_1\oplus\cH_2\) on a
mixed sequence is the product of the factor traces.  If it contains a
pseudo-cube \(W\), the projection of \(W\) to the side-1 coordinates is a
pseudo-cube contained in the trace of \(\cH_1\), so the number of side-1
coordinates is at most \(\DS(\cH_1)\); likewise for side 2, giving
\(\DS(\cH_1\oplus\cH_2)\le\DS(\cH_1)+\DS(\cH_2)\).  The Natarajan case is
identical, since restricting the \(2^d\) sign patterns to a coordinate
subset yields all sign patterns on that subset.

\emph{Anchors.}  In a shattered sequence every coordinate takes at least
two values in the trace: a witness pair for Natarajan shattering, a
differing neighbor for DS shattering.  Both fail at a point on which all
hypotheses agree.
\end{proof}

\subsection{Proof of the lower bound theorem}
\label{app:lower-main}

\ThmLower*

\begin{proof}
\emph{The class.}  Let \(\cW\) be the class of \cref{lem:separation}
with \(D:=\dDS\), realized by a pseudo-cube \(V\subseteq[K_D]^{D}\) on
points \(z_1,\ldots,z_D\), with \(\Nat(\cW)\le1\) and \(\DS(\cW)=D\).
Let \(\cH_{\mathrm{cube}}\) be the cube family of \cref{sec:lower} on
\(x_0,\ldots,x_{\dN}\).  Since binary pseudo-cubes are connected under
single-coordinate flips, the only pseudo-cube in a sign cube is the full
cube, so \(\Nat(\cH_{\mathrm{cube}})=\DS(\cH_{\mathrm{cube}})=\dN\), the
anchor contributing nothing by \cref{lem:dim-calculus}.  Set
\(\cH:=\cH_{\mathrm{cube}}\oplus\cW\) on the disjoint union of the two
domains, with the label alphabet the union of the two label sets.  By
\cref{lem:dim-calculus},
\(\Nat(\cH)=\dN+\Nat(\cW)\in\{\dN,\dN+1\}\) and
\(\DS(\cH)=\dN+\dDS\), as stated.

\emph{The distribution family.}  Fix \(n\ge1\) and
\(\Lstar\in[0,1/8]\), and set
\[
 \nu:=\min\Bigl\{\frac Dn,\frac14\Bigr\},
 \qquad
 \bar L:=\frac{\Lstar}{1-\nu}\ \in\ \Bigl[\Lstar,\tfrac43\Lstar\Bigr]
 \ \subseteq\ \Bigl[0,\tfrac16\Bigr].
\]
The inner side-1 distribution \(\widetilde\cD_\sigma\) is chosen by
cases.  If \(\Lstar\ge\dN/n\), take \(\widetilde\cD_\sigma\) as in
\cref{lem:assouad-lower} with parameters \((d,\bar L)=(\dN,\bar L)\);
this is admissible because \(\bar L\ge\Lstar\ge\dN/n\) and
\(\bar L\le1/6\).  If \(0<\Lstar<\dN/n\), let \(\widetilde\cD_\sigma\)
(independent of \(\sigma\)) place mass \(2\bar L\) on \(x_1\) with the
label uniform on \(\{a_1,b_1\}\) and mass \(1-2\bar L\) on \((x_0,y_0)\);
every hypothesis then has side-1 risk exactly \(\bar L\), which
calibrates the oracle risk without any claim of hardness.  If
\(\Lstar=0\), place all side-1 mass on \((x_0,y_0)\).  In all cases define,
for \((\sigma,v)\in\{\pm1\}^{\dN}\times V\),
\[
 \cD_{\sigma,v}
 :=(1-\nu)\,\widetilde\cD_\sigma+\nu\,\cU_v .
\tag{B.7}\label{eq:lb-family}
\]
Because \(\cH\) is a product class and the two supports are disjoint,
\[
 \inf_{h\in\cH}\Risk_{\cD_{\sigma,v}}(h)
 =(1-\nu)\inf_{h_1}\Risk_{\widetilde\cD_\sigma}(h_1)
 +\nu\cdot0
 =(1-\nu)\bar L=\Lstar
\tag{B.8}\label{eq:lb-exact}
\]
for every \((\sigma,v)\): the side-1 infimum is \(\bar L\) by
\cref{lem:assouad-lower}(i) in the first case and by direct computation
in the other two, and the side-2 component is realizable (the hypothesis
realizing \(v\) makes no error there).  Excess risk therefore decomposes
as
\[
 \Risk_{\cD_{\sigma,v}}(g)-\Lstar
 =(1-\nu)\bigl(\Risk_{\widetilde\cD_\sigma}(g)-\bar L\bigr)
 +\frac{\nu}{D}\sum_{i=1}^{D}\ind\{g(z_i)\ne v_i\},
\tag{B.9}\label{eq:lb-split}
\]
where the first term is nonnegative for every \(g\) by the side-1
infimum property.

\emph{Side-2 term.}  \Cref{lem:ds-lower} applies with
\(\cQ=\widetilde\cD_\sigma\) for each fixed \(\sigma\): the outside
component does not depend on \(v\) and avoids \(\{z_i\}\).  Averaging
over \(\sigma\) as well,
\[
 \E_{\sigma,v}\,\E_{S\sim\cD_{\sigma,v}^{n}}
 \Biggl[\frac{\nu}{D}\sum_{i=1}^{D}
 \ind\{A(S)(z_i)\ne v_i\}\Biggr]
 \ \ge\ \frac{\nu}{2}e^{-2\nu n/D}
 \ \ge\ \frac1{15}\min\Bigl\{\frac Dn,\frac14\Bigr\},
\tag{B.10}\label{eq:lb-side2}
\]
by the two regimes of \(\nu\): if \(n\ge4D\) then \(\nu=D/n\le\tfrac14\)
and \(\tfrac\nu2e^{-2\nu n/D}=\tfrac{D}{2e^2n}\ge\tfrac1{15}\cdot\tfrac
Dn\); if \(n<4D\) then \(\nu=\tfrac14\) and \(2\nu n/D<2\), so
\(\tfrac\nu2e^{-2\nu n/D}\ge\tfrac18e^{-2}\ge\tfrac1{15}\cdot\tfrac14\).

\emph{Side-1 term.}  Suppose \(\Lstar\ge\dN/n\).  The argument of
\cref{lem:assouad-lower}(ii) applies to the padded family
\(\{\cD_{\sigma,v}\}\) with \(v\) fixed: for each coordinate \(i\), the
two laws \(\cD_{\sigma^{i,\pm},v}\) differ only in the label law at
\(x_i\), whose mass is now \((1-\nu)\tau\le\tau\), so the per-observation
divergence is at most the bound \eqref{eq:lb-kl} and
\eqref{eq:lb-tv}--\eqref{eq:lb-test} are unchanged.  The excess
decomposition \eqref{eq:lb-split} carries the factor \((1-\nu)\), and
with \((1-\nu)\sqrt{\bar L}=\sqrt{1-\nu}\sqrt{\Lstar}
\ge\sqrt{3/4}\,\sqrt{\Lstar}\),
\[
 \E_{\sigma,v}\,\E_{S\sim\cD_{\sigma,v}^{n}}
 \Bigl[(1-\nu)
 \bigl(\Risk_{\widetilde\cD_\sigma}(A(S))-\bar L\bigr)\Bigr]
 \ \ge\ (1-\nu)\cdot\frac1{10}\sqrt{\frac{\bar L\dN}{n}}
 \ \ge\ \frac{\sqrt3}{20}\sqrt{\frac{\Lstar\dN}{n}}
 \ \ge\ \frac1{12}\sqrt{\frac{\Lstar\dN}{n}} .
\tag{B.11}\label{eq:lb-side1}
\]
(The side-1 risk of \(A(S)\) under \(\widetilde\cD_\sigma\) is read off
the same decomposition \eqref{eq:lb-risk-identity}, which only concerns
the side-1 points; the side-2 part of the sample is shared by the two
tested laws and enters the test as ancillary information.)  For
\(\Lstar<\dN/n\) no side-1 claim is made and the first term of
\eqref{eq:lower-bound} is zero.

\emph{Conclusion.}  Summing \eqref{eq:lb-side1} and \eqref{eq:lb-side2}
inside \eqref{eq:lb-split},
\[
 \E_{\sigma,v}\,\E_{S\sim\cD_{\sigma,v}^{n}}
 \bigl[\Risk_{\cD_{\sigma,v}}(A(S))\bigr]-\Lstar
 \ \ge\
 \frac1{12}\,\ind\{\Lstar\ge\dN/n\}\sqrt{\frac{\Lstar\dN}{n}}
 +\frac1{15}\min\Bigl\{\frac{\dDS}{n},\frac14\Bigr\},
\]
and some \((\sigma,v)\) attains at least the average, giving a single
distribution \(\cD:=\cD_{\sigma,v}\) with the stated properties.  The
family \eqref{eq:lb-family} depends only on \((\cH,\Lstar,n)\).
\end{proof}

\subsection{Which engine for which structure}
\label{app:lower-engines-scope}

The two engines of this appendix, and the classical devices they
replace, are separated by the combinatorial structure they consume;
\cref{tab:engines} records the comparison, since the choice is reusable
beyond this paper.

\begin{table}[t]
  \centering
  \caption{Lower-bound devices and the structure they require.  The
  fiber engine is the one that survives when the shattered structure is
  a pseudo-cube but not a product; at width \(w\) its per-coordinate
  yield becomes \((w-r)/w\) against \(r\)-lists (\cref{app:list}),
  which is where \cref{cor:list-realizable} gains its factor \(r\).}
  \label{tab:engines}
  \footnotesize
  \begin{tabular}{@{}>{\raggedright\arraybackslash}p{0.30\textwidth}>{\raggedright\arraybackslash}p{0.33\textwidth}>{\raggedright\arraybackslash}p{0.27\textwidth}@{}}
    \toprule
    Device & Structure required & Per-coordinate yield \\
    \midrule
    Pair-Assouad
    (\cref{lem:assouad-lower}, \cref{lem:list-assouad})
      & full product cube of witness labels in the trace; commuting
        coordinate flips index the family
      & testing: excess \(\gamma\tau\) per tested coordinate,
        fluctuation type \\
    \addlinespace
    Leave-one-out \cite{ehrenfeucht1989lower}
      & realizable completions at unseen points, classically again a
        product trace
      & realizable type, \(1/n\) per coordinate \\
    \addlinespace
    Fiber engine
    (\cref{lem:fiber}, \cref{lem:ds-lower}, \cref{lem:list-ds-lower})
      & pseudo-cube only: a neighbor in each direction; no product, no
        involution structure
      & conditional error \(\ge\tfrac12\) per unseen coordinate;
        \(\ge(w-r)/w\) at width \(w\) \\
    \bottomrule
  \end{tabular}
\end{table}

The six-cycle of \cref{ex:sixcycle} shows that the extra generality of
the third row is not hypothetical, and it makes the phrase ``does not
lift'' from the abstract concrete.  The six-cycle is a pseudo-cube on
two coordinates with six vertices, and \(6\ne|S_1|\cdot|S_2|\) for any
witness sets realizing its trace as a product, so no Assouad indexation
by sign patterns exists.  Its per-coordinate flips are well defined,
since every fiber has exactly two elements, but they do not commute:
writing \(\phi_1,\phi_2\) for the two flips, on the vertex \((0,0)\)
one has \(\phi_1\phi_2(0,0)=(1,1)\) while \(\phi_2\phi_1(0,0)=(2,2)\);
the two flips generate a rotation of the hexagon rather than a
\(\{\pm1\}^2\)-action.  Every binary-flavored argument that pairs
distributions along commuting coordinate involutions therefore breaks
on this class, while the fiber engine only reads the sizes of fibers
and proceeds.  This is why side~2 of the construction in
\cref{app:lower-main} runs on the leave-one-out engine and not on an
Assouad engine.

A second reusable pattern is the calibration recipe behind
\eqref{eq:lb-exact}.  To pin the oracle risk of a family exactly at a
prescribed \(\Lstar\), pad with weight \(\nu\) of realizable mass, set
\(\bar L=\Lstar/(1-\nu)\), and spread \(\bar L\) over the noisy
component so that every classifier pays at least \(\bar L\) there
pointwise; the infimum is then \((1-\nu)\bar L=\Lstar\) exactly, it is
attained inside the class, and, since the pointwise bound holds for
every classifier, the calibration is Bayes-exact
(\cref{app:lean}, finding~(1)).  The family depends only on
\((\cH,\Lstar,n)\), never on the learner.  Any fixed-oracle-level lower
bound that claims equality rather than an inequality for its oracle
risk can reuse this padding unchanged.

\section{Proofs for the list extension}
\label{app:list}

Throughout this appendix \(r\ge1\) is the list size.  We identify a
classifier \(f\) with the singleton list \(x\mapsto\{f(x)\}\); under this
identification every definition below restricts to its counterpart in
the main text at \(r=1\).  For a menu \(\mu\) and a list-valued
\(\lambda\), the inside-menu list loss is
\[
 \ell_\mu\bigl(\lambda,(x,y)\bigr)
 =\ind\{y\notin\lambda(x),\ y\in\mu(x)\},
 \qquad
 \MenuLoss(\lambda)=\E\,\ell_\mu(\lambda,(X,Y)),
\tag{C.1}\label{eq:list-menu-loss}
\]
which extends \eqref{eq:menu-loss} verbatim.  Empirical versions
\(\Rhat^{\ell_\mu}_s\) are defined as in \cref{sec:architecture}.

\subsection{The list menu calculus}

\begin{lemma}[List selective-menu decomposition]
\label{lem:list-menu-calculus}
Let \(\mu\) be a menu.  For all list-valued \(\lambda,\lambda'\),
\[
 \ListRisk_{\cD}(\lambda)-\ListRisk_{\cD}(\lambda')
 \le
 \MenuLoss(\lambda)-\MenuLoss(\lambda')
 +\P\bigl(Y\in\lambda'(X),\ Y\notin\mu(X)\bigr),
\tag{C.2}\label{eq:list-risk-algebra}
\]
and \(\MenuLoss(\lambda)\le\ListRisk_{\cD}(\lambda)\).  Moreover, for
every tuple \(\vec h=(h^1,\ldots,h^r)\in\cH^r\) and every classifier
tuple \((f_1,\ldots,f_r)\),
\[
 \P\bigl(Y\in\Lambda_{\vec h}(X),\ Y\notin\mu(X)\bigr)
 \le
 \sum_{j=1}^{r}
 \Bigl[\P\bigl(h^j(X)=Y\ne f_j(X)\bigr)
 +\P\bigl(f_j(X)=Y,\ Y\notin\mu(X)\bigr)\Bigr].
\tag{C.3}\label{eq:list-miss-transfer}
\]
\end{lemma}

\begin{proof}
For \eqref{eq:list-risk-algebra}, fix \((x,y)\).  If \(y\in\mu(x)\), the
difference of the two list losses equals the difference of the two
inside-menu list losses.  If \(y\notin\mu(x)\), both inside-menu losses
vanish, and the list-loss difference can be positive only when
\(y\in\lambda'(x)\), in which case it is at most one.  Taking
expectations proves \eqref{eq:list-risk-algebra}; the comparison
\(\ell_\mu(\lambda,z)\le\ind\{y\notin\lambda(x)\}\) is pointwise.  For
\eqref{eq:list-miss-transfer}, on the event
\(\{Y\in\Lambda_{\vec h}(X),\,Y\notin\mu(X)\}\) there is some \(j\) with
\(h^j(X)=Y\notin\mu(X)\); bound the probability of each such event by
\eqref{eq:menu-miss-transfer} with \(h=h^j\) and \(f=f_j\), and sum.
\end{proof}

\subsection{Relative compression with product messages}

The proof of \cref{thm:relative-compression} in \cref{app:relative}
interacts with the selection scheme only through the count of possible
descriptions: a message of at most \(k\) ordered sample elements has at
most \((N+1)^{k+1}\) realizations, and the argument is a union bound
over them.  The following variant records the count for messages that
are \(r\)-tuples of ordinary messages.

\begin{lemma}[Relative compression, product messages]
\label{lem:relative-product}
Let \(\ell:\cF\times\cZ\to\{0,1\}\) and \(\cH\subseteq\cF\) be as in
\cref{thm:relative-compression}, and let \(A\) be a measurable
deterministic scheme whose message on a sample \(s\in\cZ^N\) is an
\(r\)-tuple \((t^1,\ldots,t^r)\) of ordered sequences of at most \(L\)
elements of \(s\) each, with \(A(s)=\rho(t^1,\ldots,t^r)\), and which
satisfies the domination premise \eqref{eq:relative-premise}.  Then the
conclusion \eqref{eq:relative-bound} of
\cref{thm:relative-compression} holds with
\(\Gamma_N(k,\delta)\) taken at \(k+1=r(L+1)\), that is, with
\[
 \Gamma
 =\frac{r(L+1)\log(N+1)+\log(4/\delta)}{N}.
\]
\end{lemma}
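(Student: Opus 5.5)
The plan is to rerun the proof of \cref{thm:relative-compression} in \cref{app:relative} verbatim, replacing the single-sequence index set by a product index set. A description is now an \(r\)-tuple \(\vec I=(I^1,\ldots,I^r)\) of ordered index sequences, each of length at most \(L\). Its count is
\[
 M_r\le\Bigl(\sum_{j=0}^{L}N^j\Bigr)^r\le(N+1)^{r(L+1)},
\]
which matches \eqref{eq:description-count} with \(k+1=r(L+1)\). Let \(U_{\vec I}=\bigcup_j U_{I^j}\) be the set of distinct coordinates named anywhere in the tuple, with \(r_{\vec I}=|U_{\vec I}|\le rL\). Set \(f_{\vec I}=\rho(Z_{I^1},\ldots,Z_{I^r})\). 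Conditioning on \((Z_i)_{i\in U_{\vec I}}\) fixes \(f_{\vec I}\) and leaves the other \(N-r_{\vec I}\) observations iid. The second inequality of \cref{lem:bernoulli-bernstein} plus a union bound over the \(M_r\) descriptions then gives \eqref{eq:bernstein-reconstruction} simultaneously for all \(\vec I\), at level \(\delta/2\), with \(s_{\vec I}=\log(2M_r/\delta)/(N-r_{\vec I})\).

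For the realized sample, choose any tuple of index sequences whose values equal the message \((t^1,\ldots,t^r)\). Empirical domination \eqref{eq:relative-premise} and nonnegativity of the loss give \eqref{eq:heldout-domination} with \(N/(N-r_{\vec I})\). The comparator event \eqref{eq:comparator-bernstein} is unchanged, since it does not involve the message.

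The only arithmetic to recheck is the reduction step. When \(\Gamma\ge1/20\) the claim is trivial, exactly as before. Otherwise \(\log(2M_r/\delta)\le N\Gamma\) and \(\log(2/\delta)\le N\Gamma\), because \(\Gamma\) now contains \(r(L+1)\log(N+1)\). The original proof used \(\theta=k/N\le\Gamma/\log2\) to bound the deleted fraction. Here we use \(\theta=r_{\vec I}/N\le rL/N\le r(L+1)/N\le\Gamma/\log 2\), since \(\log(N+1)\ge\log 2\). So \(\theta<0.073\) and \((1-\theta)^{-1}\le1.08\) still hold, and \eqref{eq:q-s-bounds} follows with the same constants. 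Solving the implicit inequality \eqref{eq:implicit-solved} is purely algebraic and yields \(L+8\sqrt{L\Gamma}+12\Gamma\), hence \eqref{eq:relative-bound}.

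The only real point to watch is this bound on the deleted set: it must be controlled by the exponent \(r(L+1)\) appearing in \(\Gamma\), not by a separate \(k\). The inequality \(rL\le r(L+1)\) settles it, and everything else transfers without change.
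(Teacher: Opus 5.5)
Your proposal is correct and follows the paper's proof. Like the paper, you rerun the argument of the relative compression theorem with the product description count $(N+1)^{r(L+1)}$ in place of $M$, noting that each tuple names at most $rL$ distinct coordinates. Your explicit check that the deleted fraction satisfies $\theta\le rL/N\le r(L+1)/N\le\Gamma/\log 2$, so that $(1-\theta)^{-1}\le 1.08$ and the constants $12$ and $20$ survive, is exactly the step the paper compresses into ``every step of the argument is unchanged.''
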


\begin{proof}
The proof of \cref{thm:relative-compression} fixes a description,
conditions on the named coordinates, applies
\cref{lem:bernoulli-bernstein} on the untouched coordinates, and takes a
union bound over all descriptions; the number of descriptions enters
only through the bound \(M\le(N+1)^{k+1}\) on their count, which sets
the \((k+1)\log(N+1)\) term of \(\Gamma\).  An \(r\)-tuple of ordered
sequences of at most \(L\) sample elements has at most
\(\bigl((N+1)^{L+1}\bigr)^{r}=(N+1)^{r(L+1)}\) realizations, and each
tuple names at most \(rL\) sample coordinates, so every step of the
argument is unchanged after replacing \(M\) by this count.
\end{proof}

\subsection{Fixed-menu compression for tuple comparators}

\begin{lemma}[Fixed-menu list compression]
\label{lem:list-compression}
Let \(\mu\) be a fixed measurable menu with
\(\sup_x|\mu(x)|\le p\) for a deterministic \(p\ge2\), and let \(\cH\)
have Natarajan dimension \(\dN\ge1\).  For every \(N\ge1\) there is a
measurable deterministic scheme whose message is an \(r\)-tuple of
ordered sequences of at most \(L=k_\mu(N)\le C\dN\log(p)\log(eN)\)
sample elements, whose reconstruction is a list rule of size at most
\(r\), and which is empirically dominating against tuples: for every
sample \(s\in\cZ^N\), its output \(\widehat\lambda\) satisfies
\[
 \Rhat^{\ell_\mu}_s(\widehat\lambda)
 \ \le\
 \min_{\vec h\in\cH^{r}}\Rhat^{\ell_\mu}_s(\Lambda_{\vec h}).
\tag{C.4}\label{eq:list-domination}
\]
\end{lemma}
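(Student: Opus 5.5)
Let $B=\LSCS(\mu,\cH,p)$ be the single-hypothesis scheme of \cref{lem:fixed-menu-compression}, with selector $\kappa$ and reconstruction $\rho$; the construction will make $B$ work group by group on a partition of the sample. Given $s\in\cZ^N$, first choose deterministically (fixed tie-breaking, finite trace on $s$ so the minimum is attained) a tuple $\vec h=(h^1,\ldots,h^r)\in\cH^r$ minimizing $\Rhat^{\ell_\mu}_s(\Lambda_{\vec h})$. Split the indices of $s$ into $r+1$ ordered groups. An observation $(x_i,y_i)$ with $y_i\in\mu(x_i)$ and $y_i\in\Lambda_{\vec h}(x_i)$ goes to group $G_j$ for the least $j$ with $h^j(x_i)=y_i$. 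All remaining observations go to a discard group $G_0$. For each $j$, let $s_j$ be the ordered subsequence of $s$ indexed by $G_j$ (possibly empty), and set $t^j=\kappa(s_j)$ and $f_j=\rho(t^j)$. The message is $(t^1,\ldots,t^r)$. The reconstruction is $\widehat\lambda(x)=\{f_1(x),\ldots,f_r(x)\}$, a list of size at most $r$, and it depends only on the message as required.

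\emph{Size.} Each $t^j$ consists of elements of $s_j$, hence of $s$. Its length is at most $k_\mu(|s_j|)\le k_\mu(N)$, provided the size function is monotone in the sample length. If monotonicity is not available, replace $k_\mu(N)$ by $\sup_{m\le N}k_\mu(m)$, which obeys the same bound $C\dN\log(p)\log(eN)$. The empty subsample gives the empty message under the convention of \cref{lem:ordinary-compression}.

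\emph{Domination.} Apply empirical domination of $B$ on $s_j$ with comparator $h^j$. On $s_j$, $h^j$ is correct at every point, so $\Rhat^{\ell_\mu}_{s_j}(h^j)=0$, and therefore $\Rhat^{\ell_\mu}_{s_j}(f_j)=0$. Thus $f_j(x_i)=y_i$ for every $i\in G_j$ (all such points are inside the menu). Consequently $\widehat\lambda$ covers $y_i$ at every point of $G_1\cup\cdots\cup G_r$. On $G_0$ the loss $\ell_\mu$ of any list is at most $\ind\{y_i\in\mu(x_i),\,y_i\notin\Lambda_{\vec h}(x_i)\}$, which is exactly the loss of $\Lambda_{\vec h}$ there. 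Summing over groups gives $\Rhat^{\ell_\mu}_s(\widehat\lambda)\le\Rhat^{\ell_\mu}_s(\Lambda_{\vec h})$, which equals the minimum in \eqref{eq:list-domination}.

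\emph{Measurability.} The selection of $\vec h$ is a deterministic tie-broken minimization, and the partition and the $r$ applications of $B$ are measurable under \cref{asm:measurable}, so the whole scheme is measurable.

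The main obstacle is the interface between the subsample compressor and the shared description. Each $t^j$ must be an ordered sequence of elements of the full sample $s$, which holds because $s_j$ is a subsequence of $s$. The reconstruction must not need to know $\vec h$ or the partition, and it does not, since $\rho$ reconstructs $f_j$ from $t^j$ alone. What I would check most carefully is that domination of $B$ on a subsample at zero comparator loss really forces zero inside-menu error at every point of $s_j$. This holds because the loss takes values in $\{0,1\}$, so zero empirical loss on $s_j$ means zero loss at each of its points.
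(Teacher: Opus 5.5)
Your proof is correct and follows the paper's argument: fix a minimizing tuple, partition the inside-menu-correct observations by the first component that covers them, run the single-hypothesis scheme of \cref{lem:fixed-menu-compression} on each part, and output the union of the $r$ reconstructions. The only difference is cosmetic. The paper uses the partition purely as an existence argument and defines the selector as the first valid $r$-tuple of messages in a fixed enumeration of the finitely many candidates, so it never has to pick a minimizer in $\cH^r$ inside the selector; you pick one directly by tie-breaking, and you are also slightly more careful than the paper about monotonicity of $k_\mu$.
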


\begin{proof}
Write \(B=\LSCS(\mu,\cH,p)\) for the single-hypothesis scheme of
\cref{lem:fixed-menu-compression}, with size function
\(k_\mu(\cdot)\), and let \(\rho_B\) be its reconstruction.  Define the
decoder first: on an \(r\)-tuple \((t^1,\ldots,t^r)\) of messages, set
\[
 \rho(t^1,\ldots,t^r)(x)
 =\{\rho_B(t^1)(x),\ldots,\rho_B(t^r)(x)\},
\]
a list of size at most \(r\).  Define the selector \(\kappa(s)\) to be
the first, in a fixed enumeration of the finitely many \(r\)-tuples of
ordered sequences of at most \(L\) elements of \(s\), tuple \(t\) whose
reconstruction satisfies \eqref{eq:list-domination}; measurability is
\cref{asm:measurable} together with finiteness of the search space.  It
remains to prove that a valid tuple exists.

The empirical loss \(\Rhat^{\ell_\mu}_s(\Lambda_{\vec h})\) takes values
in \(\{0,\tfrac1N,\ldots,1\}\), so its infimum over \(\vec h\in\cH^r\)
is attained; fix a minimizer \(\vec h^\ast=(h^{\ast1},\ldots,h^{\ast
r})\).  Let
\[
 s^\ast
 =\bigl\{i\le N:\ y_i\in\mu(x_i),\ y_i\in\Lambda_{\vec h^\ast}(x_i)\bigr\},
\]
the observations on which the minimizing tuple is inside-menu correct,
and partition \(s^\ast\) into \(s^\ast_1,\ldots,s^\ast_r\) by assigning
\(i\) to the smallest \(j\) with \(h^{\ast j}(x_i)=y_i\).  Each
\(s^\ast_j\), viewed as a sample in the order inherited from \(s\), is
handed to \(B\): let \(t^j=\kappa_B(s^\ast_j)\) and
\(g_j=\rho_B(t^j)\), with the empty-part convention of
\cref{lem:ordinary-compression} when \(s^\ast_j=\varnothing\).  Since
\(h^{\ast j}\) is inside-menu correct on every observation of
\(s^\ast_j\), empirical domination of \(B\) on the sample
\(s^\ast_j\) gives
\[
 \Rhat^{\ell_\mu}_{s^\ast_j}(g_j)
 \le
 \Rhat^{\ell_\mu}_{s^\ast_j}(h^{\ast j})
 =0,
\]
so \(g_j\) is inside-menu correct on all of \(s^\ast_j\).
Consequently the list \(\lambda_0(x)=\{g_1(x),\ldots,g_r(x)\}\) has
\(\ell_\mu(\lambda_0,(x_i,y_i))=0\) for every \(i\in s^\ast\), and for
\(i\notin s^\ast\) either \(y_i\notin\mu(x_i)\), where every list has
zero loss, or \(y_i\in\mu(x_i)\) and \(\vec h^\ast\) also errs.  Hence
\[
 \Rhat^{\ell_\mu}_s(\lambda_0)
 \le
 \frac{|\{i:\ y_i\in\mu(x_i)\}\setminus s^\ast|}{N}
 =
 \Rhat^{\ell_\mu}_s(\Lambda_{\vec h^\ast}),
\]
and \((t^1,\ldots,t^r)\) is a valid tuple: each \(t^j\) is an ordered
sequence of at most \(k_\mu(|s^\ast_j|)\le k_\mu(N)=L\) elements of
\(s^\ast_j\subseteq s\).  The size bound on \(L\) is
\eqref{eq:menu-compression-size}.
\end{proof}

As in \cref{lem:menu-finalizer}, menus with \(T\le2\) admit a size-zero
branch: the singleton rule \(g_\mu\) of that lemma, viewed as a
\(1\)-list, already has zero inside-menu loss pointwise, and we use it
for every \(r\).  Write \(B^{(r)}_{\mu,T}\) for the resulting scheme
(\cref{lem:list-compression} with \(p=T-1\) when \(T\ge3\)).

\subsection{Proof of \texorpdfstring{\cref{thm:list-upper}}{Theorem 5.1}}

\ThmListUpper*

\begin{proof}
Run \cref{alg:rmc} unchanged through its first two blocks, and replace
the third-block finalizer by \(B^{(r)}_{\mu,T}\); the output is
\(\widehat\lambda=B^{(r)}_{\mu,T}(S_3)\), a list of size at most \(r\).
The block sizes again satisfy \(n_i\ge n/5\).

Fix \(\eps>0\) and a tuple \(\vec h^\star\in\cH^r\) with
\(\ListRisk_{\cD}(\Lambda_{\vec h^\star})\le\Lstarr+\eps\).  For each
component define the canonical witness
\(f^\star_j=A_1(S_1[h^{\star j}])\in\widehat\cF\).  Apply
\cref{lem:compression-cover} to each component with
\(\eta=\delta/(3r)\), and \cref{lem:mw-menu} to each witness with
\(\eta=\delta/(3r)\); the \(\sigma\)-field bookkeeping is verbatim that
of \cref{app:main}, applied \(r\) times, and the tower property is used
component by component.  On the intersection
\(\mathcal E_1\cap\mathcal E_2\) of the resulting events, which fails
with probability at most \(2\delta/3\),
\eqref{eq:list-miss-transfer} gives
\[
 \alpha^{(r)}_\mu
 \defeq
 \P\bigl(Y\in\Lambda_{\vec h^\star}(X),\ Y\notin\mu(X)\bigr)
 \le
 C\,\frac{r\bigl(\dDS\log^2(en)+\log(er/\delta)\bigr)}{n}.
\tag{C.5}\label{eq:list-alpha}
\]

Conditional on \(\mathcal G_2=\sigma(S_1,S_2,R_\mu)\), the menu is
fixed, \(S_3\) is iid and independent, and
\(B^{(r)}_{\mu,T}\) is an empirically dominating scheme against tuples
by \cref{lem:list-compression}, with product messages of block length
\(L=k_\mu(n_3)\le C\dN\log(T-1)\log(en_3)\) (zero if \(T\le2\)).  Apply
\cref{lem:relative-product} conditionally, with comparator class
\(\{\Lambda_{\vec h}:\vec h\in\cH^r\}\), the fixed comparator
\(\Lambda_{\vec h^\star}\), and failure probability \(\delta/3\); the
premise \eqref{eq:relative-premise} is \eqref{eq:list-domination}.  Its
\(\Gamma\) satisfies, in either branch,
\[
 \Gamma
 =\frac{r(L+1)\log(n_3+1)+\log(12/\delta)}{n_3}
 \le
 C\,\frac{r\dN\log^3(en)+\log(1/\delta)}{n}.
\tag{C.6}\label{eq:list-gamma}
\]
On the resulting event \(\mathcal E_3\),
\[
 \MenuLoss(\widehat\lambda)-\MenuLoss(\Lambda_{\vec h^\star})
 \le
 12\sqrt{\MenuLoss(\Lambda_{\vec h^\star})\,\Gamma}+20\,\Gamma.
\tag{C.7}\label{eq:list-relative}
\]
By \cref{lem:list-menu-calculus},
\(\MenuLoss(\Lambda_{\vec h^\star})
\le\ListRisk_{\cD}(\Lambda_{\vec h^\star})\le\Lstarr+\eps\), and
\[
 \ListRisk_{\cD}(\widehat\lambda)
 -\ListRisk_{\cD}(\Lambda_{\vec h^\star})
 \le
 \MenuLoss(\widehat\lambda)-\MenuLoss(\Lambda_{\vec h^\star})
 +\alpha^{(r)}_\mu
\]
holds deterministically.  Substituting \eqref{eq:list-alpha},
\eqref{eq:list-gamma}, and \eqref{eq:list-relative}, and summing the
three failure probabilities, proves the bound with \(\Lstarr+\eps\) in
place of \(\Lstarr\) with probability at least \(1-\delta\).  The
learner does not depend on \(\eps\), so letting \(\eps\downarrow0\)
along a sequence and using continuity from above, exactly as at the end
of \cref{app:main}, completes the proof.  At \(r=1\) the construction
and the bound coincide with those of \cref{thm:main}.
\end{proof}

\subsection{The \texorpdfstring{\((r+1)\)}{(r+1)}-ary Assouad lemma}

\begin{lemma}[Localized \(r\)-Natarajan lower bound]
\label{lem:list-assouad}
Let \(d\ge1\), let \(x_0,x_1,\ldots,x_d\in\cX\) be distinct points, let
\(a_1,\ldots,a_{r+1}\) be distinct labels and \(y_0\) an anchor label,
and let \(\cH\) be a class whose restrictions to
\(\{x_0,\ldots,x_d\}\) are exactly the cube family
\(\{g_c:c\in[r+1]^d\}\), \(g_c(x_0)=y_0\), \(g_c(x_i)=a_{c_i}\).  Fix
\(n\ge1\) and \(\bar L\in[d/n,\ \tfrac1{6(r+1)}]\), and set
\[
 \gamma=\frac1{8(r+1)}\sqrt{\frac{d}{n\bar L}}\ \in\Bigl(0,\tfrac1{8(r+1)}\Bigr],
 \qquad
 \tau=\frac{\bar L}{d\bigl(\tfrac1{r+1}-\gamma\bigr)},
 \qquad
 d\tau\le\tfrac{8}{7}(r+1)\bar L\le\tfrac{4}{21}.
\]
For \(\sigma\in[r+1]^d\) let \(\widetilde\cD_\sigma\) place mass
\(1-d\tau\) on \((x_0,y_0)\) and mass \(\tau\) on each \(x_i\), with
label \(a_{\sigma_i}\) having conditional probability
\(\tfrac1{r+1}-\gamma\) and each other candidate
\(a_c\), \(c\ne\sigma_i\), having conditional probability
\(\tfrac1{r+1}+\tfrac\gamma r\).  Then:
\begin{enumerate}
\item[(i)] every list \(\lambda\) of size at most \(r\) has
\(\ListRisk_{\widetilde\cD_\sigma}(\lambda)\ge
d\tau(\tfrac1{r+1}-\gamma)=\bar L\), with equality attained by the
tuple \((g_{\sigma+1},\ldots,g_{\sigma+r})\), where
\((\sigma+k)_i\equiv\sigma_i+k\ (\mathrm{mod}\ r+1)\);
\item[(ii)] for every learner \(A\) outputting lists of size at most
\(r\), with \(\sigma\) uniform on \([r+1]^d\),
\[
 \E_{\sigma}\,\E_{S\sim\widetilde\cD_\sigma^{n}}
 \bigl[\ListRisk_{\widetilde\cD_\sigma}(A(S))\bigr]-\bar L
 \ \ge\ \frac1{40}\sqrt{\frac{\bar Ld}{n}}.
\]
\end{enumerate}
\end{lemma}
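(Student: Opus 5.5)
The plan is to follow the proof of \cref{lem:assouad-lower} step for step, changing two things: the per-point risk identity now involves $r+1$ candidates, and the two-point test becomes an $(r+1)$-ary identification problem, which I will reduce back to pairwise total-variation bounds.

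\emph{Part (i) and the excess decomposition.} Fix a list $\lambda$ with $|\lambda(x)|\le r$. At the anchor, $\lambda$ loses $1-d\tau$ unless $y_0\in\lambda(x_0)$. At $x_i$, the loss is the total conditional mass of labels outside $\lambda(x_i)$. Since at most $r$ of the $r+1$ candidates fit in the list, this mass is at least the smallest candidate probability, $\tfrac1{r+1}-\gamma$. Equality holds exactly when $\lambda(x_i)=\{a_c:c\ne\sigma_i\}$. The shifted tuple $(g_{\sigma+1},\ldots,g_{\sigma+r})$ produces precisely this list at every $x_i$ and contains $y_0$ at the anchor, so it attains $d\tau(\tfrac1{r+1}-\gamma)=\bar L$. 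This proves (i), and the calibration is Bayes-exact because the lower bound holds pointwise for every list. Now suppose $\lambda(x_i)$ is not exactly the complement of $a_{\sigma_i}$. Then the list misses some other candidate, of mass $\tfrac1{r+1}+\tfrac\gamma r$, or misses a second label. In either case the excess at $x_i$ is at least $\gamma+\gamma/r\ge\gamma$. Define $\Psi_i$ to be the smallest candidate index absent from $A(S)(x_i)$; such an index exists because the list has at most $r$ entries. Then
\[
 \ListRisk_{\widetilde\cD_\sigma}(A(S))-\bar L\ \ge\ \gamma\tau\sum_{i=1}^d\ind\{\Psi_i\ne\sigma_i\}.
\]

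\emph{Per-coordinate testing.} Fix $i$ and $\sigma_{-i}$, and write $P_c$ for the sample law when $\sigma_i=c$. The laws $P_c$ and $P_{c'}$ differ only by exchanging the label probabilities of $a_c$ and $a_{c'}$ at $x_i$. Their per-observation KL divergence is therefore $2\tau(q-p)\log(q/p)$ with $q-p=\gamma(1+\tfrac1r)\le2\gamma$ and $p\ge\tfrac1{r+1}-\gamma\ge\tfrac{7}{8(r+1)}$. Using $\log(1+t)\le t$ this is at most $C(r+1)\tau\gamma^2$. Multiplying by $n$ and substituting the definitions of $\gamma$ and $\tau$ makes $n(r+1)\tau\gamma^2$ an absolute constant: the factor $(r+1)^{-2}$ in $\gamma^2$ cancels against the $(r+1)^2$ coming from $\tau$ and the $1/p$. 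Pinsker's inequality then gives $\TV(P_c,P_{c'})\le t_0<1$ for an explicit constant $t_0$. To pass to $r+1$ hypotheses, I use $P_c(\Psi=c)\le P_{c'}(\Psi=c)+t_0$ and average over $c'\ne c$. Summing over $c$ and writing $E$ for the average error under the uniform prior yields $1-E\le E/r+t_0$, hence
\[
 E\ \ge\ \frac{r(1-t_0)}{r+1}\ \ge\ \frac{1-t_0}{2}.
\]

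\emph{Combining and the main obstacle.} Averaging over $\sigma$ and summing over $i$ gives
\[
 \text{excess}\ \ge\ \gamma\tau d\cdot\frac{1-t_0}{2}
 =\frac{\gamma\bar L}{\tfrac1{r+1}-\gamma}\cdot\frac{1-t_0}{2}
 \ \ge\ (r+1)\gamma\bar L\cdot\frac{1-t_0}{2}
 =\frac{1-t_0}{16}\sqrt{\frac{\bar Ld}{n}}.
\]
The conceptual steps are routine. The main obstacle is the constant bookkeeping: I need $t_0\le 3/5$ to obtain the stated $1/40$. I will first try the crude bounds $q-p\le2\gamma$ and $p\ge 7/(8(r+1))$. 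If those do not reach $3/5$, I will instead use the exact value $q-p=\gamma(r+1)/r$, or tighten the KL estimate via $\chi^2$.
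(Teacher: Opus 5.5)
Your proposal is correct and follows the paper's proof essentially step for step. It uses the same risk identity and Bayes-exact calibration, the same test $\Psi_i$ (smallest excluded candidate index), the same two-atom-swap KL bound via $\log(1+t)\le t$ and Pinsker, and the same final step $\gamma\tau d\ge(r+1)\gamma\bar L$. Your reduction $1-E\le E/r+t_0$ is only a slightly sharper repackaging of the paper's sum of pairwise Neyman--Pearson bounds, which yields $E\ge(1-t_0)/2$.

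The numerics close with your crude bounds, so no fallback is needed. The swap divergence is actually $\tau(q-p)\log(q/p)$, without your extra factor $2$, which gives $n\,\mathrm{KL}\le\tfrac{32}{7}\cdot\tfrac1{56}$ and $t_0\le0.21$; even with the factor $2$ you get $t_0\le0.29<3/5$.
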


\begin{proof}
\emph{Risk identity and (i).}  For any list \(\lambda\), the risk under
\(\widetilde\cD_\sigma\) decomposes over the atoms:
\[
 \ListRisk_{\widetilde\cD_\sigma}(\lambda)
 =(1-d\tau)\,\ind\{y_0\notin\lambda(x_0)\}
 +\sum_{i=1}^d \tau
 \sum_{c:\,a_c\notin\lambda(x_i)}\pi_{\sigma_i}(c),
\tag{C.8}\label{eq:list-risk-identity}
\]
where \(\pi_{\sigma_i}(c)\) is the conditional label law at \(x_i\).
Since \(|\lambda(x_i)|\le r<r+1\), at least one candidate is excluded at
each \(x_i\), and every candidate has mass at least
\(\tfrac1{r+1}-\gamma\); this proves the infimum claim.  The displayed
tuple excludes exactly \(a_{\sigma_i}\) at \(x_i\) and contains
\(y_0\) at \(x_0\) (every \(g_c(x_0)=y_0\)), attaining equality.
Moreover, whenever the excluded set at \(x_i\) is not exactly
\(\{a_{\sigma_i}\}\), some candidate of mass
\(\tfrac1{r+1}+\tfrac\gamma r\) is excluded, so
\[
 \ListRisk_{\widetilde\cD_\sigma}(\lambda)-\bar L
 \ \ge\
 \gamma\Bigl(1+\tfrac1r\Bigr)\tau\sum_{i=1}^d
 \ind\bigl\{\Psi_i(\lambda)\ne\sigma_i\bigr\},
\tag{C.9}\label{eq:list-excess}
\]
where \(\Psi_i(\lambda)\in[r+1]\) denotes the smallest index of a
candidate excluded by \(\lambda\) at \(x_i\).

\emph{Per-coordinate testing.}  Fix \(i\), fix \(\sigma_{-i}\), and fix
two values \(c\ne c'\) of \(\sigma_i\).  The two laws
\(\widetilde\cD_{\sigma}\) differ only by exchanging the masses of the
two atoms \((x_i,a_c)\) and \((x_i,a_{c'})\), which are
\(\tau(\tfrac1{r+1}-\gamma)\) and \(\tau(\tfrac1{r+1}+\tfrac\gamma r)\).
Writing \(q=\tfrac1{r+1}-\gamma\ge\tfrac{7}{8(r+1)}\) and
\(q'=q+\gamma(1+\tfrac1r)\), the per-observation divergence is
\[
 \kl\bigl(\widetilde\cD_{\sigma^{i,c}}\,\big\|\,
 \widetilde\cD_{\sigma^{i,c'}}\bigr)
 =\tau\,(q'-q)\log\frac{q'}{q}
 \le
 \tau\,\frac{(q'-q)^2}{q}
 \le
 5(r+1)\tau\gamma^2,
\tag{C.10}\label{eq:list-kl}
\]
using \(\log(1+t)\le t\) and \((1+\tfrac1r)^2\le4\).  By the chain rule
and Pinsker's inequality, and then the definitions of \(\gamma\) and
\(\tau\),
\[
 \TV\bigl(\widetilde\cD_{\sigma^{i,c}}^{\,n},
 \widetilde\cD_{\sigma^{i,c'}}^{\,n}\bigr)
 \le\sqrt{\tfrac52\,n(r+1)\tau\gamma^2}
 \le\sqrt{\tfrac52\cdot\tfrac1{56}}
 \le\frac1{\sqrt3}.
\tag{C.11}\label{eq:list-tv}
\]
Let \(\Psi=\Psi_i(A(S))\).  For each unordered pair \(\{c,c'\}\), the
Neyman--Pearson two-point bound gives
\(\P(\Psi\ne c\mid\sigma_i=c)+\P(\Psi\ne c'\mid\sigma_i=c')\ge
1-\TV\ge1-1/\sqrt3\).  Summing over the \(\binom{r+1}2\) pairs, each
conditional appears \(r\) times, so for \(\sigma_i\) uniform,
\[
 \E_{\sigma_i}\,\P(\Psi\ne\sigma_i)
 \ \ge\ \frac{1-1/\sqrt3}{2}\ \ge\ \frac15 .
\tag{C.12}\label{eq:list-test}
\]

\emph{Combining.}  Averaging \eqref{eq:list-test} over \(\sigma_{-i}\),
summing over \(i\), and inserting into \eqref{eq:list-excess}:
\[
 \E_\sigma\,\E\bigl[\ListRisk_{\widetilde\cD_\sigma}(A(S))\bigr]-\bar L
 \ \ge\
 \gamma\Bigl(1+\tfrac1r\Bigr)\tau\cdot\frac d5
 \ \ge\ \frac{\gamma\,(r+1)\bar L}{5}
 \ =\ \frac1{40}\sqrt{\frac{\bar Ld}{n}},
\]
using \(\tau d\ge(r+1)\bar L\) in the middle step.
\end{proof}

\subsection{The list leave-one-out bound}

\begin{lemma}[Padded leave-one-out bound, list version]
\label{lem:list-ds-lower}
Let \(V\subseteq[K]^D\) be realized by hypotheses of \(\cH\) on distinct
points \(z_1,\ldots,z_D\), and suppose every \(v\in V\) has, in every
coordinate \(i\), at least \(2r-1\) neighbors in \(V\) differing from it
exactly at \(i\).  Fix \(\nu\in(0,\tfrac12]\) and let
\(\cD_v=(1-\nu)\cQ+\nu\,\cU_v\) be as in \cref{lem:ds-lower}.  Then for
every learner \(A\) outputting lists of size at most \(r\), with \(v\)
uniform on \(V\),
\[
 \E_v\,\E_{S\sim\cD_v^{n}}
 \Biggl[\frac{\nu}{D}\sum_{i=1}^{D}
 \ind\{v_i\notin A(S)(z_i)\}\Biggr]
 \ \ge\ \frac{\nu}{2}\Bigl(1-\frac{\nu}{D}\Bigr)^{n}.
\]
\end{lemma}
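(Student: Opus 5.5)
The plan is to rerun the argument of \cref{lem:ds-lower} unchanged up to the point where \cref{lem:fiber} was invoked, and to replace that invocation by a list version of the fiber count. Let \(J\subseteq[D]\) be the set of indices \(i\) such that \(z_i\) does not occur in \(S\). Two facts carry over verbatim from \cref{lem:ds-lower}. First, the \(x\)-marginal of the sample does not depend on \(v\), and the outside component \(\cQ\) carries no information about \(v\). Second, under the uniform prior, conditional on the sample the posterior of \(v\) is uniform on the cloud \(C=\{u\in V:u_j=v_j\ \forall j\notin J\}\). It suffices to treat deterministic learners, so fix \(S\) and \(i\in J\). The list \(A(S)(z_i)\) is then a fixed set \(\Lambda\) with \(|\Lambda|\le r\), and I must show \(|\{u\in C:u_i\in\Lambda\}|\le\tfrac12|C|\).

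For the list fiber count, I partition \(C\) into fibers as before: \(u\sim u'\) iff \(u_j=u'_j\) for all \(j\ne i\). Take \(u\in C\). Each of its at least \(2r-1\) neighbors in direction \(i\) agrees with \(u\) off \(i\). Since \(i\in J\), each such neighbor lies in \(C\) and in the fiber of \(u\). So every fiber has at least \(2r\) elements. Within a fiber, distinct elements differ exactly at coordinate \(i\), so their \(i\)th entries are pairwise distinct. Hence at most \(r\) members of each fiber have \(u_i\in\Lambda\), which is at most half of a fiber of size \(\ge2r\). Summing over fibers gives the claim. Therefore \(\P(v_i\notin A(S)(z_i)\mid S)\ge\tfrac12\) for every \(i\in J\).

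Averaging gives \(\tfrac{\nu}{D}\cdot\tfrac12\E|J|\). Each index is unseen independently of \(v\) with probability \((1-\nu/D)^n\), because each draw hits \(z_i\) with probability \(\nu/D\). This yields the stated bound. The only point needing care is the neighbor count. The hypothesis says \(2r-1\) neighbors rather than \(2r\), and I have to confirm that counting \(u\) itself brings the fiber size to \(2r\), which is exactly what the ratio \(r/(2r)=\tfrac12\) requires. The rest is a direct transcription of \cref{lem:fiber} and \cref{lem:ds-lower}.
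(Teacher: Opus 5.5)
Your proposal is correct and follows the paper's proof essentially step for step. You reuse the cloud/posterior argument of \cref{lem:ds-lower}, then replace \cref{lem:fiber} by the count that each fiber has at least \(2r\) elements with pairwise distinct \(i\)-values, so a list of size at most \(r\) covers at most half of each fiber, and you conclude with \(\E|J|=D(1-\nu/D)^n\).
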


\begin{proof}
The proof of \cref{lem:ds-lower} applies verbatim up to the fiber step:
conditionally on the sample, the posterior of \(v\) is uniform on the
cloud \(C=\{u\in V:u_j=v_j\ \forall j\notin J\}\), and for an unseen
index \(i\in J\) the list \(A(S)(z_i)\) is fixed given the sample.
Partition \(C\) into fibers by the values off \(i\).  If \(u\in C\), its
\(2r-1\) neighbors at coordinate \(i\) agree with \(u\) off \(i\), hence
lie in the same fiber and in \(C\) (the cloud constraint involves only
coordinates outside \(J\)); distinct elements of a fiber differ exactly
at \(i\), so each fiber contains at least \(2r\) elements with pairwise
distinct \(i\)-values.  A list of size \(r\) therefore contains the
\(i\)-value of at most half of each fiber, and the conditional
probability that \(v_i\notin A(S)(z_i)\) is at least \(\tfrac12\).  The
remainder of the proof, including
\(\E|J|=D(1-\nu/D)^n\), is unchanged.
\end{proof}

\subsection{The band family and \texorpdfstring{\(r\)}{r}-ary dimension
calculus}

\begin{lemma}[Width-\(2r\) band]
\label{lem:band}
Let \(m=4r+2\), \(K_2=\Z_m\), and
\[
 V_r=\bigl\{(i,j)\in\Z_m\times\Z_m:\ j-i\in\{0,1,\ldots,2r-1\}\bigr\},
\]
viewed as functions on two points.  Then (a) every \(v\in V_r\) has
exactly \(2r-1\) neighbors in each coordinate; (b) the class \(V_r\) has
\(r\)-Natarajan dimension exactly \(1\); (c) for every \(t\ge1\) the
product \(V_r^{\otimes t}\), on \(D=2t\) points, retains property (a),
satisfies \(\dNk{r}(V_r^{\otimes t})\le t\), and is \(r\)-DS shattered
on all \(D\) coordinates.
\end{lemma}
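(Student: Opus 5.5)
The plan is to verify (a), (b), (c) directly from the definition of \(V_r\), using only the cyclic-window structure of \(\Z_m\) with \(m=4r+2\). Write \(W(i)=\{i,i+1,\ldots,i+2r-1\}\subseteq\Z_m\) for the window of admissible second coordinates above \(i\). Then \((i,j)\in V_r\) if and only if \(j\in W(i)\), which is equivalent to \(i\in W(j-2r+1)\), the window ending at \(j\).

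For (a), fix \(v=(i,j)\in V_r\). Neighbors at coordinate \(2\) are the pairs \((i,j')\) with \(j'\in W(i)\setminus\{j\}\), and there are \(2r-1\) of them. Neighbors at coordinate \(1\) are the pairs \((i',j)\) with \(i'\in\{j,j-1,\ldots,j-2r+1\}\setminus\{i\}\), again \(2r-1\) of them. The \(2r\) residues involved are distinct because \(2r<m\).

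For (b), the lower bound needs one point carrying \(r+1\) labels. Coordinate \(1\) takes all \(m\ge r+1\) values, so either point is \(r\)-Natarajan shattered.

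For the upper bound, a sequence that repeats a point cannot be shattered, since the trace forces equal values on the two copies while \(Y\times Y\) contains off-diagonal pairs. So it remains to rule out witness sets \(Y_1,Y_2\) of size \(r+1\) with \(Y_1\times Y_2\subseteq V_r\). Such a product would force \(Y_2\subseteq\bigcap_{i\in Y_1}W(i)\), and I will show this intersection has at most \(r\) elements.

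Pick any \(j\) in the intersection. Then \(Y_1\) lies in the arc \(\{j-2r+1,\ldots,j\}\), which has length \(2r\) and does not wrap, because \(m>2r\). Order \(Y_1\) along this arc with extremes \(i_{\min},i_{\max}\). Since \(Y_1\) has \(r+1\) distinct elements, \(i_{\max}-i_{\min}\ge r\). Inside the arc, the intersection of the windows is \([i_{\max},\,i_{\min}+2r-1]\), which has size \(2r-(i_{\max}-i_{\min})\le r\).

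The step to check carefully is that no extra wrap-around overlap appears. The complementary overlap of two windows at cyclic offset \(\delta\le 2r\) has size \(2r-(m-\delta)=\delta-2r-2<0\), and this is exactly where \(m=4r+2\) is used. I expect this cyclic bookkeeping to be the only real obstacle.

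For (c), the product \(V_r^{\otimes t}\) consists of \(t\) copies of \(V_r\) on disjoint pairs of points. A single-coordinate neighbor changes only one factor, so property (a) is inherited coordinatewise.

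The \(r\)-DS shattering of all \(D=2t\) coordinates follows because the finite trace set \(V_r^{\otimes t}\) itself gives every member at least \(2r-1\ge r\) neighbors in each coordinate.

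For \(\dNk{r}\le t\), I will mirror the upper direction of \cref{lem:dim-calculus}. Suppose a product \(\prod_i Y_i\) of witness sets lies in the trace on some sequence. Restricting it to the coordinates of one factor gives a product of witness sets inside that factor's trace. By (b), at most one coordinate can come from each factor, and summing over the \(t\) factors gives the bound \(t\).
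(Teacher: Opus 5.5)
Your proposal is correct and follows essentially the same route as the paper: (a) and (c) match its fiber count and its restriction-to-factors use of the dimension calculus. Your bound \(\bigl|\bigcap_{i\in Y_1}W(i)\bigr|\le 2r-(i_{\max}-i_{\min})\le r\) is the same integer-lifting spread argument that the paper phrases as \((\max S_2-\min S_2)+(\max S_1-\min S_1)\le 2r-1\), and your explicit handling of repeated points and of the wrap-around overlap (where \(m=4r+2\) enters) fills in two details the paper leaves implicit.
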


\begin{proof}
(a) Fixing \(j\), the first coordinate ranges over the \(2r\) values
\(j-2r+1,\ldots,j\); fixing \(i\), the second ranges over
\(i,\ldots,i+2r-1\).  In each case the \(2r-1\) other admissible values
give neighbors differing in exactly that coordinate.

(b) One point is \(r\)-Natarajan shattered: each coordinate takes
\(2r\ge r+1\) values.  Suppose the pair of points were shattered with
witness sets \(S_1,S_2\subseteq\Z_m\), \(|S_1|=|S_2|=r+1\), so that
\(S_1\times S_2\subseteq V_r\).  Fix \(i_0\in S_1\); every \(j\in S_2\)
satisfies \(j-i_0\in\{0,\ldots,2r-1\}\), so \(S_2\) lies in a cyclic
window of length \(2r<m/2\), and likewise \(S_1\); lifting both windows
to integer representatives, all differences \(j-i\), \(i\in S_1\),
\(j\in S_2\), must lie in \([0,2r-1]\).  In particular
\(\max S_2-\min S_1\le 2r-1\) and \(\min S_2-\max S_1\ge0\); subtracting,
\[
 (\max S_2-\min S_2)+(\max S_1-\min S_1)\ \le\ 2r-1,
\]
while each spread is at least \(|S_i|-1=r\), a contradiction.

(c) The dimension calculus of \cref{lem:dim-calculus} applies verbatim
at arity \(r\): a product of structures with property (a) has property
(a), since a single-coordinate neighbor changes one factor; restricting
an \((r+1)\)-ary product pattern to a subset of coordinates yields the
full pattern on that subset, so the sum bound
\(\dNk{r}(\cH_1\oplus\cH_2)\le\dNk{r}(\cH_1)+\dNk{r}(\cH_2)\) and its
DS analogue hold with the same proofs; and the \(r\)-DS shattering of
all \(2t\) coordinates is witnessed by \(V_r^{\otimes t}\) itself.
Finally, the counting bound behind the caps generalizes: if
\(W\subseteq[K]^k\) gives every element at least \(r\) neighbors in
every coordinate, then projecting off the last coordinate and using the
fact that each fiber has at least \(r+1\) elements yields
\(|W|\ge(r+1)^{k}\) by induction on \(k\).
\end{proof}

\subsection{Proof of \texorpdfstring{\cref{thm:list-lower}}{Theorem 5.2}}

\ThmListLower*

\begin{proof}
\emph{The class.}  Let \(t=D/2\) and let \(\cW=V_r^{\otimes t}\) be the
band power of \cref{lem:band}, realized on points
\(z_1,\ldots,z_D\) with label set \(\Z_{4r+2}\), disjoint from the
side-1 labels.  Let \(\cH_{\mathrm{cube}}=\{g_c:c\in[r+1]^d\}\) be the
cube family of \cref{lem:list-assouad} on \(x_0,\ldots,x_d\), and let
\(\cH=\cH_{\mathrm{cube}}\oplus\cW\) on the disjoint union of the
domains.  The trace of \(\cH_{\mathrm{cube}}\) on
\(x_1,\ldots,x_d\) is the full product \(\{a_1,\ldots,a_{r+1}\}^d\), so
\(\dNk{r}(\cH)\ge d\), and it gives every member \(r\) neighbors per
coordinate, so together with \cref{lem:band}(c),
\(\dDSk{r}(\cH)\ge d+D\).  For the caps: an \(r\)-Natarajan shattering
of \(k\) cube coordinates forces \((r+1)^k\le|\cH_{\mathrm{cube}}|
=(r+1)^d\), the anchor cannot appear in any shattered sequence, the
band power contributes at most \(t\) by \cref{lem:band}(c), and the sum
bound gives \(\dNk{r}(\cH)\le d+t=d+\tfrac D2\).  Likewise an \(r\)-DS
shattering of \(k\) cube coordinates forces
\((r+1)^k\le(r+1)^d\) by the counting bound of \cref{lem:band}(c), a
shattering of side-2 coordinates is injective on \(\{z_1,\ldots,z_D\}\),
and the sum bound yields \(\dDSk{r}(\cH)\le d+D\), so
\(\dDSk{r}(\cH)=d+D\).

\emph{The family.}  Fix \(n\ge1\) and
\(\Lstarr\in[0,\tfrac1{8(r+1)}]\), and set
\[
 \nu=\min\Bigl\{\frac Dn,\frac14\Bigr\},
 \qquad
 \bar L=\frac{\Lstarr}{1-\nu}
 \ \in\ \Bigl[\Lstarr,\tfrac43\Lstarr\Bigr]
 \ \subseteq\ \Bigl[0,\tfrac1{6(r+1)}\Bigr].
\]
If \(\Lstarr\ge d/n\), let \(\widetilde\cD_\sigma\) be as in
\cref{lem:list-assouad} with parameters \((d,\bar L)\); this is
admissible since \(\bar L\ge\Lstarr\ge d/n\).  If \(\Lstarr<d/n\), use
the same family with \(\gamma=0\): mass \(\tau_0=\bar L(r+1)/d\) on
each \(x_i\) with the label uniform on the \(r+1\) candidates
(independent of \(\sigma\)), and the remaining mass on
\((x_0,y_0)\); every \(r\)-list then has side-1 risk at least
\(d\tau_0\cdot\tfrac1{r+1}=\bar L\), with equality for the same cube
tuples, which calibrates the oracle list risk without any hardness
claim, and the case \(\Lstarr=0\) is the degenerate instance
\(\tau_0=0\).  In all cases define, for
\((\sigma,v)\in[r+1]^d\times\cW\),
\[
 \cD_{\sigma,v}=(1-\nu)\,\widetilde\cD_\sigma+\nu\,\cU_v,
\tag{C.13}\label{eq:list-family}
\]
with \(\cU_v\) uniform on \(\{(z_i,v_i):i\in[D]\}\).  Because the two
supports are disjoint and the tuple
\(\bigl((g_{\sigma+1},w_v),\ldots,(g_{\sigma+r},w_v)\bigr)\), with
\(w_v\in\cW\) the member realizing \(v\), has side-2 list error zero,
\[
 \inf_{\vec h\in\cH^{r}}
 \ListRisk_{\cD_{\sigma,v}}(\Lambda_{\vec h})
 =(1-\nu)\bar L=\Lstarr
\tag{C.14}\label{eq:list-exact}
\]
exactly, for every \((\sigma,v)\); by
\cref{lem:list-assouad}(i) the infimum in \eqref{eq:list-exact} extends
to all lists of size \(r\) from any class, which is the Bayes-exactness
claim.  The excess of any \(r\)-list \(\lambda\) splits as
\[
 \ListRisk_{\cD_{\sigma,v}}(\lambda)-\Lstarr
 =(1-\nu)\bigl(\ListRisk_{\widetilde\cD_\sigma}(\lambda)-\bar L\bigr)
 +\frac{\nu}{D}\sum_{i=1}^{D}\ind\{v_i\notin\lambda(z_i)\},
\tag{C.15}\label{eq:list-split}
\]
with a nonnegative first term.

\emph{Side-2 term.}  \Cref{lem:list-ds-lower} applies with
\(\cQ=\widetilde\cD_\sigma\) for each fixed \(\sigma\), by
\cref{lem:band}(a) for \(\cW\).  Averaging over \(\sigma\) as well, and
repeating the two-regime computation of \eqref{eq:lb-side2} verbatim,
\[
 \E_{\sigma,v}\,\E
 \Biggl[\frac{\nu}{D}\sum_{i=1}^{D}
 \ind\{v_i\notin A(S)(z_i)\}\Biggr]
 \ \ge\ \frac{\nu}{2}\Bigl(1-\frac\nu D\Bigr)^{n}
 \ \ge\ \frac1{15}\min\Bigl\{\frac Dn,\frac14\Bigr\}.
\tag{C.16}\label{eq:list-side2}
\]

\emph{Side-1 term.}  Suppose \(\Lstarr\ge d/n\).  The argument of
\cref{lem:list-assouad}(ii) applies to the padded family with \(v\)
fixed: the two tested laws differ only in the two swapped atom masses,
now carrying the factor \((1-\nu)\tau\le\tau\), so
\eqref{eq:list-kl}--\eqref{eq:list-test} are unchanged, and the side-2
part of the sample is shared by the two laws and enters as ancillary
information.  The split \eqref{eq:list-split} carries the factor
\((1-\nu)\), and with
\((1-\nu)\sqrt{\bar L}\ge\sqrt{3/4}\,\sqrt{\Lstarr}\ge\tfrac56\sqrt{\Lstarr}\),
\[
 \E_{\sigma,v}\,\E
 \Bigl[(1-\nu)\bigl(\ListRisk_{\widetilde\cD_\sigma}(A(S))-\bar
 L\bigr)\Bigr]
 \ \ge\ (1-\nu)\cdot\frac1{40}\sqrt{\frac{\bar Ld}{n}}
 \ \ge\ \frac1{48}\sqrt{\frac{\Lstarr d}{n}} .
\tag{C.17}\label{eq:list-side1}
\]
For \(\Lstarr<d/n\) no side-1 claim is made and the first term of
\eqref{eq:list-lower} is zero.

\emph{Conclusion.}  Summing \eqref{eq:list-side1} and
\eqref{eq:list-side2} inside \eqref{eq:list-split}, some
\((\sigma,v)\) attains at least the average, which is
\eqref{eq:list-lower}; the family \eqref{eq:list-family} depends only
on \((\cH,\Lstarr,n)\), so the same averaging covers randomized
learners.
\end{proof}

\begin{proof}[Proof of \cref{cor:list-realizable}]
Let the domain be \(\{x_0,z_1,\ldots,z_D\}\), let \(y_0\) be a label
outside \([2r]\), and let
\(\cH=\{w_v:v\in[2r]^D\}\) with \(w_v(z_i)=v_i\) and
\(w_v(x_0)=y_0\).  Every \(v\) has exactly \(2r-1\) neighbors in
every coordinate, so \(V=[2r]^D\) witnesses \(\dDSk{r}(\cH)\ge D\);
conversely a shattered sequence is injective and cannot contain
\(x_0\), on which all members agree, so \(\dDSk{r}(\cH)=D\).  For
\(v\in[2r]^D\) set \(\nu=\min\{D/n,\tfrac14\}\) and
\(\cD_v=(1-\nu)\,\delta_{(x_0,y_0)}+\nu\,\cU_v\); each \(\cD_v\) is
realizable by \(w_v\).  \Cref{lem:list-ds-lower} applies with
\(\cQ=\delta_{(x_0,y_0)}\), and
\(\ListRisk_{\cD_v}(\lambda)\ge
\frac\nu D\sum_i\ind\{v_i\notin\lambda(z_i)\}\) pointwise, so for
\(v\) uniform every learner has
\(\E_v\,\E\bigl[\ListRisk_{\cD_v}(A(S))\bigr]
\ge\frac\nu2(1-\nu/D)^n\), which the numeric chain of
\eqref{eq:lb-side2} bounds below by
\(\frac1{15}\min\{D/n,\tfrac14\}\).  Some \(v\) attains the
average, and the family does not depend on the learner, which covers
randomized learners.
\end{proof}

\begin{remark}[Interface for the general range]
\label{rem:list-interface}
Call \(V\subseteq[K_2]^D\) an \emph{\((r,D)\)-separation family} if
every element has at least \(2r-1\) neighbors in every coordinate and
the induced class has \(r\)-Natarajan dimension at most \(1\).
Replacing \(\cW\) by such a family in the proof above extends
\cref{thm:list-lower} to every \(D\ge1\), with
\(d\le\dNk{r}(\cH)\le d+1\).  \Cref{lem:band} provides the case
\(D=2\) for every \(r\), and \cite{brukhim2022characterization} the
case \(r=1\) for every \(D\); constructing \((r,D)\)-separation
families for all \((r,D)\), plausibly by the methods of
\cite{charikar2023list}, is left open.
\end{remark}

\section{Machine-checked formalization}
\label{app:lean}

Both directions of the paper have been formalized in Lean~4 over Mathlib:
the relative compression theorem together with the three-block assembly
behind \cref{thm:main}, and the lower bound \cref{thm:lower} in full.  The
development is about $4{,}500$ lines, contains no \texttt{sorry}, declares
no axioms of its own, and every headline statement passes the kernel's
axiom audit depending only on \texttt{propext},
\texttt{Classical.choice}, and \texttt{Quot.sound}.  This section records
what is checked, in which form, and where the formal statements differ
from the prose.  The list extension of \cref{sec:list} and
\cref{app:list} postdates the formalization and is not covered by it.

\paragraph{Upper bound.}
\Cref{thm:relative-compression} is checked in full generality and in
measure-theoretic form: an arbitrary probability space, product samples
via \texttt{Measure.pi}, losses valued in $[0,1]$ (slightly more general
than $\{0,1\}$), and a possibly non-measurable bad event handled by outer
measure and a measurable cover, so no stability or measurability of the
selection is assumed beyond \cref{asm:measurable}.  The constants are the
ones displayed in \eqref{eq:relative-bound}: the theorem
\texttt{relative\_compression} concludes
$L_{\cD}(A(S))\le L_{\cD}(h)+12\sqrt{L_{\cD}(h)\Gamma}+20\Gamma$ with
$\Gamma$ literally the expression \eqref{eq:gamma}.  The one-sided
Bernoulli--Bernstein inequality (\cref{lem:bernoulli-bernstein}) is proved
from first principles rather than imported, and the selective-menu
calculus of \cref{lem:menu-calculus} is checked in integrated form.  The
assembly of \cref{thm:main} is checked as two theorems:
\texttt{rmc\_master}, the fixed-comparator bound
$L_{\cD}(\RMC)\le L_{\cD}(h^\ast)+12\sqrt{L_{\cD}(h^\ast)\Gamma}
+20\Gamma+a_1+a_2$ on the explicit three-block product sample space, and
\texttt{rmc\_main}, which replaces the fixed comparator by
$\inf_{h\in\cH}L_{\cD}(h)$ through the $\varepsilon\downarrow0$ argument
(an antitone sequence of good events and continuity from above).  The
imported modules of \cref{app:toolkit-statements} enter
\texttt{rmc\_master} as per-block hypotheses in exactly the form stated
there: the correct-region guarantee of \cref{lem:compression-cover} as the
block-1 hypothesis, the conditional menu coverage of \cref{lem:mw-menu} as
the block-2 hypothesis, and the ordered-compression structure of
\cref{lem:fixed-menu-compression,lem:menu-finalizer} as the structural
description of block 3.  What is \emph{not} formalized is the content of
the imported modules themselves and the final bookkeeping from
$(a_1,a_2,\Gamma)$ to the displayed rate \eqref{eq:main-bound}, i.e.\ the
instantiation of \cref{rem:constants}.

\paragraph{Lower bound.}
\Cref{thm:lower} is checked end to end.  Probability is formalized by
finitely supported distributions (weight functions on finite types, with
expectations as finite sums), which is all the construction of
\cref{sec:lower} uses; samples are product laws on the finite atom space,
and learners are arbitrary functions from $n$-samples to classifiers, with
a separate statement for randomized learners averaging over a finite seed
space.  The formal theorem, \texttt{lower\_bound\_main}, produces for
every $n,\dN,\dDS\ge1$, every $\Lstar\in[0,1/8]$, and every learner a
member of the explicit family
\eqref{eq:lb-family} whose excess risk is at least
\[
 \frac1{12}\,\ind\{\Lstar\ge\dN/n\}\sqrt{\frac{\Lstar\dN}{n}}
 +\frac1{15}\min\Bigl\{\frac{\dDS}{n},\frac14\Bigr\},
\]
with the same absolute constants as \eqref{eq:lower-bound}; calibration is
checked in a slightly stronger form than stated, namely
$\Lstar\le\Risk_{\cD_{\sigma,v}}(g)$ for \emph{every} classifier $g$
together with a member of $\cH$ attaining $\Lstar$ exactly.  The four
dimension claims $\dN\le\Nat(\cH)\le\dN+1$ and $\DS(\cH)=\dN+\dDS$ are
checked in both directions (shattered witnesses below, caps above, via
the dimension calculus of \cref{lem:dim-calculus}).  The only import,
\cref{lem:separation}, is isolated as an interface structure carrying
exactly the data used in the proof: a pseudo-cube on $D$ points whose
class has Natarajan dimension at most $1$.  The six-cycle of
\cref{ex:sixcycle} is verified by the kernel's decision procedure and
discharges the interface at $D=2$, so at that parameter the formal
theorem has no unproved input at all.

\paragraph{Four findings.}
The formal proof routes were chosen for economy inside the proof
assistant, and none of the divergences from the prose affects a
statement.  Four of them, however, carry content of their own.

\begin{enumerate}
\item[(1)] \emph{Calibration is Bayes-exact.}  The formal statement
proves $\Lstar\le\Risk_{\cD_{\sigma,v}}(g)$ for \emph{every} classifier
$g$, not only for members of $\cH$, with a member of $\cH$ attaining
$\Lstar$.  Thus $\Lstar$ is the Bayes risk of every member of the family
\eqref{eq:lb-family}, and $\cH$ contains a Bayes-optimal rule for each;
the two lower-bound terms are paid even by a learner told that a Bayes
rule lies in $\cH$, so the bound is in no sense an artifact of comparing
to the class.  The strengthening is free: the risk identity
\eqref{eq:lb-risk-identity} bounds every classifier pointwise
($r_i\ge\tfrac12-\gamma$), and the side-2 error is nonnegative.

\item[(2)] \emph{The posterior in \cref{lem:ds-lower} can be
eliminated.}  Formally, all members of the family are coupled through a
common atom law: an atom sequence is drawn once, and the sample seen by
the learner is a deterministic function of the atom sequence and $v$,
the $x$-part being $v$-independent by construction.  Fixing the atom
sequence and grouping $V$ by the restriction of $v$ to the seen
coordinates, the sample is constant on each group, hence so is the
learner's prediction at an unseen $z_j$; by \cref{lem:fiber} that
prediction agrees with $v_j$ on at most half of each group.  Summing
over groups gives \eqref{eq:lb-ds} as a deterministic double count: the
posterior computation of the prose proof (the conditional law of $v$ is
uniform on the cloud) disappears, and randomness survives only in
$\E|J|=D(1-\nu/D)^n$.

\item[(3)] \emph{The two-point step needs no entropy.}  The testing
step \eqref{eq:lb-kl}--\eqref{eq:lb-tv} is carried out with the
Hellinger affinity $\rho(P,Q)\defeq\sum_z\sqrt{P(z)Q(z)}$ in place of
the KL chain rule and Pinsker's inequality.  For the two swapped-pair
laws, $1-\rho\le4(1-\nu)\tau\gamma^2$ by direct computation; affinity
tensorizes exactly under products, and Cauchy--Schwarz gives
\[
 \TV\le\sqrt{1-\rho^{2n}}
 \le\sqrt{2n(1-\rho)}
 \le\sqrt{8n\tau\gamma^2},
\]
the bound of \eqref{eq:lb-tv} with the same constant.  Combined with
the finite-support Neyman--Pearson bound (for any test,
$\mathrm{err}_P+\mathrm{err}_Q\ge1-\TV(P,Q)$, one line), the chain
\eqref{eq:lb-kl}--\eqref{eq:lb-test} uses no logarithms and no imported
inequalities, so the probabilistic core of \cref{app:lower} is
self-contained in the literal sense.

\item[(4)] \emph{Counting replaces connectivity.}  The dimension caps
for the cube class follow from cardinality alone: shattering $k$ points
forces $2^k\le|\cH_{\mathrm{cube}}|=2^{\dN}$, for DS via the
pseudo-cube cardinality bound $2^k\le|V|$.  Together with the explicit
shattered witnesses this recovers
$\Nat(\cH_{\mathrm{cube}})=\DS(\cH_{\mathrm{cube}})=\dN$ and all four
dimension claims of \cref{thm:lower}.  The connectivity observation in
the prose proof, that the only pseudo-cube inside a sign cube is the
full cube, is a stronger structural statement than the argument needs,
and it is the one step of the appendix the formalization simply never
had to prove.
\end{enumerate}

\paragraph{Remaining deviations.}
The two small-$\Lstar$ calibration cases of \eqref{eq:lb-exact} are
merged into a single choice that spreads the mass $2\bar L$ uniformly
over the $\dN$ pair points with fair labels; the side-1 infimum is
again exactly $\bar L$, and the case $\Lstar=0$ is the degenerate
instance.  Randomized learners are modeled by a finite seed space.  The
numeric facts of \eqref{eq:lb-side2} and \eqref{eq:lb-side1}
($1-t\ge e^{-2t}$ on $[0,\tfrac12]$, $e^{2}\le7.5$, and the
$\sqrt{3/4}$ step, formalized as the rational certificate
$25/36\le1-\nu$) are checked as stated.

\paragraph{Audit summary.}
\Cref{tab:lean-map} lists the correspondence between the prose and the
formal development.  Everything named there is proved; the imported
modules of \cref{app:toolkit-statements} enter the assembly as
hypotheses in their stated forms, and \cref{lem:separation} is an
interface discharged at $D=2$ by \cref{ex:sixcycle}.  Every listed
declaration depends on no axioms beyond \texttt{propext},
\texttt{Classical.choice}, and \texttt{Quot.sound}, and the repository
builds with zero \texttt{sorry}.

\begin{table}[t]
  \centering
  \caption{Paper-to-Lean correspondence.  File names are relative to
  the repository root; the lower-bound files live under
  \texttt{Lower/}.}
  \label{tab:lean-map}
  \footnotesize
  \begin{tabular}{@{}>{\raggedright\arraybackslash}p{0.34\textwidth}>{\raggedright\arraybackslash}p{0.58\textwidth}@{}}
    \toprule
    Paper & Lean declaration (file) \\
    \midrule
    \cref{thm:relative-compression}; $\Gamma$ of \eqref{eq:gamma}
      & \texttt{relative\_compression}, \texttt{Gamma} (\texttt{Main.lean}) \\
    \cref{lem:bernoulli-bernstein}
      & \texttt{bernstein\_upper}, \texttt{bernstein\_lower}
        (\texttt{Bernstein.lean}) \\
    \cref{lem:menu-calculus}
      & \texttt{integral\_risk\_algebra},
        \texttt{integral\_alpha\_transfer},
        \texttt{integral\_menuLoss\_le} (\texttt{Menu.lean}) \\
    \cref{thm:main}, fixed comparator; infimum form
      & \texttt{rmc\_master}; \texttt{rmc\_main}
        (\texttt{Assembly.lean}) \\
    \midrule
    \cref{thm:lower}, deterministic; randomized
      & \texttt{lower\_bound\_main};
        \texttt{lower\_bound\_randomized} (\texttt{Main.lean}) \\
    family-summed form \eqref{eq:lb-split}
      & \texttt{lower\_bound\_core}, \texttt{lower\_bound\_avg}
        (\texttt{Main.lean}) \\
    \cref{lem:assouad-lower}
      & \texttt{assouad\_bound}; \texttt{risk\_ge},
        \texttt{risk\_gStar} (\texttt{Engines.lean},
        \texttt{Master.lean}) \\
    \eqref{eq:lb-kl}--\eqref{eq:lb-test}
      & \texttt{tv\_flip\_le}, \texttt{np\_pair}; affinity toolkit
        (\texttt{Engines.lean}, \texttt{Dist.lean}) \\
    \cref{lem:ds-lower}
      & \texttt{loo\_bound}; \texttt{cloud\_bound},
        \texttt{unseen\_prob} (\texttt{Engines.lean}) \\
    \cref{lem:fiber}; \cref{lem:dim-calculus}
      & \texttt{fiber\_lemma}; \texttt{natShatters\_sum\_le},
        \texttt{dsShatters\_sum\_le} (\texttt{Comb.lean}) \\
    dimension claims of \cref{thm:lower}
      & \texttt{bigClass\_natShatters}, \texttt{bigClass\_nat\_le},
        \texttt{bigClass\_dsShatters}, \texttt{bigClass\_ds\_le}
        (\texttt{Main.lean}) \\
    \cref{lem:separation}; \cref{ex:sixcycle}
      & \texttt{SeparationFamily}; \texttt{sixCycle}
        (\texttt{Main.lean}) \\
    \eqref{eq:lb-exact} parameters and numerics
      & \texttt{regimeA\_params}, \texttt{regimeB\_params},
        \texttt{second\_term\_bound} (\texttt{Main.lean}) \\
    axiom audit
      & \texttt{Axioms.lean} \\
    \bottomrule
  \end{tabular}
\end{table}

\paragraph{Library status.}
The development sits on Mathlib, but its learning-theoretic layer is
built from scratch: at the time of writing, Mathlib contains no
Bernstein-type inequality (the one-sided Bernoulli bounds behind
\cref{lem:bernoulli-bernstein} are self-contained and could be
contributed), no total-variation or Hellinger-affinity toolkit for
finitely supported distributions, and no shattering theory in any of
its guises (VC, Natarajan, or DS).  We are not aware of a prior
machine-checked minimax lower bound for a learning problem of this
kind, and the atom-space coupling and interface patterns described
above are offered as reusable templates for formalizing such
arguments.

\end{document}